\documentclass[10pt]{article} %
\usepackage[accepted]{tmlr}

\usepackage{amsmath,amsfonts,bm}

\def\eqref#1{Eqn.~(\ref{#1})}

\def\1{\bm{1}}

\DeclareMathAlphabet{\mathsfit}{\encodingdefault}{\sfdefault}{m}{sl}
\SetMathAlphabet{\mathsfit}{bold}{\encodingdefault}{\sfdefault}{bx}{n}

\newcommand{\R}{\mathbb{R}}

\newcommand{\KL}{D_{\mathrm{KL}}}
\newcommand{\Var}{\mathrm{Var}}

\newcommand{\sci}[1]{\num[round-mode=places, round-precision=2, scientific-notation=true]{#1}}
\newcommand{\formatpercent}[1]{%
  \pgfmathparse{#1*100}%
  \num[round-mode=places, round-precision=2]{\pgfmathresult}}

\newcommand{\roundtofour}[1]{\num[round-mode=places, round-precision=4, scientific-notation=false]{#1}}

\newcommand\shortellipsis{\makebox[0.8em][c]{.\hfil.\hfil.}}

\usepackage{amsthm}
\theoremstyle{plain}
\newtheorem{theorem}{Theorem}[section]
\newtheorem{proposition}[theorem]{Proposition}

\theoremstyle{definition}
\newtheorem{definition}[theorem]{Definition}

\theoremstyle{remark}

\newcommand{\ie}{\textit{i.e.,\ }}

\newcommand{\method}{ANFM\xspace}

\usepackage{hyperref}
\usepackage{url}

\usepackage{booktabs}
\usepackage{siunitx}
\usepackage[inline]{enumitem}
\usepackage{amsthm}
\usepackage{multirow}
\usepackage{graphicx}
\usepackage{subcaption}
\usepackage{amssymb}
\usepackage{bbm}
\usepackage{pgf}
\usepackage{algorithm}
\usepackage{algpseudocode}
\usepackage{enumitem}
\usepackage{nicefrac}
\usepackage{placeins}
\usepackage{url}
\usepackage{xspace}
\usepackage[normalem]{ulem}
\usepackage{cancel}
\usepackage[toc,page,header]{appendix}
\usepackage{minitoc}
\usepackage[capitalise]{cleveref}
\usepackage[dvipsnames]{xcolor}

\sisetup{
text-series-to-math = true ,
propagate-math-font = true
}

\newcolumntype{x}[1]{>{\centering\arraybackslash\hspace{0pt}}p{#1}}

\title{Fast Graph Generation via Autoregressive \\Noisy Filtration Modeling}

\author{\name Markus Krimmel \email krimmel@biochem.mpg.de \\
      \addr Max Planck Institute of Biochemistry
      \AND
      \name Jenna Wiens \email wiensj@umich.edu \\
      \addr University of Michigan
      \AND
      \name Karsten Borgwardt \email borgwardt@biochem.mpg.de\\
      \addr Max Planck Institute of Biochemistry
      \AND
      \name Dexiong Chen \email dchen@biochem.mpg.de\\
      \addr Max Planck Institute of Biochemistry
      }

\begin{document}

\maketitle

\begin{abstract}
Existing graph generative models often face a critical trade-off between sample quality and generation speed. We introduce Autoregressive Noisy Filtration Modeling (\method), a flexible autoregressive framework that addresses both challenges.
\method leverages filtration, a concept from topological data analysis, to transform graphs into short sequences of subgraphs. 
We identify exposure bias as a potential hurdle in autoregressive graph generation and propose noise augmentation and reinforcement learning as effective mitigation strategies, which allow \method to learn both edge addition and deletion operations. This unique capability enables \method to correct errors during generation by modeling non-monotonic graph sequences.
Our results show that \method matches state-of-the-art diffusion models in quality while offering over 100 times faster inference, making it a promising approach for high-throughput graph generation.
The source code is publicly available at \url{https://github.com/BorgwardtLab/anfm}.
\end{abstract}

\section{Introduction}

Graphs are fundamental structures that model relational data in various domains, from social networks and molecular structures to transportation systems and neural architectures. The ability to generate realistic and diverse graphs therefore holds great promise in many applications, such as drug discovery~\citep{liu2018moleculevae,vignac2023digress}, network simulation~\citep{yu2019traffic}, and protein design~\citep{ingraham2019proteingraphgen}. 
The space of drug-like molecules and protein conformations is, for practical purposes, infinite, limiting the effectiveness of in-silico screening of existing libraries~\citep{polishchuk2013chemicalspace,Levinthal1969HowTF}.
Consequently, high-throughput graph generation---the task of efficiently creating new graphs that faithfully emulate properties similar to those observed in a given domain---is thus emerging as a critical challenge in machine learning and generative artificial intelligence.%

Recent deep learning-based approaches, particularly autoregressive~\citep{you2018graphrnn,lia2019gran,kong2023grapharm} and diffusion models~\citep{vignac2023digress,bergmeister2024efficientscalable}, have shown promise in generating increasingly realistic graphs.
However, many current diffusion-based approaches rely on iterative refinement processes involving a large number of steps. This computational burden may hinder their potential for high-throughput  applications~\citep{Gentile2022virtualscreening,gomesbobarelli2016ledscreening}. While autoregressive models are more efficient during inference, they have underperformed in terms of generation quality. Moreover, they might be susceptible to exposure bias~\citep{ranzato2016sequencelevel}, where performance deteriorates as errors accumulate during sampling and a train-test discrepancy consequently arises.

Recent work has explored the use of topological data analysis, particularly persistent homology and filtration~\citep{edelsbrunner2002filtration, zomordian2005persistenthom}, for graph representation. A filtration provides a multi-scale view of a graph structure by constructing a nested sequence of subgraphs.
This approach has shown potential in various graph analysis tasks, including classification and similarity measurement~\citep{obray2021filtrationcurves,schulz2022filtrationkernel}. In the context of generative modeling, filtration-based representations have been used to develop more expressive tools for generative model evaluation~\citep{southern2023ggevaluation}. However, the application of filtration-based methods for graph generation remains unexplored. In this work, we propose filtrations as a generalization of graph sequence families used in prior autoregressive models~\citep{you2018graphrnn, lia2019gran}, offering a flexible framework to construct sequences for generation. Nonetheless, modeling filtration sequences in a naive manner remains prone to exposure bias.

To address this, we introduce Autoregressive Noisy Filtration Modeling (\method), a novel approach to fast graph generation that models noise-augmented filtration sequences autoregressively. 
To generate a target graph, our method produces a short sequence of increasingly dense and detailed intermediate graphs, which interpolate the target graph and the fully disconnected graph. Compared to diffusion models~\citep{vignac2023digress,bergmeister2024efficientscalable}, \method requires fewer iterations during sampling, resulting in significantly faster inference speed. 
By adding noise to the filtration sequences, \method learns to simultaneously remove or add edges. As a result, the model can recover from errors during sampling. Additionally, we further mitigate exposure bias with adversarial fine-tuning using reinforcement learning (RL). Our method offers a promising balance between efficiency and accuracy in graph generation, providing a 100-fold speedup over diffusion-based approaches, while substantially outperforming existing autoregressive models in terms of generation quality.%

In summary, our contributions are as follows:
\begin{itemize}[noitemsep,topsep=0pt,parsep=0pt,partopsep=0pt]
    \item We propose a novel autoregressive graph generation framework that leverages graph filtration. Our formulation generalizes the graph sequences used by previous autoregressive models that operate via node addition.
    \item We introduce a specialized autoregressive model architecture designed to operate on these graph sequences.
    \item We identify exposure bias as a potential challenge in autoregressive graph generation and propose noise augmentation and adversarial fine-tuning as effective strategies to mitigate this issue.
    \item We conduct ablation studies to evaluate the impact of different components within our framework, %
    demonstrating that noise augmentation and adversarial fine-tuning substantially improve performance.  %
    \item Our empirical results highlight the strong performance and efficiency of our model compared to recent baselines. Notably, our model achieves inference speed 100 times faster than existing diffusion-based models.
\end{itemize}

\section{Related Work}
\label{sec:related-work}
\method builds on the concept of graph filtration and incorporates noise augmentation. It is fine-tuned via reinforcement learning to mitigate exposure bias. In the following, we provide a brief overview of related graph generative models (GGMs), approaches to address exposure bias, and applications of graph filtration.

\paragraph{Autoregressive GGMs.} GraphRNN~\citep{you2018graphrnn} made the first advances towards deep generative graph models by autoregressively generating nodes and their incident edges to build up an adjacency matrix row-by-row. In a similar fashion, DeepGMG~\citep{li2018deepgmg} iteratively builds a graph in a node-by-node fashion. \citet{lia2019gran} proposed a more efficient autoregressive model, GRAN, by generating multiple nodes at a time in a block-wise fashion, leveraging mixtures of multivariate Bernoulli distributions. 
GraphArm~\citep{kong2023grapharm} introduced an autoregressive model that reverses a diffusion process in which nodes and their incident edges decay to an absorbing state.
These models share the property that they build graphs by node addition. Hence, they autoregressively generate an increasing sequence of \emph{induced subgraphs} (i.e., maximal subgraphs on a subset of the nodes). In comparison, the subgraphs we consider in our work do not necessarily need to be induced. Moreover, \method may generate sequences that are not monotonic. 
That is, ANFM may simultaneously add and \emph{delete} edges.
In contrast to autoregressive node-addition methods, approaches by \citet{goyal2020graphgen} and~\citet{bacciu2020edgebased} generate graphs through edge-addition following a pre-defined edge ordering. 

\paragraph{Diffusion GGMs.} Diffusion models for graphs such as EDP-GNN~\citep{niu2020edpgnn} and GDSS~\citep{jo2022gdss}, based on score matching, or DiGress~\citep{vignac2023digress}, based on discrete denoising diffusion~\citep{austin2021d3pm}, have emerged as powerful generators. However, they require many iterative denoising steps, making them slow during sampling. Hierarchical approaches~\citep{bergmeister2024efficientscalable} and absorbing state processes~\citep{chen2023edge} have subsequently been proposed to allow diffusion models to be scaled to large graphs. In contrast to the noise processes in denoising diffusion models, the filtration processes we consider are in general \emph{non-Markovian}, necessitating a full autoregressive modeling.

In concurrent work,~\citet{boget2025cid} proposed SID, a modification of discrete denoising graph diffusion in which intermediate states are conditionally independent. 
This work was motivated by a phenomenon similar to exposure bias, termed \emph{compounding denoising errors}.
Similar to~\method, the sequences modeled by an absorbing state variant of SID are non-montonic and non-Markovian. 

\paragraph{Single-Step GGMs.} Unlike iterative approaches such as autoregression and diffusion, graph variational autoencoders (VAEs)~\citep{kipf2016graphvae,simonovsky2018graphvae} generate all edges in a single step, reducing computational costs during inference. However, VAEs struggle to model complicated distributions and require graph matching during training, restricting their application to small graphs.
Generative adversarial networks (GANs)~\citep{goodfellow2014gans} offer an alternative, operating in a likelihood-free fashion and avoiding graph matchings. Despite this advantage, GANs are notoriously difficult to train and suffer from issues such as mode collapse~\citep{cao2018molgan}.
To address these instabilities, \citet{martinkus2022spectre} proposed SPECTRE, a GAN that first generates a Laplacian spectrum before producing the corresponding graph. 

\paragraph{RL in Graph Generation.} In the context of graph generation, reinforcement learning has been used mainly to generate molecular graphs. ~\citet{you2018gcpn} trained a generative model for molecules via RL, combining adversarial and domain-specific rewards. In contrast to our work, they only considered molecular graphs and did not use teacher-forcing during training. Taiga~\citep{eyal2023moleculerl} used reinforcement learning to optimize the chemical properties of molecules obtained from a language model pre-trained on SMILES strings. Diffusion models have also been shown to be amenable to RL finetuning, allowing extrinsic non-differentiable metrics to be optimized~\citep{liu2024graphdiffusionpolicy}.

\paragraph{Exposure Bias.} Exposure bias~\citep{bengio2015exposure_bias,ranzato2016sequencelevel} refers to the train-test discrepancies autoregressive models face when they are exposed to their own predictions during inference. Errors may accumulate during sampling, leading to a distribution shift and degrading performance. To mitigate this phenomenon in natural language generation, \citet{bengio2015exposure_bias} proposed a data augmentation strategy to expose models to their predictions during training. In a similar effort, \citet{ranzato2016sequencelevel} proposed training in free-running mode using reinforcement learning to optimize sequence-level performance metrics. SeqGAN~\citep{yu2017seqgan}, which is the most relevant to our work, also trains models in free-running mode using reinforcement learning. Instead of relying on extrinsic metrics like~\citet{ranzato2016sequencelevel}, it adversarially trains a discriminator to provide feedback to the generative model. GCPN~\citep{you2018gcpn} adopts a hybrid framework for generating small molecules, combining adversarial and domain-specific rewards.

\paragraph{Graph Filtration.} %
Filtration is commonly used in the field of persistent homology~\citep{edelsbrunner2002filtration} to extract features of geometric data structures at different resolutions. Previously, graph filtration has mostly been used to construct graph kernels~\citep{zhao2019persistenceclassification,schulz2022filtrationkernel} or extract graph representations that can be leveraged in downstream tasks such as classification~\citep{obray2021filtrationcurves}. While filtration has also been used for evaluating graph generative models~\citep{southern2023ggevaluation}, \emph{to the best of our knowledge, our work presents the first model that directly leverages filtration for generation.}

\section{Background}
In the following, we consider unlabeled and undirected graphs, denoted by $G=(V, E)$, where $V$ is the set of vertices and $E \subseteq V\times V$ is the set of edges. Without loss of generality, we assume $V=\{1,2,\dots,n\}$ and denote by $e_{ij}$ the edge between nodes $i,j\in V$. We assume that only connected graphs are presented to our model during training and filter training sets if necessary. Our approach is %
based on the concept of graph filtration.

\paragraph{Graph Filtration.} A filtration of a graph $G$ is defined as a nested sequence of subgraphs:
\begin{equation}\label{eq:filtration}
    G = G_T \supseteq G_{T-1} \supseteq \dots \supseteq G_{1} \supseteq G_0 = (V, \emptyset)
\end{equation}
where each $G_t = (V, E_t)$ is a graph sharing the same node set as $G_T:=G$. The filtration satisfies the following properties: (1) $E_t \subseteq E_{t'}$ for all $t < t'$ and (2) $G_0$ is the fully disconnected graph, \ie $E_0 = \emptyset$. The hyper-parameter $T$ controls the length of the sequences, which is selected to be typically small in our experiments ($T\leq 32$).

\paragraph{Filtration Function and Schedule.} A convenient method to define a filtration of $G$ involves specifying two key components~\citep{obray2021filtrationcurves}: a \emph{filtration function} defined on the edge set $f: E \to \R$ and a non-decreasing sequence of scalars $(a_0, a_1,\dots,a_T)$ with $-\infty=a_0 \leq a_1 \leq \dots \leq a_{T-1} \leq a_{T}=+\infty$. Given these components, we can define the edge sets $E_t$ as nested sub-levels of the function $f$ for $t=1,\dots,T-1.$:
\begin{equation}
    E_t := f^{-1}((-\infty, a_t]) = \{e \in E \::\: f(e) \leq a_t\}   %
\end{equation}
The sequence $(a_t)_{t=0}^{T}$ is referred to as the \emph{filtration schedule sequence}. The choice of the filtration function and the schedule sequence plays a crucial role in effective graph generation. We present a visual example of the filtration process in Figure~\ref{subgfig:a-filtration-sequence}.

\section{Autoregressive Noisy Filtration Modeling}
In this section, we present the Autoregressive Noisy Filtration Modeling (\method) approach for graph generation. Given a node set $V$, our objective is to generate a sequence of increasingly dense graphs $\tilde{G}_0, \tilde{G}_1, \dots, \tilde{G}_T$ on $V$. The final graph $\tilde{G}_T$ should plausibly represent a sample from the target data distribution. 
To achieve this goal, \method will be trained to reverse a noise augmented filtration process.

This section is organized as follows: We present two filtration strategies in Sec.~\ref{subsec:filtration-strategies}. To mitigate exposure bias, we propose a noise-augmentation of the resulting graph sequences in Sec.~\ref{subsec:noise-augmentation-of-filtrations}. We then introduce in Sec.~\ref{subsec:generative-model} our autoregressive model that reverses this noisy filtration process. In Sec.~\ref{subsec:training-algo}, we propose a two-staged training scheme for \method, introducing an adversarial fine-tuning stage to further address exposure bias. Finally, in Sec.~\ref{subsec:comparison} we discuss algorithmic differences of \method and existing graph generative models.

As our proposed method consists of several components, we study their individual contributions in extensive ablation experiments in Sec.~\ref{subsec:ablations} and \cref{appendix:additional-ablations}. Furthermore, implementation details and hyperparameter settings are provided in \cref{appendix:architecture,appendix:post-hoc-labeling,appendix:adversarial-finetuning,appendix:hyper-parameters-and-advice}.

\subsection{Filtration Strategies}
\label{subsec:filtration-strategies}
In the following, we discuss two primary strategies for the filtration function and schedule. Alternative choices are investigated in Appendix~\ref{appendix:additional-ablations}.

\begin{figure}[t]
    \centering
    \begin{subfigure}[b]{0.42\textwidth}
    \centering
        \raisebox{0.2cm}{\begin{minipage}[b]{0.47\textwidth}
            \centering
            \includegraphics[height=2.4cm,trim={2.75cm 1cm 0.5cm 1cm},clip]{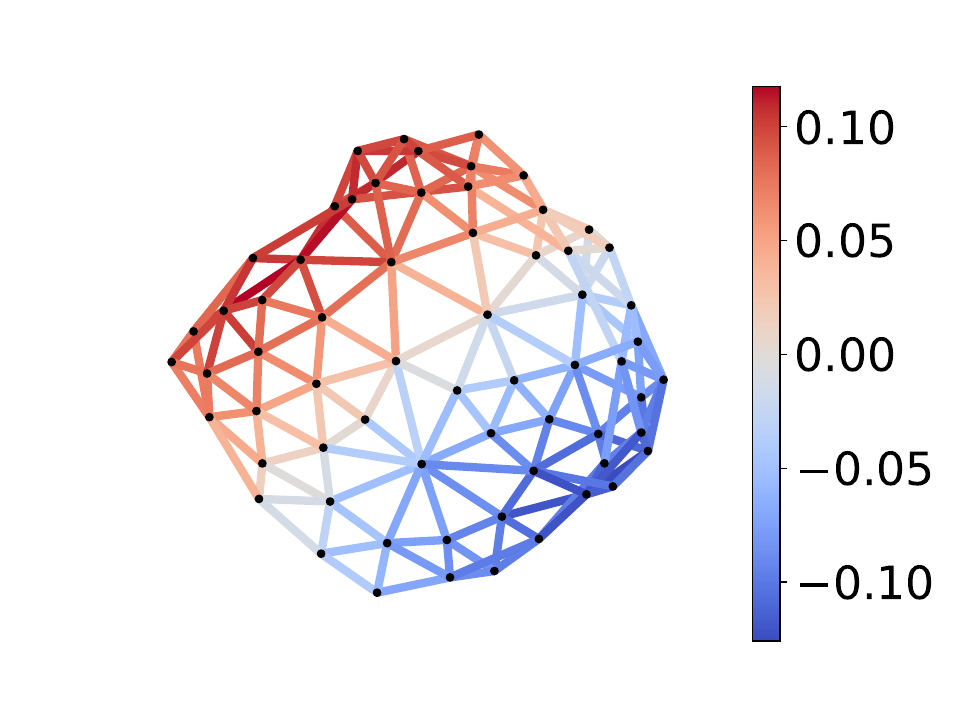}
            \par \hspace{-9mm}\footnotesize{Line Fiedler}
        \end{minipage}
        \hfill
        \begin{minipage}[b]{0.47\textwidth}
            \centering
            \includegraphics[height=2.4cm,trim={2.75cm 1cm 0.5cm 1cm},clip]{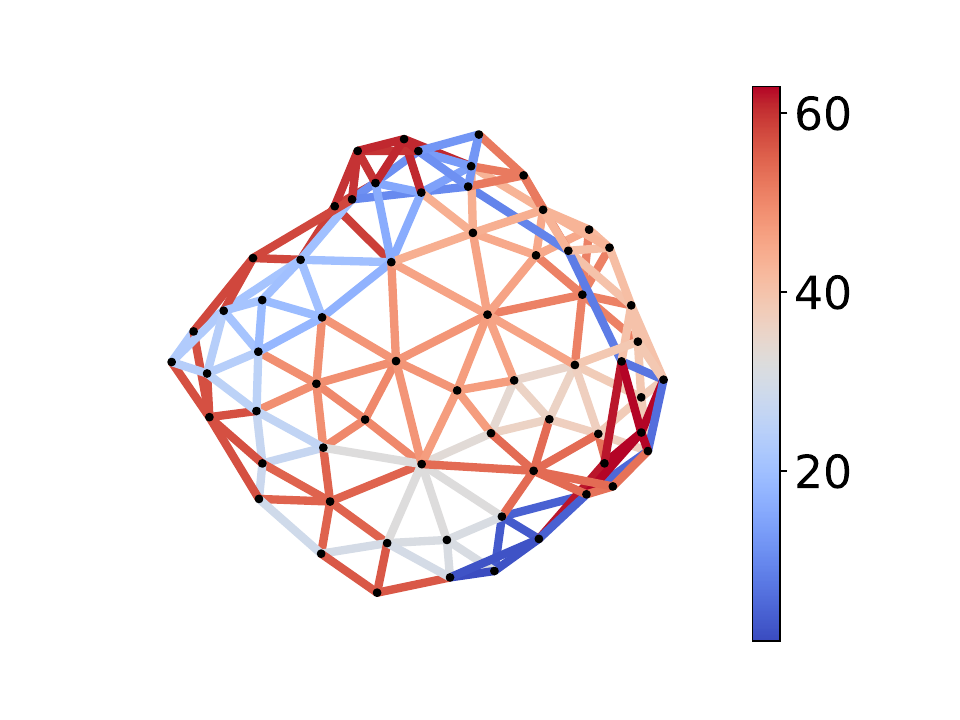}
            \par \hspace{-9mm}\footnotesize{DFS}
        \end{minipage}}
        \subcaption{Filtration functions}
        \label{subfig:filtration_functions_planar}
    \end{subfigure} \hfill 
    \begin{subfigure}[b]{0.45\textwidth}
        \raisebox{0cm}{\includegraphics[width=\textwidth,trim={0.5cm 0.3cm 2cm 0.2cm},clip]{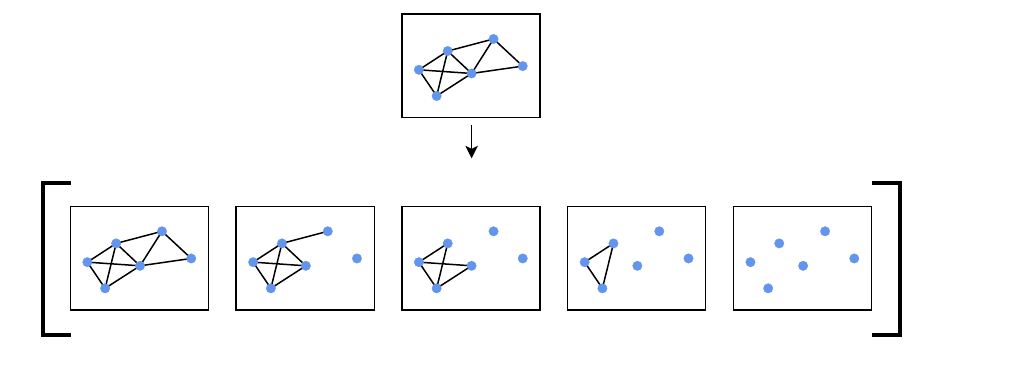}}
        \subcaption{A filtration sequence}
        \label{subgfig:a-filtration-sequence}
    \end{subfigure} \\ \vspace{3.5mm}
    \begin{subfigure}[b]{0.42\textwidth}
        \includegraphics[width=\textwidth,trim={2cm 0.25cm 2cm 0.25cm},clip]{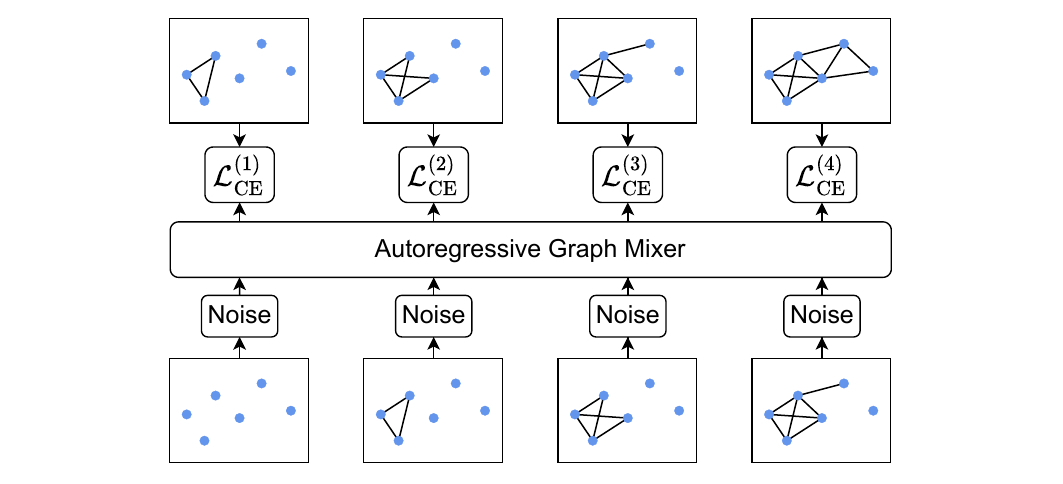}
        \subcaption{Training stage I (teacher-forcing)}
        \label{subfig:teacher-forcing-training}
    \end{subfigure} \hfill
    \begin{subfigure}[b]{0.45\textwidth}
        \includegraphics[width=\textwidth,trim={0.95cm 0.35cm 1cm 0.25cm},clip]{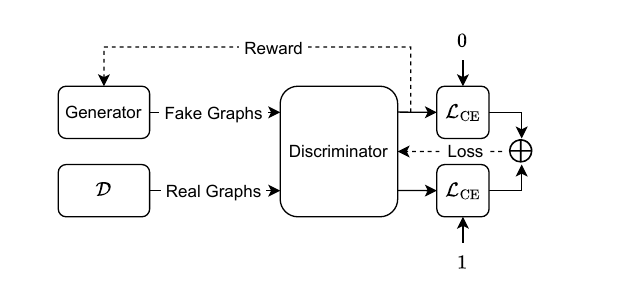}
        \subcaption{Training stage II (adversarial fine-tuning)}
        \label{subfig:adversarial-fine-tuning}
    \end{subfigure}
    \caption{Top: A graph is transformed into a sequence of subgraphs (filtration) by taking sub-levels of a filtration function (i.e., performing edge-deletion according to the order prescribed by the filtration function).
    Bottom left: the generator is trained via teacher-forcing to reverse the filtration process. Bottom right: the generator is fine-tuned in free-running mode via reinforcement learning on a reward signal output by a discriminator in a SeqGAN-like framework (c.f. Appendix~\ref{appendix:adversarial-finetuning}).}
    \label{fig:full-pipeline}
\end{figure}

\paragraph{Filtrations from Node Orderings.} 
Many existing autoregressive models operate via node addition and thereby model sequences of nested induced subgraphs~\citep{you2018graphrnn,li2018deepgmg,lia2019gran,kong2023grapharm}. We show that similar sequences may be obtained via filtration. In this sense, the filtration framework generalizes the sequences considered by these previous works. Given a graph $G$, let $g: V \to \{1, \dots, n\}$ be a node ordering, \ie a bijection. Models that operate via node addition generate a graph sequence
\begin{equation}
    (\emptyset, \emptyset) = G[V_0] \subseteq \dots \subseteq G[V_{T}] = G,
\end{equation}
where $V_0, \dots, V_T$ are monotonically increasing sub-levels of $g$ with $V_t = g^{-1}\left([(-\infty, a_t]\right)$ for some scalars $-\infty = a_0 \leq \dots \leq a_T = +\infty$, and $G[V_t]$ denotes the induced subgraph whose node set is $V_t$.
Now we consider the following filtration function $f: E \to \R$:
\begin{equation}
    f(\{u, v\}) := \max \{g(u), g(v)\} \qquad \forall \{u, v\} \in E.
\end{equation}
It is not hard to verify that this filtration function combined with the filtration schedule $\left(a_t\right)_{t=0}^T$, yields a filtration sequence $(G_t)_{t=0}^T$ in which the edge set of $G_t$ coincides with the edge set of $G[V_t]$ for any $t=0,\dots, T$. Hence, this filtration closely mirrors the sequence of induced subgraphs, differing only in that the node set does not change over time. 
We say that graphs in this sequence have \emph{induced edge sets}. While a filtration function $f$ may be derived from any node ordering, we focus on depth-first search (DFS) orderings in this work, as shown to be optimal among several orderings for autoregressive graph generation in GRAN~\citep{lia2019gran}. 
We visualize resulting edge weights on a planar graph in Figure~\ref{subfig:filtration_functions_planar}.
For filtrations derived from DFS orderings, we choose the filtration schedule $a_t$ to linearly increase from a minimum edge weight at $t=1$ ($a_1=2$) to a maximum edge weight at $t=T$ ($a_T=|V|$). Unlike in previous autoregressive models~\citep{you2018graphrnn, lia2019gran, kong2023grapharm}, $T$ is fixed across all graphs. We argue that a fixed $T$ simplifies both training and inference: since sequence lengths are uniform across all graphs, no padding must be applied during training. Additionally, no end-of-sequence signal must be emitted during inference.

\paragraph{Line Fiedler Filtrations.}
In contrast to the family of graph sequences leveraged by node-addition approaches, the filtration framework is not limited to sequences of induced subgraphs. We present the \emph{line Fiedler} filtration function, which is one particular choice that, in general, yields a sequence of subgraphs with non-induced edge sets. This filtration function is intended to decompose graphs in a local fashion, assigning similar edge weights to neighboring edges. We are motivated by the intuition that it may potentially be beneficial to construct graphs by adding edges that are incident to the same vertex at similar timesteps in the generation process. To achieve this goal, we leverage the first non-trivial eigenvectors of graph Laplacians, called \emph{Fiedler vectors}. It is known that these eigenvectors represent low-frequency signals on vertices suitable for clustering~\citep{shi1997cuts} and often coloring them in a ``continuous gradient''. Since we are interested in coloring the \emph{edges} of a given graph $G$, we transition to its line graph~\citep{harary1960linegraph} $L(G)$. Edges in $G$ form vertices in $L(G)$, and they are adjacent in $L(G)$ if they are incident to a shared vertex in $G$. The Fiedler vector of $L(G)$ now tends to color its vertices in a continuous gradient, which transfers to a continuous coloring of the edges of $G$. We choose the line Fiedler filtration function as such a Laplacian eigenvector of $L(G)$ and refer to Appendix~\ref{appendix:line-fiedler-definition} for a more rigorous definition.
We visualize the resulting edge weights on a planar graph in Figure~\ref{subfig:filtration_functions_planar}, demonstrating how it varies across edges in a ``continuous gradient''.
For $0 \leq t \leq T$, we propose to define the filtration schedule $a_t$ as:
\begin{equation}
        a_t := \inf \left\{a \in \R \::\: \frac{|f^{-1}((-\infty, a_t])|}{|E|} \geq  \gamma(t / T) \right\},  %
\end{equation}
where $f$ is the line Fiedler filtration function, and $\gamma: [0, 1] \to [0, 1]$ is a monotonic function governing the rate at which edges are added in the filtration sequence. We choose $\gamma(t) := t$, leading to an approximately linear increase in the number of edges throughout the graph sequence. We investigate other choices of $\gamma$ in Appendix~\ref{appendix:additional-ablations}.

\subsection{Noise Augmentation of Filtrations}
\label{subsec:noise-augmentation-of-filtrations}
Our goal is to autoregressively generate sequences of graphs that approximately reverse the filtration processes above. To mitigate exposure bias in autoregressive modeling, previous works have proposed data augmentation schemes to make models more robust to the distribution shift occurring during inference~\citep{bengio2015exposure_bias}.
We propose a simpler yet effective strategy: namely, randomly perturbing intermediate graphs in a filtration sequence $G_0, \dots, G_T$ during the first training stage to expose the model to erroneous transitions. For each intermediate graph $G_t$ with $0 < t < T$, we generate a perturbed graph $\tilde{G}_t$ with edge set $\tilde{E}_t$ by including each possible edge $e$ independently with probability 
\begin{equation}
    \mathbb{P}[e \in \tilde E_t] := \left\{\begin{matrix}\begin{aligned}(1 - \lambda_t) + \lambda_t \rho_t& \quad \mathrm{if}\quad e \in E_t \\ \lambda_t \rho_t&\quad \mathrm{else}\end{aligned}\end{matrix}\right\},
\end{equation}
where $\lambda_t\in [0,1]$ controls stochasticity and $\rho_t:=\nicefrac{|E_t|}{\binom{|V|}{2}}$ is the density of $G_t$. In practice, we decrease $\lambda_t$ linearly as $t$ increases and include multiple perturbations of each filtration sequence in the training set. The choice of hyper-parameters, such as $\lambda_t$ and the number of included perturbations, is detailed in Appendix~\ref{appendix:hyper-parameters}. We note that this augmentation yields non-monotonic noisy filtration sequences $(\tilde{G_t})_{t=0}^T$ during training. Hence, our autoregressive model is trained to allow for edge deletions.

\subsection{Autoregressive Modeling of Graph Sequences} 
\label{subsec:generative-model}
The generative model will be trained to reverse the noise-augmented filtration process detailed above.
We formulate the generative process using an autoregressive model, expressing the joint likelihood as follows:
\begin{equation}
    p_\theta(\tilde{G}_T, \dots, \tilde{G}_0) = p(\tilde{G}_0)\prod_{t=1}^{T}p_\theta(\tilde{G}_{t} | \tilde{G}_{t-1}, \dots, \tilde{G}_0),
\end{equation}
where $p(\tilde{G}_0)$ represents the distribution over initial graphs, defined as a point mass on the fully disconnected graph $(V,\emptyset)$.
In the following, we will detail our implementation of the autoregressive model $p_\theta$, including the architecture and training procedure.
While existing autoregressive models typically utilize RNNs~\citep{you2018graphrnn,lia2019gran,goyal2020graphgen,bacciu2020edgebased} or a first-order autoregressive structure~\citep{kong2023grapharm}, our model architecture for implementing $p_\theta$ is a novel and efficient design inspired by MLP-Mixers~\citep{tolstikhin2021mlpmixer}. 

\paragraph{Backbone Architecture.}The graph sequences we consider can be viewed as dynamic graphs with constant node sets but evolving edge sets.
Our backbone architecture operates on this structure by alternating between two types of information processing layers. The first type, called structural mixing, consists of a GNN that processes graph structures $\smash{\tilde{G}_0,\shortellipsis, \tilde{G}_{T-1}}$ independently, with weights shared across time steps. The second type, called temporal mixing, consists of a transformer decoder (TRDecoder) that processes node representations along the temporal axis, with weights shared across nodes. Our model inherits the causal structure of the transformer decoder, ensuring that node representations at timestep $t$ only depend on the graphs $\smash{\tilde{G}_0, \shortellipsis, \tilde{G}_t}$.
Formally, given input node representations $\smash{v_{i}^{(t)} \in \R^D}$ for nodes $i\in V$ and time steps $t\in[T-1]$, a single mixing operation in our backbone model produces new representations $\smash{u_{i}^{(t)}}$ and is defined as:
\begin{equation}
\label{eq:mixing-operations}
\begin{aligned}
     \left(w_i^{(t)}\right)_{i=1}^{|V|} &:= \operatorname{GNN}_\theta\left(\left(v_i^{(t)}\right)_{i=1}^{|V|},\, \tilde{E}_t,\, t\right) \qquad&&\forall \:t=0,\shortellipsis,T - 1, \\
    \left(u_i^{(t)}\right)_{t=0}^{T - 1} &:= \operatorname{TRDecoder}_\theta\left(\left(w_i^{(t)}\right)_{t=0}^{T - 1}\right) \qquad&&\forall \:i=1,\shortellipsis,|V|,
\end{aligned}
\end{equation}
where the first equation defines a structural mixing operation and the second equation defines a temporal mixing operation.
For the structural mixing, we use Structure-Aware-Transformer layers~\citep{chen2022sat}. Additionally, we incorporate both the timestep $t$ and cycle counts in $\tilde{G}_t$ using FiLM~\citep{perez2018film}. These structural features were used previously in other graph generative models such as DiGress~\citep{vignac2023digress}. Multiple mixing operations are stacked to form the backbone model. 

\paragraph{Edge Decoder.} 
To model $p_\theta(\tilde{G}_t|\tilde{G}_{t-1},\dots,\tilde{G}_0)$, we produce a distribution over possible edge sets of $\tilde{G}_t$. We use a mixture of multivariate Bernoulli distributions to capture dependencies between edges, similar to previous works~\citep{lia2019gran,kong2023grapharm}.
Mixture distributions are more expressive than simple multivariate Bernoulli distributions, which model the existence of edges independently in each generation step.
Given $K \geq 1$ mixture components, we infer $K$ Bernoulli parameters for each node pair $i, j \in V$ from the node representations $v_i$ produced by the backbone model at timestep $t-1$: 
\begin{equation}
    p_k^{(i, j)} := D_{k, \theta}(v_i, v_j) \in [0, 1],%
\end{equation}
where $k=1,\shortellipsis, K$ and $D_{\cdot, \theta}$ is some neural network. We enforce that $\smash{p_k^{(i, j)}}$ is symmetric and that the probability of self-loops is zero. In addition, we produce a mixture distribution $\pi \in \R^K$ in the $K-1$ dimensional probability simplex from pooled node representations. 
The architectural details of $D_{\cdot, \theta}$ are provided in Appendix~\ref{appendix:edge-decoder}. The final likelihood is defined as:
\begin{equation}
    p_\theta(\tilde{E}_t | \tilde{G}_{t-1},\shortellipsis,\tilde{G}_0):=\sum_{k=1}^K \pi_k \prod_{i < j}\left\{\begin{matrix}\begin{aligned}
p_k^{(i, j)}& \; \mathrm{if} \; e_{ij} \in \tilde{E}_t \\
1 - p_k^{(i, j)}& \; \mathrm{else}
\end{aligned}
\end{matrix}
\right\}
\label{eq:edge-likelihood}
\end{equation}
In contrast to existing autoregressive graph generators~\citep{you2018graphrnn,lia2019gran,goyal2020graphgen,bacciu2020edgebased,kong2023grapharm}, \emph{our model introduces a key innovation: the ability to generate non-monotonic graph sequences.} This means it can both add and delete edges. 
We argue that this capability is crucial for mitigating error accumulation during sampling (i.e., exposure bias). 
Consider, for instance, the task of generating tree structures. If a cycle is inadvertently introduced into an intermediate graph $\tilde{G}_t$ (where $t < T$), traditional autoregressive approaches would be unable to rectify this error. Our model, however, can potentially delete the appropriate edges in subsequent timesteps, thus recovering from such mistakes.
The noise augmentation approach from Sec.~\ref{subsec:noise-augmentation-of-filtrations} exposes \method to such erroneous transitions during training. We show empirically in Sec.~\ref{subsec:ablations} that this augmentation substantially improves performance.

\paragraph{Input Node Representations.} The initialization of node representations is a crucial step preceding the forward pass through the mixer architecture above. We compute initial node representations from positional and structural features in a similar fashion as~\citet{vignac2023digress}. Moreover, we add learned positional embeddings based on a node ordering derived from the filtration function. We refer to \cref{appendix:node-individualization} for further implementational details and to \cref{appendix:additional-ablations} for ablations of the node ordering.

\paragraph{Asymptotic Complexity.} We provide a detailed analysis of the asymptotic runtime complexity of our method in Appendix~\ref{appendix:complexity-analysis}. Asymptotically, \method's complexity of sampling a graph with $N$ nodes is $\mathcal{O}(T^2N + TN^3)$, where we recall that $T$ denotes the number of filtration steps. Although cubic in the number of nodes, we found that the efficiency of \method is largely driven by our ability to use a small $T$ ($T \leq 32$), while diffusion-based models generally require a much larger number of iterations. 

While ANFM is in practice faster during inference than competing methods, we find that training could be more expensive. In Appendix~\ref{appendix:training-costs}, we compare the training costs of ANFM to those of DiGress.

\subsection{Training Algorithm}
\label{subsec:training-algo}

\paragraph{Teacher-Forcing.} We employ teacher-forcing~\citep{williams1989learning} to train our generative model $p_\theta$ in a first training stage. We illustrate this training scheme in Figure~\ref{subfig:teacher-forcing-training}. Teacher-forcing allows the model to learn from complete sequences of graph evolution, providing a good initialization for subsequent reinforcement learning-based fine-tuning. Given a dataset of graphs $\mathcal{D}$, we convert it into a dataset of noisy filtration sequences, denoted as $\tilde{\mathcal{D}}$. Our objective is to maximize the log-likelihood of these sequences under our model:
\begin{equation}
    \mathcal{L}(\theta) := \mathbb{E}_{(\tilde{G}_0,\dots,\tilde{G}_T)\sim \tilde{\mathcal{D}}}\left[\log p_\theta(\tilde{G}_0, \dots, \tilde{ G}_T)\right].
    \label{eq:autoregressive-log-likelihood}
\end{equation}
In practice, this objective is implemented as a cross-entropy loss.
While the noise augmentation introduced in Sec.~\ref{subsec:noise-augmentation-of-filtrations} improves the overall quality of generated graphs after teacher-forcing training, it still falls short in generating graphs with high structural validity. To further mitigate exposure bias, we propose an RL-based fine-tuning stage to refine the model trained with teacher-forcing.

\paragraph{Adversarial Fine-tuning with RL.} Adapting the SeqGAN framework~\citep{yu2017seqgan}, we implement a generator-discriminator architecture where the generator (our mixer model) operates in inference mode as a stochastic policy and is thereby exposed to its own predictions during training. The discriminator is a graph transformer, namely GraphGPS~\citep{rampasek2022graphgps}. During training, the generator produces graph samples, which the discriminator evaluates for plausibility. The generator is updated using Proximal Policy Optimization (PPO)~\citep{schulman2018ppo} based on the discriminator's feedback, while the discriminator is trained adversarially to distinguish between generated and training set graphs. This training scheme is illustrated in Figure~\ref{subfig:adversarial-fine-tuning}. It is worth noting that only the final generated graph is presented to the discriminator. Therefore, the generator is trained to maximize a terminal reward without constraints on intermediate graphs. We provide pseudo-code in Appendix~\ref{appendix:adversarial-finetuning}. 
While we focus on adversarial fine-tuning 
of ANFM, similar techniques might prove useful for diffusion models or existing autoregressive graph generators. However, incorporating this training strategy into existing generative models would require substantial modifications, falling outside the scope of our work.

\subsection{Comparison to Other Generation Paradigms}
\label{subsec:comparison}
While we categorize \method as an autoregressive model, it also exhibits superficial similarities to diffusion-based approaches. Hence, we further clarify its conceptual similarities and differences to existing generation paradigms.

\paragraph{Autoregressive Approaches.} Existing autoregressive graph generators build graphs iteratively by generating sequences of node-addition~\citep{you2018graphrnn,lia2019gran,li2018deepgmg,kong2023grapharm} or edge-addition~\citep{goyal2020graphgen,bacciu2020edgebased} operations, thus modeling \emph{monotonic} sequences of subgraphs. The operations performed by these models are non-reversible, making them vulnerable to exposure bias.
In contrast, \method explicitly models sequences of \emph{entire graphs}, allowing for both edge addition and edge \emph{deletion}. \method is encouraged to perform both operations by training with noise augmentations, thus allowing \method to generate \emph{non-monotonic} graph sequences.
We demonstrate in Sec.~\ref{subsec:ablations} that noise augmentation improves performance substantially, justifying the necessity of modeling non-monotonic sequences.
Graph filtration is a natural and general framework for defining \emph{monotonic} graph sequences to which we apply noise augmentations during the first training stage.  This approach generalizes existing autoregressive methods by allowing sequences of subgraphs with non-induced edge sets to be constructed. While this enables the exploration of a wider variety of graph
sequences for training, the non-induced sequences we investigate in our experiments (c.f., Sec.~\ref{sec:experiments}) are oftentimes
outperformed by sequences of subgraphs with induced edge sets.

In autoregressive tasks, one is usually interested in the entire generated sequence. In contrast, we model the sequence $\tilde G_0, \dots, \tilde G_T$ but are only interested in the marginal of $\tilde G_T$. This is akin to diffusion modeling, where intermediate states serve as latent variables. Hence, we now clarify why we do not categorize \method as a diffusion model.

\paragraph{Diffusion Models.} Similar to graph denoising diffusion models~\citep{vignac2023digress,chen2023edge,kong2023grapharm}, we reverse a corrupting process that transforms graph samples $G_T$ into graphs $\tilde G_0$ from a convergent distribution. Unlike standard denoising diffusion models, our filtration process is explicitly \emph{non-Markovian}. Instead of simply accumulating noise over time steps, our sequence $\tilde G_T, \dots, \tilde G_0$ is based on the topology of $G_T$. Hence, it may not be Markov. This is why we introduce a full autoregressive structure, which leads to improvements over a first-order structure (used in diffusion models), as we demonstrate in Appendix~\ref{appendix:first-order-autoregressive-variant}. In practice, this filtration-based approach allows us to perform substantially fewer generation steps than diffusion models such as DiGress~\citep{vignac2023digress}, EDGE~\citep{chen2023edge}, or GraphARM~\citep{kong2023grapharm}.

\section{Experiments}
\label{sec:experiments}
We empirically evaluate our method on synthetic and real-world datasets. We investigate the filtration strategy based on depth-first search node orderings (DFS) and the line Fiedler function (Fdl.). In Sec.~\ref{subsec:small-synthetic}, we first present results on the commonly used small benchmark datasets~\citep{martinkus2022spectre}, comparing our method to a variety of baselines. We then demonstrate in Sec.~\ref{subsec:large-synthetic} that we can improve upon these results by using a more realistic setting with more training examples. Additionally, we present results for inference efficiency. Finally, in Sec.~\ref{subsec:real-world-exp}, we demonstrate that our model is applicable to real-world data, namely larger protein graphs~\citep{dobson2003proteins} and drug-like molecules~\citep{brown2019guacamol}. In Sec.~\ref{subsec:ablations}, we present ablation studies demonstrating the efficacy of noise augmentation and adversarial fine-tuning.

\paragraph{Evaluation.}
We follow established practices from previous works~\citep{you2018graphrnn,martinkus2022spectre,vignac2023digress} in our evaluation. We compare a set of model-generated samples to a test set via maximum mean discrepancy (MMD)~\citep{gretton2012mmd}, based on various graph descriptors. These descriptors include histograms of node degrees (Deg.), clustering coefficients (Clus.), orbit count statistics (Orbit), and eigenvalues (Spec.).

In previous works~\citep{martinkus2022spectre,vignac2023digress}, very few samples are generated for the evaluation of graph generative models. In Appendix~\ref{appendix:variance-and-bias}, we show theoretically and empirically that this leads to high bias and variance in the reported metrics. In Sec.~\ref{subsec:large-synthetic} and~\ref{subsec:real-world-exp}, we generate 1024 samples for evaluation to mitigate this, while we generate 40 samples in Sec.~\ref{subsec:small-synthetic} to fairly compare to previous methods. For synthetic datasets, we follow previous works by reporting the ratio of generated samples that are valid, unique, and novel (VUN). In Sec.~\ref{subsec:large-synthetic} and~\ref{subsec:real-world-exp}, we report inference speed, measured as the time needed to generate 1024 graphs on an H100 GPU, normalized to a per-graph cost. Hence, the reported metrics have the unit $\mathrm{second}/\mathrm{graph}$. The measured inference times inherently depend on the implementation details of the methods. While we ensure that all models are evaluated on identical hardware under comparable conditions, this limitation cannot be fully eliminated. For each evaluation metric, we highlight the best and second-best performing method in bold and underlined, respectively.

\paragraph{Baselines.} 
We aim to demonstrate that our method is competitive with state-of-the-art diffusion models in terms of sample quality while outperforming them in terms of inference speed. Hence, we compare our method to two recent diffusion models, namely DiGress~\citep{vignac2023digress} and ESGG~\citep{bergmeister2024efficientscalable}. DiGress first introduced discrete diffusion to the area of graph generation and remains one of the most robust baselines. ESGG is acutely relevant to our work, as it aims to improve inference speed. In addition, we also present results on an autoregressive model, GRAN~\citep{lia2019gran}, which focuses on efficiency during inference. For details on model selection and hyper-parameters for these baselines, we refer to \cref{appendix:esgg-selection,appendix:gran-hyperparameters,appendix:digress-hyperparameters,appendix:esgg-hyperparameters,appendix:gran-model-selection}. In Sec.~\ref{subsec:small-synthetic}, we report baseline results from the literature, also comparing to the hierarchical HiGen~\citep{karami2024higen} approach, the scalable EDGE~\citep{chen2023edge} diffusion model, the autoregressive GraphRNN model~\citep{you2018graphrnn}, and the GAN-based SPECTRE model~\citep{martinkus2022spectre}.

\subsection{Experiments with Small Synthetic Datasets}
\label{subsec:small-synthetic}
\begin{table}[htp]
    \small
    \centering
    \caption{Performance of models on small synthetic SPECTRE datasets. Results on GraphRNN, GRAN and SPECTRE taken from~\citet{martinkus2022spectre}. Results on DiGress, ESGG and EDGE from~\citet{bergmeister2024efficientscalable}.}
    \resizebox{\columnwidth}{!}{
\begin{tabular}{l|ccccc||ccccc}
    \toprule
      & \multicolumn{5}{c||}{Planar Graphs ($|V|=64$, $N_\mathrm{train}=128$)} & \multicolumn{5}{c}{SBM Graphs ($|V|\sim 104$, $N_\mathrm{train}=128$)} \\ \cmidrule{2-11}
      & VUN ($\uparrow$) & Deg. ($\downarrow$) & Clus. ($\downarrow$) & Orbit ($\downarrow$) & Spec. ($\downarrow$) & VUN ($\uparrow$) & Deg. ($\downarrow$) & Clus. ($\downarrow$) & Orbit ($\downarrow$) & Spec. ($\downarrow$) \\ 
      \midrule
      GraphRNN & \num{0.0} &\num{0.0049} & \num{0.2779} & \num{1.2543} & \num{0.0459} & \num{5.0} & \num{0.0055} &\num{0.0584} &\num{0.0785} & \num{0.0065} \\
      GRAN & \num{0.0} &\num{0.0007} & \underline{\num{0.0426}} & \bfseries \num{0.0009} & \underline{\num{0.0075}} & \num{25.0} & \num{0.0113} &\num{0.0553} &\num{0.0540} & \num{0.0054}\\
      SPECTRE & \num{25.0} & \underline{\num{0.0005}} & \num{0.0785} & \underline{\num{0.0012}} & \num{0.0112} & \num{52.5} & \num{0.0015} &\num{0.0521} &\num{0.0412} & \num{0.0056} \\
      DiGress & \underline{\num{77.5}}&\num{0.0007} & \num{0.0780} & \num{0.0079} & \num{0.0098} & \underline{\num{60.0}} & \num{0.0018} & \bfseries \num{0.0485} &\underline{\num{0.0415}} & \bfseries \num{0.0045}\\
      {EDGE} & {\num{0.0}} & {\num{0.0761}} & {\num{0.3229}} & {\num{0.7737}} & {\num{0.0957}} & {\num{0.0}} & {\num{0.0279}} & {\num{0.1113}} & {\num{0.0854}} & {\num{0.0251}} \\
      {HiGen} & {-}& {-}& {-}& {-}& {-}& {-} & {0.0019} & {0.0498} & {0.0352} & \bfseries{0.0046} \\ 
      ESGG & \bfseries \num{95.0} & \underline{\num{0.0005}} & \num{0.0626} & \num{0.0017} & \num{0.0075} & \num{45.0} & \num{0.0119} & \num{0.0517} &\num{0.0669} & \num{0.0067} \\
      \midrule
      ANFM (Fdl.) & \num{72.5} & \num{0.0037} & \num{0.1332} & \num{0.0047} & \num{0.0099} & \num{47.5} & \underline{\num{0.0014}} & \num{0.0506} & \num{0.0551} & \num{0.0058} \\
      ANFM (DFS) & \num{37.5} & \bfseries \roundtofour{0.00035008802439984166} & \bfseries{\roundtofour{0.030911077240092955}} & \underline{\roundtofour{0.0011709273144420163}} & \bfseries\roundtofour{0.006111137605895989} & \bfseries \num{65.0} & \bfseries \roundtofour{0.0006504529165833883} & \underline{\roundtofour{0.048835624180876655}} & \bfseries \roundtofour{0.03351210069527266} & \roundtofour{0.0048143431712106555} \\    %
      \bottomrule
\end{tabular}
}

    \label{tab:small-spectre-datasets}
\end{table}
As a first demonstration of our method, we present results on the planar and SBM datasets by~\citet{martinkus2022spectre}. Since the training set consists of only 128 graphs, we find that \method models using the line Fiedler function tend to overfit during the teacher-forcing training stage. To mitigate this issue, we introduce some small stochastic perturbations to node orderings used for initializing node representations. We discuss this in more detail in Appendix~\ref{appendix:node-individualization}. Model selection is performed based on the minimal validation loss. 
Table~\ref{tab:small-spectre-datasets} illustrates that, in terms of VUN on the planar graph dataset, our models outperform GraphRNN~\citep{you2018graphrnn}, GRAN~\citep{lia2019gran}, EDGE~\cite{chen2023edge}, and SPECTRE~\citep{martinkus2022spectre}. While they perform worse than DiGress~\citep{vignac2023digress} and ESGG~\citep{bergmeister2024efficientscalable} on the planar graph dataset in terms of VUN, the DFS variant performs better than these baselines in terms of most MMD metrics. While the line Fiedler variant outperforms the DFS variant on the planar graph dataset w.r.t. VUN, the DFS variant performs substantially better on the SBM dataset. Notably, on the SBM dataset, the DFS variant outperforms all baselines w.r.t. VUN and many MMD metrics.
On both datasets, \method appears competitive with the two diffusion-based approaches, DiGress and ESGG.

\subsection{Experiments with Expanded Synthetic Datasets}
\label{subsec:large-synthetic}
\begin{table}[t]
    \small
    \centering
    \caption{Performance on expanded synthetic datasets, evaluated on 1024 model samples. Showing a single run for the baselines and DFS, the median across three runs for the line Fiedler variant. *ESGG evaluation modified to draw graph sizes from empirical training distribution and use 100 refinement steps for determining SBM validity.}\label{tab:large-synthetic-datasets}
    \begin{tabular}{l|cccccc}
    \toprule
      & \multicolumn{6}{c}{Expanded Planar Graphs ($|V|=64$, $N_\mathrm{train}=8192$)} \\ \cmidrule{2-7}
      & VUN ($\uparrow$) &   Deg. ($\downarrow$) & Clus. ($\downarrow$) & Orbit ($\downarrow$) & Spec. ($\downarrow$) & Time ($\downarrow$) \\ 
      \midrule
      GRAN & \formatpercent{0.0019230769230769232} & \roundtofour{0.006113867711138976} & \roundtofour{0.18624328639487248}& \roundtofour{0.09605522312686676}& \roundtofour{0.00810439549214248} & \num[round-mode=places, round-precision=4]{0.03033432085} \\ %
      DiGress & \underline{\formatpercent{0.8076171875}} &\underline{\roundtofour{0.0004164931196319888}} & \roundtofour{0.021730750835276424} & \roundtofour{0.00448844252394176} & \roundtofour{0.0024254168214978833} & \num[round-mode=places, round-precision=2]{2.72634248668} \\
      ESGG* & \bfseries \formatpercent{0.8994140625} &  \roundtofour{0.0007265087884364974} & \bfseries \roundtofour{0.016175366554288972} & \roundtofour{0.007441655511178702} & \bfseries \roundtofour{0.001232535862508266} & \num[round-mode=places,round-precision=2]{4.64952528686
}  \\
    \midrule
      ANFM (Fdl.) & \formatpercent{0.7919921875} & \underline{\roundtofour{0.00038328081648231205}} & \underline{\roundtofour{0.018315733274652524}} & \bfseries \roundtofour{0.0001880988026623509} & \bfseries \roundtofour{0.0011980208600188558}  & \bfseries \roundtofour{0.01536367693} \\ %
      ANFM (DFS) & \formatpercent{0.4560546875} & \bfseries \roundtofour{0.0003129283983656084} & \roundtofour{0.029550210939678245} & \underline{\roundtofour{0.00044472290731478736}} & \underline{\roundtofour{0.0016859051661126667}}  & \underline{\roundtofour{0.01637661596}} \\ %

      \bottomrule
    \toprule
      & \multicolumn{6}{c}{Expanded SBM Graphs ($N_\mathrm{train}=8192$)} \\ \cmidrule{2-7}
      & VUN ($\uparrow$) & Deg. ($\downarrow$) & Clus. ($\downarrow$) & Orbit ($\downarrow$) & Spec. ($\downarrow$) & Time ($\downarrow$) \\ 
      \midrule
      GRAN & \formatpercent{0.2528846153846154} & \roundtofour{0.01855419403481262}& \roundtofour{0.008552612861186321} & \roundtofour{0.03048538928918587}&  \roundtofour{0.0021930055582979335} & \num[round-mode=places, round-precision=3]{0.13255317416} \\ %
      DiGress & \formatpercent{0.5615234375} & \bfseries \roundtofour{0.00018761713854642537} & \roundtofour{0.005575554031473584} & \underline{\roundtofour{0.007552669961076439}} & \underline{\roundtofour{0.0009487213226457848}} & 12.99 \\
      ESGG* & \formatpercent{0.03515625} & \roundtofour{0.09487036941571092} & \roundtofour{0.012106638671567905} & \roundtofour{0.05181406909951486} & \roundtofour{0.012239075476934591} & \num[round-mode=places,round-precision=2]{39.4171372799}\\
      \midrule
      ANFM (Fdl.) & \underline{\formatpercent{0.759765625}} & \roundtofour{0.0014386077305450495} & \underline{\roundtofour{0.005062382182491599}} & \roundtofour{0.018009643354469057} &  \roundtofour{0.0011413484050317724} & \bfseries \roundtofour{0.0395694666} \\ %
      ANFM (DFS) & \bfseries \formatpercent{0.7802734375} & \underline{\roundtofour{0.0007945592906148935}} & \bfseries \roundtofour{0.0048530867930203555} & \bfseries \roundtofour{0.004919597001513051} &  \bfseries \roundtofour{0.0008138377571431654} & \underline{\roundtofour{0.03973908233}} \\
      \bottomrule
    \toprule
      & \multicolumn{6}{c}{Expanded Lobster Graphs ($N_\mathrm{train}=8192$)} \\ \cmidrule{2-7}
      & VUN ($\uparrow$) &  Deg. ($\downarrow$) & Clus. ($\downarrow$) & Orbit ($\downarrow$) & Spec. ($\downarrow$) & Time ($\downarrow$)\\ 
      \midrule
      GRAN & \formatpercent{0.419921875} & \roundtofour{0.04364894549451748}& \roundtofour{0.006875850695025276} & \roundtofour{0.1509631831362943}& \roundtofour{0.1469363530030714} & \num[round-mode=places,round-precision=4]{0.03992443322
} \\  %
      DiGress & \bfseries \formatpercent{0.9658203125} &  \underline{\sci{0.00009792849223511092}} & \underline{\sci{0.0000008329873320001}} & \roundtofour{0.0015847197683100944} & \underline{\roundtofour{0.0009118824253393498}} & \num[round-mode=places,round-precision=2]{4.85987236607}\\
      ESGG* & \formatpercent{0.6396484375} & \roundtofour{0.0006767794845021768} & \bfseries \sci{0} & \roundtofour{0.0026625081311733023} & \roundtofour{0.002318797647705928} & \num[round-mode=places,round-precision=2]{3.15626082011} \\
      \midrule
      ANFM (Fdl.) & \formatpercent{0.791015625} &  \roundtofour{0.000403299865038953} & \sci{7.888505157871428e-05} & \underline{\roundtofour{0.00104321298242116}} & \roundtofour{0.0015785343299126176} & \underline{\roundtofour{0.01748559018}} \\ %
      ANFM (DFS) & \underline{\formatpercent{ 0.8759765625}} & \bfseries \sci{7.824159351721427e-05} & \sci{1.6414273384945943e-06} & \bfseries \roundtofour{0.0007278563085622025} & \underline{ \roundtofour{0.0009673624774602096}} & \bfseries \roundtofour{0.01599670783} \\ %
      \bottomrule
\end{tabular}

\end{table}
We supplement the results presented above by training our model on larger synthetic datasets. Namely, we generate training sets consisting of $8192$ graphs and corresponding validation and test sets consisting of 256 graphs each. We use the same data generation approach as~\citet{martinkus2022spectre} to obtain expanded planar and SBM datasets. Additionally, we produce an expanded dataset of lobster graphs using NetworkX~\citep{hagberg2008networkx}, as done in~\citep{lia2019gran}. We perform one training run per dataset for the DFS variant and three independent runs for the line Fiedler variant to illustrate robustness. We present the deviations observed across the three runs in Appendix~\ref{appendix:comprehensive-large-synthetic}. We visualize samples from our model in Appendix~\ref{appendix:qualitative-samples}. In Table~\ref{tab:large-synthetic-datasets}, we compare our method to our three baselines. For reasons of brevity, we only report the median performance here and refer to Appendix~\ref{appendix:analyzing-performance-across-runs} for a critical discussion of techniques to evaluate performance across multiple runs. All models reach perfect uniqueness and novelty scores on the expanded planar and SBM datasets and comparable uniqueness and novelty performance on the expanded lobster dataset (c.f. Appendix~\ref{appendix:comprehensive-large-synthetic}).
We find that \method is substantially faster during inference than the diffusion models, consistently achieving at least a 100-fold speedup in comparison to DiGress and ESGG.
Moreover, we find that our method appears competitive with respect to sample quality, outperforming the two diffusion models on the expanded SBM dataset in terms of validity. On the expanded planar graph dataset, we observe that the line Fiedler ANFM variant outperforms or matches these baselines w.r.t.\ three of the four MMD metrics. For ESGG, we note that we obtain a surprisingly low validity score on the expanded SBM dataset and refer to Appendix~\ref{appendix:esgg-selection} for further discussion on this. We find that \method substantially outperforms the autoregressive baseline, GRAN, in terms of validity and MMD metrics.

Consistent with our previous experiments, we find that the DFS variant outperforms the line Fiedler variant on two of the three datasets in terms of the VUN metric. Hence, the exploration of more effective filtration strategies remains an important area for future investigation.

\subsection{Experiments with Real-World Data}
\label{subsec:real-world-exp}
In this subsection, we present empirical results on the protein graph dataset introduced by~\citet{dobson2003proteins} and the GuacaMol~\citep{brown2019guacamol} dataset consisting of drug-like molecules. 

\paragraph{Proteins.} While results have been reported for the protein dataset in previous works, we re-evaluate the baselines on 1024 model samples to reduce bias and variance in the reported metrics. We use a trained GRAN checkpoint provided by~\citet{lia2019gran} but re-train ESGG and DiGress, as no trained models are available.
In Table~\ref{tab:protein-dataset}, we find that our model is again substantially faster than the diffusion-based baselines. Moreover, it is also 10 times faster than GRAN while outperforming it with respect to all MMD metrics. Compared to diffusion-based models, the sample quality of our approach appears worse by most, but not all, MMD metrics. We note that the sub-quadratic asymptotic sampling complexity (w.r.t.\ the number of nodes) of ESGG starts to become noticeable at the large size of protein graphs, enabling substantially faster generation than DiGress. However, despite the cubic asymptotic complexity of \method, it remains over 100 times faster than ESGG.

\paragraph{GuacaMol.} So far, we have focused on generating un-attributed graphs. However, molecular structures require node and edge labels to represent atom and bond types, respectively. Inspired by the approach of~\citet{Li2020deepscaffold}, we first train ANFM to generate un-attributed topologies and then assign node and edge labels using a simple VAE. Details of this post-hoc labeling procedure can be found in Appendix~\ref{appendix:post-hoc-labeling}. 
We report established metrics for the GuacaMol benchmark: the Fréchet ChemNet distance (FCD)~\citep{preuer2018fcd} embeds the SMILES representations of valid generated molecules via a pre-trained language model and computes an optimal transport distance between the approximate generated and reference embedding distributions. The KL Divergence metric~\citep{brown2019guacamol} determines various physicochemical parameters of the generated molecules and compares them to the reference distribution via the Kullback-Leibler divergence. Consistent with~\citet{brown2019guacamol}, both metrics are reported as the exponential of the scaled negative FCD and KL divergence and should thus be maximized.
In Table~\ref{tab:guacamol}, we compare the performance of \method to several baselines from~\citet{vignac2023digress}, namely an LSTM model trained on SMILES strings~\citep{brown2019guacamol}, graph MCTS~\citep{brown2019guacamol}, and NAGVAE~\citep{Kwon2020nagvae}. 
Here, only DiGress and ANFM are general-purpose graph generation methods, while the other models are specific to molecule generation.
Notably, \method demonstrates competitive performance with DiGress.
Consistent with previous findings~\citep{vignac2023digress,xu2024disco,qinmadeira2024defog}, we observe that language models for SMILES strings (i.e., the LSTM model) remain strong baselines, outperforming general-purpose graph generators in terms of FCD and KL divergence.  
\begin{table}[ht]
  \centering
  \begin{minipage}[t]{0.455\linewidth}
    \centering
    \caption{Performance of models on protein graph dataset. *Graph sizes are drawn from empirical training distribution.}
    \resizebox{\columnwidth}{!}{
\begin{tabular}{l|ccccc}
    \toprule
      & \multicolumn{5}{c}{Protein Graphs ($100 \leq |V|\leq 500$, $N_\mathrm{train}=587$)} \\ \cmidrule{2-6}
     &  Deg. ($\downarrow$) & Clus. ($\downarrow$) & Orbit ($\downarrow$) & Spec. ($\downarrow$)  & Time ($\downarrow$) \\ 
      \midrule
      GRAN & \roundtofour{0.0025292667150857984} & \roundtofour{0.05097433556026184} & \roundtofour{0.15387425736935945} & \roundtofour{0.00505014722197461} & 2.25 \\
      DiGress  & \bfseries \roundtofour{0.0005605703226632119} & \underline{\roundtofour{0.023397315700488697}} & \bfseries \roundtofour{0.02886796511930978} & \underline{\roundtofour{0.0014210933176064255}} & \num[round-mode=places,round-precision=2]{72.2741875825} \\
      ESGG* & \roundtofour{0.0032714632121304543} & \bfseries \roundtofour{0.02159312262189167} & \roundtofour{0.05573913611976544} & \bfseries \roundtofour{0.000811374942401466} & \num[round-mode=places,round-precision=2]{19.4783656872}  \\
      \midrule
      ANFM (Fdl.)  & \underline{\roundtofour{0.0023749025058348305}} & \roundtofour{0.04638845388065309} & \underline{\roundtofour{0.05319640542912807}} & \roundtofour{0.00236409058743825} & \underline{\num[round-mode=places,round-precision=3]{0.19441701192}}\\
      ANFM (DFS)  & \underline{\roundtofour{0.0024338033822135507}} & \roundtofour{ 0.03701067878023666} & \roundtofour{ 0.062028807744423764} & \roundtofour{0.001997214985000051} & \bfseries \num[round-mode=places,round-precision=3]{0.19086214923}\\
      \bottomrule
\end{tabular}
}

    \label{tab:protein-dataset}
  \end{minipage}
  \hfill
  \begin{minipage}[t]{0.48\linewidth}
      \centering
      \caption{Performance of models on GuacaMol benchmark.}
      \vspace{\baselineskip}
      \resizebox{\columnwidth}{!}{
\begin{tabular}{l|ccccc}
    \toprule
      & \multicolumn{5}{c}{GuacaMol ($2 \leq |V| \leq 88$, $N_\mathrm{train} \approx 1.3M$)} \\ \cmidrule{2-6}
      & Valid ($\uparrow$) & Unique ($\uparrow$) & Novel ($\uparrow$) & KL Div ($\uparrow$) & FCD ($\uparrow$)  \\ 
      \midrule
      LSTM & \bfseries 95.9 & 100 & 91.2 & \bfseries 99.1 & \bfseries 91.3  \\
      NAGVAE & \underline{92.7} & 95.5 & 100 & 38.4 & 0.9  \\
      MCTS & 100 & 100 & \underline{99.4} & 52.2 & 1.5  \\
      DiGress & 85.2 & 100 & \bfseries 99.9 & 92.9 & \underline{68.0}  \\
      \midrule
      ANFM (DFS) & \num[round-mode=places, round-precision=1, scientific-notation=false]{75.64} & \num[round-mode=places, round-precision=1, scientific-notation=false]{99.99} & \num[round-mode=places, round-precision=1, scientific-notation=false]{98.31} & \underline{\num[round-mode=places, round-precision=1, scientific-notation=false]{95.05175785302568}} & \num[round-mode=places, round-precision=1, scientific-notation=false]{65.20021255174584}\\ %
      \bottomrule
\end{tabular}
}

      \label{tab:guacamol}
  \end{minipage}
\end{table}

\subsection{Ablation Studies}
\label{subsec:ablations}
In this subsection, we demonstrate that teacher forcing, noise augmentation, and adversarial fine-tuning are crucial components of our method. 
The substantial performance gains from noise augmentation and adversarial fine-tuning, in particular, suggest that exposure bias is a significant factor affecting our autoregressive model.
We also study the effect of filtration granularity, controlled by the hyper-parameter $T$. In Appendix~\ref{appendix:additional-ablations}, we present extensive additional ablations, investigating alternative filtration functions, schedules, and node individualization schemes.

\begin{table}[ht]
    \centering
    \small
    \caption{Two ablation studies using the line Fiedler variant on the expanded planar graph dataset. Showing median $\pm$ maximum deviation across three runs. For the noise ablation, we train for 100k steps in stage I. For the finetuning ablation, we train for 200k steps in stage I.}
    \begin{tabular}{l|ll||ll}
\toprule
& \multicolumn{2}{c||}{Noise Ablation} & \multicolumn{2}{c}{Finetuning Ablation} \\ \cmidrule{2-5}
 & Stage I w/ Noise & Stage I w/o Noise &  Stage II & Stage I w/ Noise \\
\midrule
VUN ($\uparrow$) & \bfseries {\formatpercent{0.2021484375}} \normalfont \tiny{$\pm$ \formatpercent{0.0322265625}} & \formatpercent{0.0} \tiny{$\pm$ \formatpercent{0.0}} & \bfseries {\formatpercent{0.7919921875}} \normalfont \tiny{$\pm$ \formatpercent{0.0712890625}} & \formatpercent{0.232421875} \tiny{$\pm$ \formatpercent{0.0966796875}} \\
Deg. ($\downarrow$) & \bfseries {\roundtofour{0.005753176855113784}} \normalfont \tiny{$\pm$ \roundtofour{0.0008415666902481522}} & \roundtofour{0.08641829053390504} \tiny{$\pm$ \roundtofour{0.07492307240196294}} & \bfseries {\roundtofour{0.00038328081648231205}} \normalfont \tiny{$\pm$ \sci{5.425587654261932e-05}} & \roundtofour{0.0035829315538773443} \tiny{$\pm$ \roundtofour{0.0008735491110938298}} \\ %
Clus. ($\downarrow$) & \bfseries {\roundtofour{0.176845325553082}} \normalfont \tiny{$\pm$ \roundtofour{0.010596256629278128}} & \roundtofour{0.31786280509296844} \tiny{$\pm$ \roundtofour{0.0036796792253411814}} & \bfseries {\roundtofour{0.018315733274652524}} \normalfont \tiny{$\pm$ \roundtofour{0.001394485118883626}} & \roundtofour{0.15469371787059683} \tiny{$\pm$ \roundtofour{0.027969526095574127}} \\
Spec. ($\downarrow$) & \bfseries {\roundtofour{0.004760004557846642}} \normalfont \tiny{$\pm$ \roundtofour{0.0010621018562793072}} & \roundtofour{0.10415460146045841} \tiny{$\pm$ \roundtofour{0.0759921696300514}} & \bfseries {\roundtofour{0.0011980208600188558}} \normalfont \tiny{$\pm$ \roundtofour{0.00040213501720742784}} & \roundtofour{0.003302054948134847} \tiny{$\pm$ \roundtofour{0.001386189236599833}} \\
Orbit ($\downarrow$) & \bfseries {\roundtofour{0.012890572283490664}} \normalfont \tiny{$\pm$ \roundtofour{0.01688171640359526}} & \roundtofour{0.7114639639200796} \tiny{$\pm$ \roundtofour{0.4410820208823486}} & \bfseries {\roundtofour{0.0001880988026623509}} \normalfont \tiny{$\pm$ \roundtofour{0.0015747979768996334}} & \roundtofour{0.004284029842958059} \tiny{$\pm$ \roundtofour{0.0023232864205906534}} \\
\bottomrule
\end{tabular}

    \label{tab:ablations-fused}
\end{table}

\paragraph{Noise Augmentation.} We find that noise augmentation of intermediate graphs substantially improves performance during training with teacher forcing (stage I). We illustrate this using the line Fiedler variant on the expanded planar graph dataset in Table~\ref{tab:ablations-fused} (left). A corresponding experiment for the DFS variant can be found in Appendix~\ref{appendix:additional-ablations}.

\paragraph{GAN Tuning.} In Table~\ref{tab:ablations-fused} (right), we compare performance after training with teacher-forcing and subsequent adversarial fine-tuning on the expanded planar graph dataset. We find that adversarial fine-tuning substantially improves performance, both in terms of validity and MMD metrics. A corresponding analysis for the DFS variant and the expanded SBM and lobster datasets can be found in Appendix~\ref{appendix:additional-ablations}.

\paragraph{Teacher Forcing.} We illustrate in Table~\ref{tab:stage1-ablation-main} the significance of the teacher forcing training stage (i.e. stage I). We compare the performance that \method achieves with adversarial fine-tuning using two different checkpoints collected during the first training stage. Initializing fine-tuning with an earlier checkpoint substantially harms performance, underscoring the important role of teacher forcing training in finding a near-optimal model before performing the adversarial fine-tuning.
\begin{table}[ht]
    \centering
    \caption{Performance of the line Fiedler \method variant on the expanded planar graph dataset with different training durations during stage I, followed by stage II training. Showing median across three runs for 200k steps and a single run for 10k steps.}
    \begin{tabular}{l|cccccccc}
    \toprule
      \# Stage I Steps & VUN ($\uparrow$)  &  Deg. ($\downarrow$) & Clus. ($\downarrow$) & Orbit ($\downarrow$) & Spec. ($\downarrow$)  \\ 
      \midrule
      200k & \bfseries \formatpercent{0.7919921875}& \bfseries\roundtofour{0.00038328081648231205} & \bfseries\roundtofour{0.018315733274652524} & \bfseries\roundtofour{0.0001880988026623509} & \bfseries\roundtofour{0.0011980208600188558}  \\
      10k & \formatpercent{0.033203125} & \roundtofour{0.0015525480974487582} & \roundtofour{0.22780811801129108} & \roundtofour{0.046382451203791586} & \roundtofour{0.0069256600294440585} \\
    \bottomrule
\end{tabular}

    \label{tab:stage1-ablation-main}
\end{table}

\paragraph{Filtration Granularity.} In Figure~\ref{fig:filtration-granularity}, we study the impact of filtration granularity, \ie the number of steps $T$, on generation quality and speed of \method. Analogously, we investigate how the number of denoising steps influences quality and speed in DiGress. We re-train the models with varying $T$. \method consistently outperforms DiGress in computational efficiency across all $T$. While DiGress only achieves a maximum VUN of 41\% for our largest considered $T$, \method achieves a VUN of 81\% for $T=30$. We provide a complementary analysis of MMD metrics in Appendix~\ref{appendix:mmd-vs-granularity}.
\begin{figure}[t]
    \centering
    \begin{subfigure}[c]{0.48\columnwidth}
        \centering
        \includegraphics[width=0.8\textwidth]{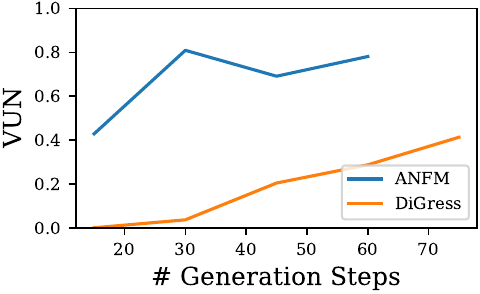}
        \subcaption{Planar VUN vs $T$}
        \label{subgfig:vun-vs-t}
    \end{subfigure}  \hfill
    \begin{subfigure}[c]{0.48\columnwidth}
        \centering
        \includegraphics[width=0.8\textwidth]{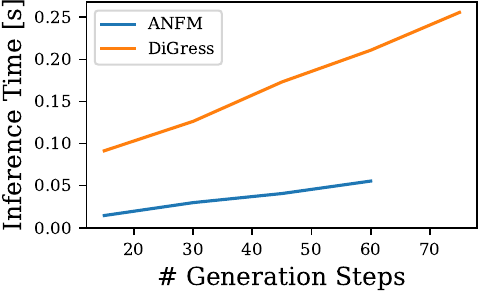}
        \subcaption{Inference time vs $T$}
        \label{subgfig:time-vs-t}
    \end{subfigure} \\
    \caption{Performance of \method and DiGress on the expanded planar dataset as number of generation steps varies. The original (default) DiGress variant uses 1000 generation steps.}
    \label{fig:filtration-granularity}
\end{figure}

\section{Conclusion} 
We proposed \method, an efficient autoregressive graph generative model based on graph filtration. 
\method generates high-quality graphs, outperforming existing autoregressive models and rivaling discrete diffusion approaches in terms of quality while being substantially faster at inference. Various ablations demonstrated the configurability of \method and indicated that exposure bias is an important challenge for autoregressive graph modeling. 
Our experiments suggested that ANFM's unique ability to construct non-montonic graph sequences and training strategies encouraging this behavior (noise augmentation and adversarial finetuning via RL) are the main drivers of performance.

We have argued that our proposed filtration approach generalizes existing autoregressive methods by allowing for sequences of subgraphs with non-induced edge sets. Although this affords additional flexibility in constructing graph sequences for training, the non-induced sequences we investigated (line Fiedler filtrations) were often outperformed by sequences of subgraphs with induced edge sets (DFS filtrations). Thus, the exploration of alternative filtration strategies is left for future research.

While we have demonstrated that node and edge attributes may be sampled in a post-hoc fashion using a VAE, direct modeling within \method remains an interesting area for future investigation. 
We also highlight issues with existing benchmark datasets, where small sample sizes during evaluation lead to unreliable results with high variance and bias. Systematically addressing these challenges is essential for enabling more reliable and fair benchmarking in future research.

\section*{Acknowledgements}
This work was supported by the Max Planck Society. We thank Nina Corvelo Benz for her insightful feedback on the manuscript.

\bibliography{main}
\bibliographystyle{tmlr}

\clearpage

\appendix

\part*{Appendix} %

This appendix is organized as follows: We provide details on post-hoc attribute generation in Appendix~\ref{appendix:post-hoc-labeling}. In \cref{appendix:complexity-and-first-order-variant}, we analyze the runtime complexity of \method and investigate a variant that is asymptotically more efficient w.r.t. $T$. In \cref{appendix:a-bound-on-model-evidence}, we show that we optimize a lower-bound on the data log-likelihood in training stage I. Details on hyperparameters and model architecture are provided in \cref{appendix:hyper-parameters-and-advice,appendix:architecture}. We provide evaluation results across several runs and qualitative model samples in \cref{appendix:extended-evaluation-results}. \cref{appendix:baselines} provides details on our baselines. Additional ablations on filtration function, schedule, node individualization, etc., are presented in \cref{appendix:additional-ablations}. We discuss the shortcomings of established evaluation approaches in \cref{appendix:variance-and-bias}. Finally, we detail the adversarial finetuning algorithm in \cref{appendix:adversarial-finetuning}.

\section{Generating Attributed Graphs}
\label{appendix:post-hoc-labeling}
\subsection{Variational Inference}
To generate attributed graphs $(G, l_V, l_E)$ with topology $G$ and node and edge labels $l_V$ and $l_E$, we propose to first generate the un-attributed graph topology using \method. Subsequently, we label nodes and edges using a separate model. I.e., we decompose the joint distribution as:
\begin{equation}
    p_\theta(G, l_V, l_E) = p_\theta^{\mathrm{label}}(l_E, l_V \:|\: G) \cdot p_\theta^{\mathrm{\method}}(G)
\end{equation}
where $p_\theta^{\mathrm{label}}$ generates node and edge attributes.
We take inspiration from~\citet{Li2020deepscaffold} and train $p_\theta^{\mathrm{label}}$ as a variational autoencoder.

\paragraph{Variational Inference.} Let $G := (V, E)$ be a graph topology from the training dataset and let $l_V : V \to \mathcal{X}$ and $l_E: E \to \mathcal{Y}$ be ground-truth node and edge labels for $G$, respectively. An encoder $q_\phi$ maps the labeled graph $(G, l_V, l_E)$ to a distribution over $D$-dimensional latent node representations $\bm{z} \in \R^{V \times D}$. A decoder $p_\theta$ maps a tuple $(G, \bm{z})$, i.e. the graph topology combined with a latent representation, to a distribution over node and edge attributes. In practice, we consider discrete attributes and $p_\theta$ parametrizes a product distribution across nodes and edges. If $\mathcal{X}$ or $\mathcal{Y}$ factorize into a product of simpler spaces (i.e., when there are multiple node or edge attributes), $p_\theta$ parametrizes a corresponding product distribution over $\mathcal{X}$ or $\mathcal{Y}$. The encoder $q_\phi$ parametrizes a Gaussian distribution with a diagonal covariance matrix. We formulate the typical evidence lower bound~\citep{kingma2014vae}:
\begin{equation}
    p_\theta(l_E, l_V \:|\: G) \geq \mathbb{E}_{\bm{z} \sim q_\phi(\:\cdot\:|\:G, l_E, l_V)}\left[p_\theta (l_E, l_V \:|\: G, \bm{z}) \right] - \KL\left(q_\phi(\: \cdot \:|\: G, l_E, l_V) \:|\: \mathcal{N}(\bm{0}, I) \right)
    \label{eq:elbo-post-hoc-labeling}
\end{equation}

\paragraph{Architecture.} We implement $q_\phi$ and $p_\theta$ as GraphGPS~\citep{rampasek2022graphgps} models. Since $q_\phi$ operates on edge-labeled graphs, we use GINE message passing layers~\citep{hu2020gine} within its graph transformer. The decoder $p_\theta$, on the other hand, only operates on node representations, and we therefore use GIN layers~\citep{xu2019gin}. The two GNNs are provided with Laplacian and random walk positional embeddings~\citep{dwivedi2022randomwalkpe} and we use 5\% dropout layers~\citep{srivastava2014dropout}. We feed the node representations produced by $p_\theta$ through MLPs to generate distributions over node attributes. To generate a distribution over the attribute of an edge $\{u, v\}$, we add the node representations corresponding to $u$ and $v$ and feed the resulting vector through an MLP.

\paragraph{Training.} We train $q_\phi$ and $p_\theta$ jointly by maximizing a re-weighted version of the ELBO in~\eqref{eq:elbo-post-hoc-labeling}. Specifically, we divide the loss for a given datum $(G, l_E, l_V)$ by $|V|$, the cardinality of the node set of $G$. 

\paragraph{Sampling.} To sample attributes, we draw latent node representations $\bm{z}$ from the standard normal distribution. We select the maximum likelihood node and edge attributes from $p_\theta(\:\cdot\:|\:G, \bm{z})$. The encoder $q_\phi$ may be discarded after training.

\subsection{Details on GuacaMol Experiments}
\label{appendix:guacamol-details}
Below, we provide details on the molecule graph generation presented in Sec.~\ref{subsec:real-world-exp}. As described in Appendix~\ref{appendix:post-hoc-labeling}, we train \method on the un-attributed molecule topologies of GuacaMol. Independently, we train a VAE to generate attributes conditioned on a topology. To sample a molecule, we first generate a graph using \method and label nodes and edges using the VAE.

\paragraph{Attributes.} We produce edge labels that indicate the bond type between heavy atoms, distinguishing between single, double, triple, and aromatic bonds. To reconstruct the correct molecule from its graph representation, we find that we require require several node attributes. Namely, given an atom (i.e., node), we generate its type (i.e., the element), the number of explicit hydrogens bound to it, the number of radical electrons, and its partial charge.

\paragraph{\method Hyperparameters.} We train an \method model using the DFS filtration strategy. In stage I, we use similar hyper-parameters as for the other datasets. We use $T=30$ filtration steps, a learning rate of $10^{-5}$, a local batch size of 64 on two GPUs, and 300k training steps. Throughout stage I training, we monitor validation loss and ensure that the model does not overfit. During training stage II, we use a batch size of 32 and a learning rate of $2.5\times 10^{-8}$ for the generative model. The discriminator has 3 layers and a hidden dimension of 32. The other hyperparameters of the discriminator and value model match those used in the other experiments.

\paragraph{VAE Hyperparameters.} We use 5-layer GraphGPS~\citep{rampasek2022graphgps} models for the encoder and decoder, respectively. The hidden dimension is 256 while we choose the latent dimension $D$ to be 8. The Laplacian positional embedding is 3-dimensional while the random walk positional embedding is 8-dimensional. We train for 250 epochs on the GuacaMol dataset, using a batch size of 512, an Adam optimizer~\citep{kingma2014adam} and a learning rate of $10^{-4}$.

\paragraph{Model Samples.} In Figure~\ref{fig:guacamol-samples}, we show uncurated samples from the \method model.
\begin{figure}
    \centering
    \begin{subfigure}[c]{0.19\textwidth}
        \includegraphics[width=\textwidth]{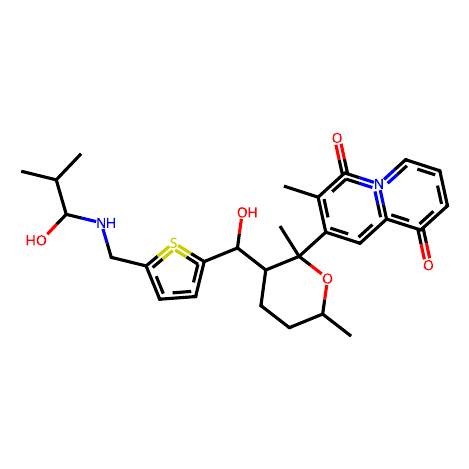}
    \end{subfigure} \hfill
    \begin{subfigure}[c]{0.19\textwidth}
        \includegraphics[width=\textwidth]{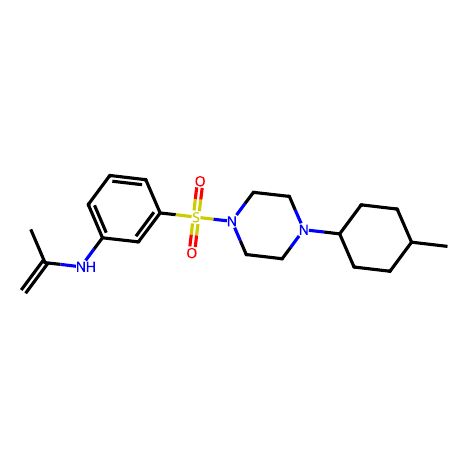}
    \end{subfigure} \hfill
        \begin{subfigure}[c]{0.19\textwidth}
        \includegraphics[width=\textwidth]{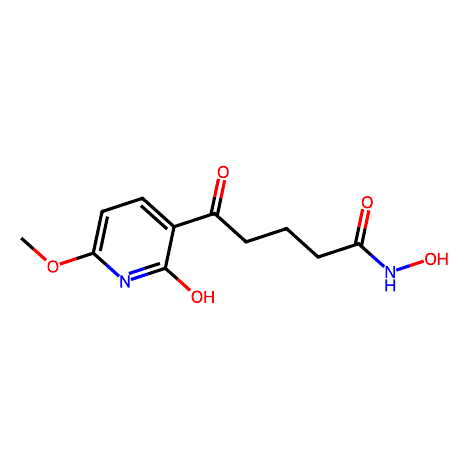}
    \end{subfigure} \hfill
    \begin{subfigure}[c]{0.19\textwidth}
        \includegraphics[width=\textwidth]{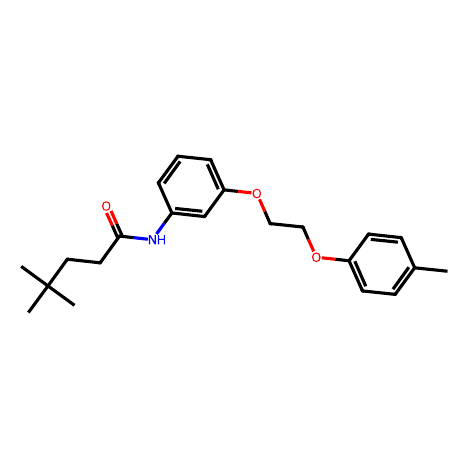}
    \end{subfigure} \hfill 
    \begin{subfigure}[c]{0.19\textwidth}
        \includegraphics[width=\textwidth]{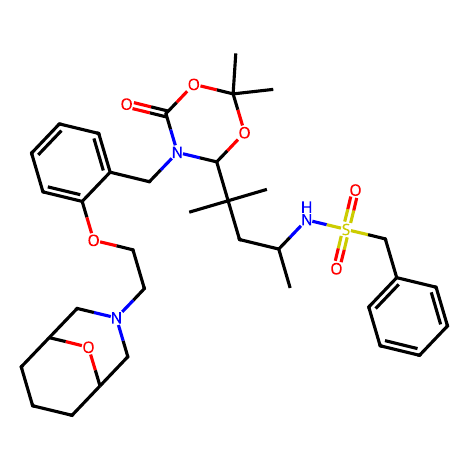}
    \end{subfigure} \\
    \begin{subfigure}[c]{0.19\textwidth}
        \includegraphics[width=\textwidth]{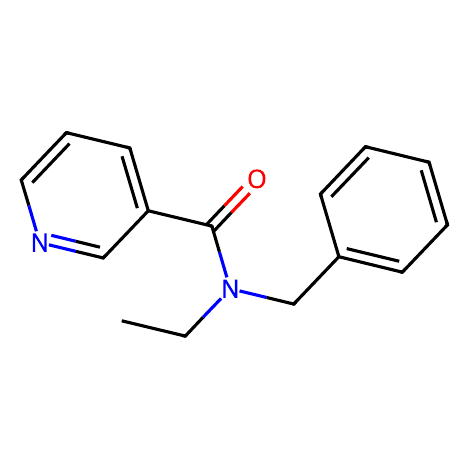}
    \end{subfigure} \hfill
        \begin{subfigure}[c]{0.19\textwidth}
        \includegraphics[width=\textwidth]{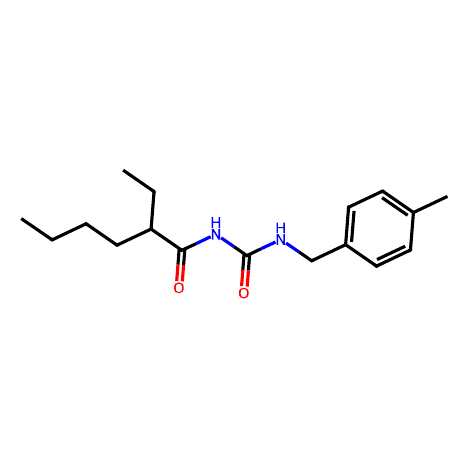}
    \end{subfigure} \hfill
    \begin{subfigure}[c]{0.19\textwidth}
        \includegraphics[width=\textwidth]{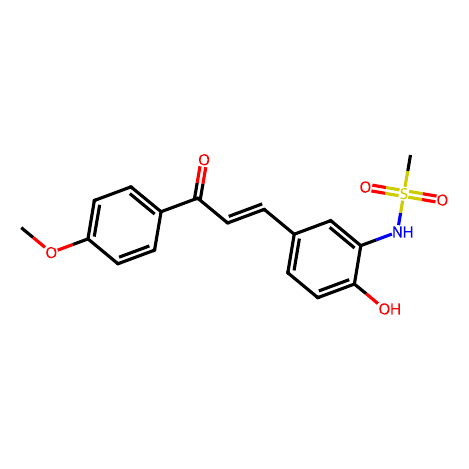}
    \end{subfigure} \hfill
        \begin{subfigure}[c]{0.19\textwidth}
        \includegraphics[width=\textwidth]{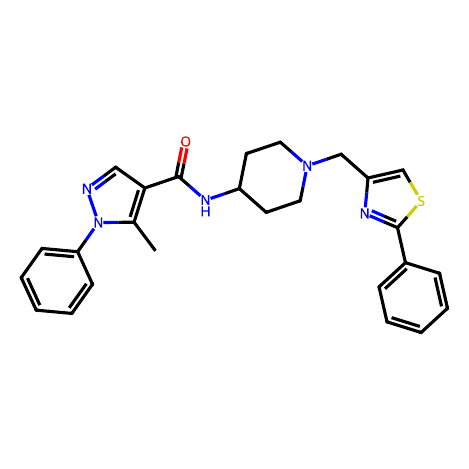}
    \end{subfigure} \hfill
    \begin{subfigure}[c]{0.19\textwidth}
        \includegraphics[width=\textwidth]{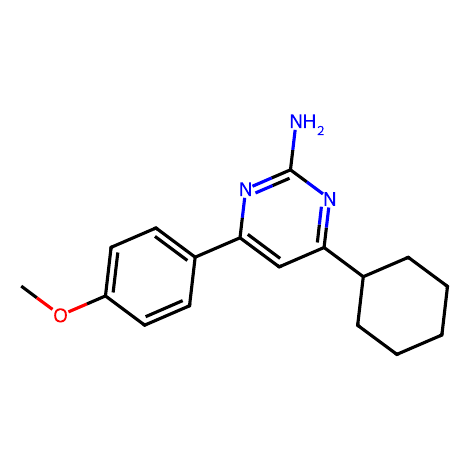}
    \end{subfigure}
    \caption{Uncurated samples from \method model trained on GuacaMol.}
    \label{fig:guacamol-samples}
\end{figure}

\section{Sampling Complexity of \method}
\label{appendix:complexity-and-first-order-variant}
\subsection{Complexity Analysis}
\label{appendix:complexity-analysis}
In the following, we analyze the asymptotic runtime complexity of sampling a graph from our proposed model and the baselines we studied in Section~\ref{sec:experiments}.
\begin{proposition}
The asymptotic runtime complexity for sampling a graph with $N$ nodes from an \method with $T$ timesteps is:
\begin{equation}
    \mathcal{O}(T^2N + TN^3)
\end{equation}
\end{proposition}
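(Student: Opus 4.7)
The plan is to decompose one autoregressive sampling pass into its constituent per-layer operations and then sum the per-step costs over the $T$ filtration steps, paying careful attention to which pieces can be amortized via KV-caching across time. The relevant building blocks of \method, as introduced in Sec.~\ref{subsec:generative-model}, are: (i) the structural mixing layers, implemented as Structure-Aware-Transformer blocks on the current (possibly dense) graph $\tilde G_t$ on $N$ nodes; (ii) the temporal mixing layers, implemented as a causal transformer decoder attending across previously generated time steps; (iii) the edge decoder producing a $K$-component mixture of multivariate Bernoullis over the $\binom{N}{2}$ node pairs; and (iv) the initial node-feature computation, which includes the cycle-count and eigenvector-based structural features used to condition the GNN.

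First I would analyze the per-step cost at time $t$. The structural mixing sees a graph on $N$ nodes that may have up to $\Theta(N^2)$ edges, so one GNN/SAT layer costs $\mathcal{O}(N^2)$ for attention; the dominant per-step contribution comes from the structural feature precomputation used by this layer, in particular counting short cycles and/or spectral features on $\tilde G_t$, which can be performed in $\mathcal{O}(N^3)$ by dense linear algebra on the adjacency matrix. Hence the per-step structural cost is $\mathcal{O}(N^3)$. The edge decoder evaluates one small MLP $D_{k,\theta}$ per node pair and mixture component, contributing $\mathcal{O}(K N^2) = \mathcal{O}(N^2)$, and sampling from the mixture of Bernoullis is also $\mathcal{O}(N^2)$. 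Summing over the $T$ steps, these components contribute $\mathcal{O}(T N^3)$.

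Next I would handle the temporal mixing, which is the source of the $T^2 N$ term. Because the transformer decoder is causal and weight-shared across nodes, at time step $t$ the new token for each of the $N$ nodes attends to the $t$ previously computed tokens of that node. Using KV caching, the incremental cost of processing time step $t$ across all nodes is $\mathcal{O}(t N)$ per layer, so the cumulative cost over all $T$ steps is $\sum_{t=1}^{T} \mathcal{O}(t N) = \mathcal{O}(T^2 N)$ per temporal layer, and $\mathcal{O}(T^2 N)$ overall since the number of layers is a constant. Note that without KV caching one would naively obtain $\mathcal{O}(T^3 N)$, so it is important to argue that caching is applicable here; this follows from the causal structure noted in Sec.~\ref{subsec:generative-model}, which guarantees that tokens at earlier times do not depend on $\tilde G_t$.

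Finally I would combine the pieces: the total cost is $\mathcal{O}(T^2 N) + \mathcal{O}(T N^3) = \mathcal{O}(T^2 N + T N^3)$, as claimed. The main obstacle I anticipate is pinning down the correct per-step cost of the structural mixing block: one has to argue (a) that the $\mathcal{O}(N^3)$ cycle-count / Laplacian-based feature computation dominates the $\mathcal{O}(N^2)$ attention and message-passing costs, and (b) that this computation must indeed be redone at every step because $\tilde G_t$ changes, whereas the temporal block's KV cache can be reused. Everything else is a routine aggregation of per-layer costs, and the final bound drops out by summation.
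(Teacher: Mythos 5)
Your proposal is correct and follows essentially the same decomposition as the paper's proof: an $\mathcal{O}(N^3)$ per-step cost from the spectral/structural feature computation, $\mathcal{O}(N^2)$ for the SAT self-attention, and an incremental temporal-attention cost of $\mathcal{O}(tN)$ at step $t$ summing to $\mathcal{O}(T^2N)$. Your treatment is in fact slightly more careful than the paper's, since you make the KV-caching assumption behind the $\mathcal{O}(N(t+1))$ per-step temporal cost explicit and also account for the edge decoder's $\mathcal{O}(N^2)$ contribution.
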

\begin{proof}
    To sample a graph from an \method, one has to perform $T$ forward passes through our proposed mixer architecture. These forward passes are preceded by the computation of various graph features, including Laplacian eigenvalues and eigenvectors. This eigendecomposition has complexity $\mathcal{O}(N^3)$. At timestep $0 \leq t < N$, the structural mixing layers have complexity $\mathcal{O}(N^2)$ due to the self-attention component of SAT. The temporal mixing layers, on the other hand, have complexity $\mathcal{O}(N(t+1))$, as each node attends to its representations at timesteps $0, \dots, t$. We bound this complexity by $\mathcal{O}(NT)$. Hence, aggregating these complexities across all $T$ timesteps, we obtain the following runtime complexity:
    \begin{equation}
        \mathcal{O}(T^2N + TN^2 + TN^3) = \mathcal{O}(T^2N +  TN^3)
    \end{equation}
\end{proof}

Below we show that the asymptotic complexity of \method differs from the complexity of DiGress only in the quadratic term w.r.t. $T$: 
\begin{proposition}
The asymptotic runtime complexity for sampling a graph with $N$ nodes from a DiGress model with $T$ denoising steps is:
\begin{equation}
    \Omega(TN^3)
\end{equation}
\end{proposition}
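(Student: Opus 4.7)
The plan is to identify a subroutine that DiGress must perform at each of its $T$ denoising steps whose runtime is $\Omega(N^3)$, then sum this cost across the $T$ iterations.

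At every denoising step, DiGress augments the current noisy graph with additional structural features before passing it to its graph transformer backbone. These features, as described in the DiGress paper (and recalled in Sec.~\ref{subsec:generative-model} of the present work when motivating the use of cycle counts), include eigenvalues and eigenvectors of the graph Laplacian together with cycle counts of several lengths. Because the noisy graph evolves throughout the reverse process, these features cannot be cached across steps and must be recomputed from scratch at each of the $T$ iterations.

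Each such recomputation is already $\Omega(N^3)$: a dense symmetric eigendecomposition of the $N \times N$ Laplacian costs $\Omega(N^3)$, and independently, computing cycle counts of length three via $\Tr(A^3)$ requires at least one dense $N \times N$ matrix product and is likewise $\Omega(N^3)$. Either observation is sufficient to lower-bound the cost of a single denoising step by $\Omega(N^3)$, and summing over the $T$ steps immediately yields the stated $\Omega(TN^3)$ bound. As an entirely independent route, one can also note that the graph transformer itself attends over an $N \times N$ dense edge-feature tensor at every step, giving an unconditional $\Omega(TN^2)$ contribution; combined with one $\Omega(N^3)$ feature-extraction operation per step this reproduces the same conclusion.

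The main subtlety I anticipate is that the $\Omega(N^3)$ lower bound on eigendecomposition is not unconditional: iterative solvers such as Lanczos can in principle extract a small number of eigenvectors in subcubic time for sparse matrices. To avoid this complication I would anchor the argument to the concrete feature-extraction routine used in the reference DiGress implementation, which performs a full dense eigendecomposition and evaluates matrix powers for the cycle counts, so that the per-step cost is unambiguously cubic and the aggregate $\Omega(TN^3)$ bound follows at once.
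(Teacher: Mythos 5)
Your proof takes essentially the same route as the paper, which likewise argues that DiGress performs an eigendecomposition of the graph Laplacian at each of the $T$ denoising steps, costing $\Omega(N^3)$ per step and hence $\Omega(TN^3)$ overall. Your additional observations (the cycle-count route via $\Tr(A^3)$ and the caveat about iterative eigensolvers, resolved by anchoring to the dense routine in the reference implementation) are careful refinements the paper's one-line proof does not spell out, but the core argument is identical.
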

\begin{proof}
    Similar to \method, DiGress performs an eigendecomposition of the graph laplacian in each denoising step. Hence, one obtains a complexity of $\Omega(N^3)$ in each timestep, resulting in an overall complexity of $\Omega(TN^3)$.
\end{proof}

We further analyze the asymptotic complexities of our other baselines of Sec.~\ref{subsec:large-synthetic} and Sec.~\ref{subsec:real-world-exp}.
\begin{proposition}
    The asymptotic runtime complexity for sampling a graph with $N$ nodes from a GRAN model is $\Omega(N^2)$.
\end{proposition}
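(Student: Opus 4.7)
The plan is to derive the $\Omega(N^2)$ lower bound from the structure of GRAN's generation process, without needing to analyze its attention layers in detail. The key observation is that GRAN generates an adjacency matrix in a block-wise autoregressive fashion: given a block size $B$, the procedure performs $N/B$ generation steps, and at the $i$-th step it samples the edges incident to a new block of $B$ nodes, both among themselves and to the $iB$ previously generated nodes.

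First, I would recall the sampling procedure precisely (citing the block-wise construction in the GRAN paper), and note that each candidate edge requires at least a constant amount of work because GRAN parametrizes its edge distribution as a mixture of multivariate Bernoullis over those edges: reading the mixture parameters and sampling each Bernoulli is $\Omega(1)$ per edge. Therefore the runtime is bounded below by the total number of candidate edges GRAN has to instantiate over the whole sampling trajectory.

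Next, I would count those candidate edges. At step $i \in \{0,\dots,N/B-1\}$, the new block contributes $B \cdot iB + \binom{B}{2}$ candidate edges (inter-block and intra-block). Summing,
\begin{equation*}
    \sum_{i=0}^{N/B-1} \left( iB^2 + \binom{B}{2} \right) = B^2 \cdot \frac{(N/B)(N/B-1)}{2} + \frac{N}{B}\binom{B}{2} = \Theta(N^2),
\end{equation*}
which already yields the claimed lower bound $\Omega(N^2)$, independent of $B$ (treated as a constant).

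Finally, I would briefly remark that this is only a lower bound: GRAN additionally runs a GNN with attention at every step, so the true complexity is larger, but the $\Omega(N^2)$ bound suffices to make the comparative statement in the paper — that GRAN is asymptotically more efficient than \method in $N$ but lacks the $T$ factor — rigorous. I do not expect any real obstacle here; the only subtlety is making sure the lower bound is attributed to a step that is unavoidable in GRAN's sampling (the edge sampling itself), rather than to any particular implementation choice such as the attention computation, so that the $\Omega$ bound is valid regardless of how efficiently the neural network is implemented.
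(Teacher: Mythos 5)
Your proof is correct and rests on the same observation as the paper's one-line argument, namely that GRAN must instantiate all $\Theta(N^2)$ candidate edges (the paper phrases this simply as ``GRAN explicitly constructs a dense adjacency matrix with $N^2$ entries''). Your block-wise summation and the remark that the bound is attributable to the unavoidable edge-sampling step, rather than to the network implementation, are a more careful derivation of the same fact.
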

\begin{proof}
    GRAN explicitly constructs a dense adjacency matrix with $N^2$ entries.
\end{proof}

\begin{proposition}
    The asymptotic runtime complexity of sampling a graph with $N$ nodes and $M$ edges from an ESGG model is $\Omega(N + M)$.
\end{proposition}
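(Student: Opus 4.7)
The plan is to establish the bound via a simple output-size argument, since $\Omega(N+M)$ is precisely the cost of writing down the output. First, I would observe that the sampled graph must ultimately be returned in some explicit data structure --- adjacency list, edge list, or sparse adjacency matrix --- each of which has representation size $\Theta(N+M)$. Populating any such representation requires at least one elementary write per node and per edge, so any sampling algorithm emitting this output incurs runtime $\Omega(N+M)$, independent of the underlying model. This is a universal graph-generation lower bound.

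Next, I would verify that ESGG \citep{bergmeister2024efficientscalable} indeed materializes an explicit graph rather than some compressed implicit representation. By inspecting its hierarchical architecture, one sees that sampling proceeds by iteratively expanding node clusters and predicting incident edges at each refinement level. At the final level, each of the $N$ output nodes is produced by an expansion operation and each of the $M$ output edges is the result of a Bernoulli sample, so the generative process explicitly touches $\Omega(N)$ nodes and $\Omega(M)$ edges. Summing these contributions yields the claimed $\Omega(N+M)$ lower bound.

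The main obstacle is conceptual rather than technical: one must fix a precise notion of ``elementary operation'' and confirm that ESGG's sampling pipeline is not secretly returning a structured, compressed object whose decoding is deferred. Under the standard model in which the sampler must output an enumerable graph with $N$ addressable nodes and $M$ addressable edges, this step is immediate and the proposition follows. No architecture-specific analysis of the diffusion steps or GNN passes is required, since we are only after a lower bound.
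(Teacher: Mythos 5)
Your proof is correct and uses the same idea as the paper: the bound is the trivial output-size lower bound that any generative model must satisfy, since representing a graph with $N$ nodes and $M$ edges already requires $\Omega(N+M)$ space. The extra verification that ESGG materializes an explicit graph is a reasonable addition but not needed for the paper's (one-sentence) argument.
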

\begin{proof}
This bound should trivially be satisfied by any generative model, as one already needs $\Omega(N + M)$ bits to represent a graph with $M$ edges on $N$ nodes. We refer to~\citep{bergmeister2024efficientscalable} for a discussion on how tight this bound is.
\end{proof}

In Table~\ref{tab:runtime-complexities}, we summarize these asymptotic complexities. 
\begin{table}[ht]
    \centering
    \caption{Asymptotic runtime complexities for sampling from different graph generative models.}
    \begin{tabular}{l|l}
        \toprule
        Method & Sampling complexity \\
        \midrule
        \method & $\mathcal{O}(T^2N + TN^3)$ \\
        DiGress & $\Omega(TN^3)$ \\
        GRAN & $\Omega(N^2)$ \\
        ESGG & $\Omega(N + M)$ \\
        \bottomrule        
    \end{tabular}
    \label{tab:runtime-complexities}
\end{table}
While this analysis may suggest that \method does not scale well to extremly large graphs, we caution the reader that the asymptotic behavior may not accurately reflect efficiency in practice: Firstly, multiplicative constants and lower-order terms are ignored. Hence, it remains unknown in which regimes the asymptotic behavior governs inference time. Secondly, the analysis was made under the assumption that hyper-parameter choices (i.e. depth, width, etc.) is kept constant as $N$ and $M$ increase. It is reasonable to expect that more expressive networks are required to model large graphs.

\subsection{A First-Order Autoregressive Variant}
\label{appendix:first-order-autoregressive-variant}
As we demonstrated in Appendix~\ref{appendix:complexity-analysis}, the runtime of \method is quadratic in the number of generation steps $T$ due to the temporal mixing operations which are implemented as transformer decoder layers. Analogously, one may verify that the memory complexity of sampling from \method is linear in $T$. In this subsection, we study a simplified variant of \method in which we use a first-order autoregressive structure. I.e., we enforce:
\begin{equation}
    p_\theta(\tilde{G}_{t+1} | \tilde{G}_t, \dots, \tilde{G}_0) = p_\theta(\tilde{G}_{t+1} | \tilde{G}_t)
\end{equation}
We implement this by ablating the causally masked self-attention mechanism from the transformer layers in our mixer model, leaving only the feed-forward modules.
The resulting first-order variant of \method has space complexity which is independent of $T$ and runtime complexity which is linear in $T$. 

We train such a first-order variant of \method on the expanded planar graph dataset, using the line Fiedler filtration strategy and the same hyperparameters as for the transformer-based variant (see Appendix~\ref{appendix:hyper-parameters}). Using the first-order variant, we observe training instabilities after the first 100k training steps of stage I. While reducing the learning rate rectifies this instability, we find that this slows learning progress substantially. Instead, we use a model checkpoint at 100k steps and continue with training stage II.  

In Table~\ref{tab:first-order-variant-stage1}, we compare the performance of the transformer-based and the first-order variants after 100k steps of stage I training. In Table~\ref{tab:first-order-variant-stage2}, we compare the performance after the subsequent stage II training. While we perform only 100k training steps in stage I for the first-order variant, we perform 200k training steps for the transformer-based variant, as it did not exhibit instabilities. 
\begin{table}[ht]
    \centering
    \caption{Performance of two \method variants on the expanded planar graph dataset after 100k steps of stage I training. Showing median across three runs for the transformer-based variant and a single run for the first-order variant. All models reach perfect uniqueness and novelty scores.}
    \begin{tabular}{l|cccccccc}
    \toprule
      & VUN ($\uparrow$)  &  Deg. ($\downarrow$) & Clus. ($\downarrow$) & Orbit ($\downarrow$) & Spec. ($\downarrow$)  \\ 
      \midrule
      Transformer & \formatpercent{0.2021484375} & \roundtofour{0.005753176855113784} & \roundtofour{0.176845325553082} & \roundtofour{0.012890572283490664} & \roundtofour{0.004760004557846642}   \\
      First-Order & \formatpercent{0.056640625} & \roundtofour{0.0003839229434474678} & \roundtofour{0.17823282020897635} & \roundtofour{0.004099574877904022} & \roundtofour{0.0034954185302351615}  \\
      \bottomrule
\end{tabular}

    \label{tab:first-order-variant-stage1}
\end{table}
\begin{table}[ht]
    \centering
        \caption{Performance of two \method variants on the expanded planar graph dataset after stage II training. The transformer-based variant was trained for 200k steps in stage I while the first-order variant was trained for 100k steps in stage I. Showing median across three runs for the transformer-based variant and a single run for the first-order variant. All models reach perfect uniqueness and novelty scores.}
        \begin{tabular}{l|cccccccc}
    \toprule
      & VUN ($\uparrow$)  &  Deg. ($\downarrow$) & Clus. ($\downarrow$) & Orbit ($\downarrow$) & Spec. ($\downarrow$) & Time ($\downarrow$) \\ 
      \midrule
      Transformer & \formatpercent{0.7919921875}& \roundtofour{0.00038328081648231205} & \roundtofour{0.018315733274652524} & \roundtofour{0.0001880988026623509} & \roundtofour{0.0011980208600188558}  & 0.0278\\
      First-Order & \formatpercent{0.7001953125}  & \roundtofour{0.0004015469573179775} & \roundtofour{0.022916210728036512} & \roundtofour{0.0045691410712662694} & \roundtofour{0.0013480263444636265} & \roundtofour{0.024701657} \\
      \bottomrule
\end{tabular}

    \label{tab:first-order-variant-stage2}
\end{table}

Generally, we observe that the transformer-based \method variant slightly outperforms the first-order variant in terms of quality. However, the first-order variant remains competitive after stage II trainig and, thus, may be a suitable alternative in cases where a large $T$ is chosen. In our setting ($T=30$), however, we find that the first-order variant is not substantially faster during inference, indicating that the runtime is not governed by the quadratic complexity in $T$.

\section{A Bound on Model Evidence in \method}
\label{appendix:a-bound-on-model-evidence}
Given a graph $G_T$, let $q(G_{T-1}, \dots G_1 | G_T)$ be the data distribution over noisy filtrations of this graph, determined by the choice of filtration function, scheduling, and noise augmentation. We assume that $G_0$ is deterministically the completely disconnected graph. Moreover, we note that by applying our noise augmentation strategy we ensure that $q$ is supported everywhere. Given some graph $G_T$, we can now derive the following evidence lower bound:
\begin{equation}
\begin{aligned}
\log p_\theta(G_T)
&=&& \log \sum_{G_1, \dots, G_{T-1} \in \mathcal{G}} p_\theta(G_T, \dots, G_0) \\ 
&=&& \log \sum_{G_1, \dots, G_{T-1} \in \mathcal{G}} q(G_{T-1}, \dots, G_1 | G_T) \frac{p_\theta(G_T, \dots, G_0)}{q(G_{T-1}, \dots, G_1 | G_T)} \\
&\geq&& \mathbb{E}_{(G_{T-1}, \dots, G_1) \sim q(\,\cdot\, | G_T)} \left[\log \frac{p_\theta(G_T, \dots, G_0)}{q(G_{T-1}, \dots, G_1 | G_T)}\right]  \\
&=&& \mathbb{E}_{(G_{T-1}, \dots, G_1) \sim q(\,\cdot\, | G_T)}\biggl[\sum_{t=1}^{T} \log p_\theta(G_{t} | G_{t - 1}, \dots, G_0) - \log q(G_{T-1}, \dots G_1 | G_T)\biggr]
\end{aligned}
\end{equation}
We note that this lower bound is (up to sign and a constant that does not depend on $\theta$) exactly the autoregressive loss we use in training stage I. Hence, while we train \method to model sequences of graphs, we do actually optimize an evidence lower bound for the final graph samples $G_T$.  

\section{Hyperparameters}
\label{appendix:hyper-parameters-and-advice}
\subsection{Practical Advice on Hyperparameter Choice}
In the following, we provide some practical advice on choosing some of the most important hyper-parameters in \method. Generally, we tuned few hyper-parameters in our experiments. We found the number of generation steps $T$ to be one of the most impactful hyper-parameters.

\paragraph{Filtration Function.} The filtration function $f: E \to \R$ is the main component determining the structure of the graph sequence during stage I training. We recommend that $f$ should convey meaningful information about the structure and assign (mostly) distinct values to distinct edges. We note that if $f$ fails to assign distinct values to edges, many edges may be added in a single generation step, regardless of the choice of $T$. We found both the edge Fiedler function and the DFS filtrations to perform well in many settings. We recommend that practitioners utilize these filtration strategies and perform experiments with further filtration functions that incorporate domain-specific inductive biases. In the case of generating protein graphs, for instance, one may consider a filtration function that quantifies the distance of residues in the sequence (this filtration would first generate a backbone path, followed by increasingly long-range interactions of residues).

\paragraph{Filtration Granularity.} As we demonstrate in Sec.~\ref{subsec:ablations}, the choice of the number of generation steps $T$ has a substantial influence on sampling efficiency and generation quality. Generally, $T$ can be chosen substantially smaller than in other autoregressive models. In our experiments, we chose $T \leq 32$. We recommend that practitioners experiment with different values in this order of magnitude. We further caution that increasing $T$ does not necessarily improve sample quality and may actually harm it.

\paragraph{Scheduling Function.} The filtration schedule governs the rate at which edges are added at different timesteps. In the case of the line Fiedler filtration function, it is determined by $\gamma$, which should be monotonically increasing with $\gamma(0) = 0$ and $\gamma(1) = 1$. We found the heuristic choice of $\gamma(t) := t$ to work well in many settings. However, as we demonstrate in Appendix~\ref{appendix:additional-ablations}, the concave schedule may be a promising alternative. We recommend that practitioners validate stage I training with a convex, linear, and concave scheduling function. We note that the scheduling function is no longer used during stage II training, as the model is left free to generate arbitrary intermediate graphs. Hence, performance after stage I training may be a suitable metric for selecting a scheduling function.

\paragraph{Noise Augmentation.} We use noise augmentation during training stage I to counteract exposure bias, i.e. the accumulation of errors in the sampling trajectory. Manual inspection of the graph sequence $\tilde{G}_0, \dots, \tilde{G}_T$ may be difficult. However, we found that inspecting the development of the edge density over this graph sequence can provide a simple tool for diagnosing exposure bias. In models that do not utilize noise augmentation, we can observe that, after some generation steps, the edge density oftentimes deviates from its expected behavior (e.g. by suddenly increasing or dropping substantially). In this case we expect that noise augmentation can rectify exposure bias. In our experiments, we find that we do not need to tune the noise schedule. Instead, we fix a single schedule that is shared across all models. For details on this schedule, we refer to Appendix~\ref{appendix:hyper-parameters}.

\paragraph{Perturbation of Node Orderings.} During training stage I, \method variants using the line Fiedler filtration function may overfit on small datasets. This manifests as an increase in validation loss, while the validation MMD metrics continue to improve. We observe this behavior only on the small datasets in Sec.~\ref{subsec:small-synthetic} and find that it can be mostly attributed to the node ordering used to derive initial node representations (c.f. Appendix~\ref{appendix:node-individualization}). 
We recommend to monitor validation losses during stage I training. If the validation loss starts to slowly increase while the training loss continues to decrease, we recommend to randomly perturb the node ordering, as described in Appendix~\ref{appendix:node-individualization}. The noise scale $\sigma$ should be increased until no over-fitting can be observed. We find that the DFS filtration strategy is less prone to overfitting, as DFS node orderings are not unique. Hence, we may include many distinct filtrations of a single graph $G_T$ in our training set.

\subsection{\method Hyperparameters}
\label{appendix:hyper-parameters}
In Tables~\ref{tab:hyper-parameters} and~\ref{tab:hyper-parameters-dfs}, we summarize the most important hyperparameters of the generative model used in our experiments, including the number of filtration steps ($T$), mixture components ($K$), learning rate (LR), batch size (BS) in the format $\texttt{num\_gpus}\times\texttt{grad\_accumulation}\times\texttt{local\_bs}$, and the number of perturbed filtration sequences we produce per graph in our training set (\# Perturbations).
\begin{table}[htp]
    \centering
    \scriptsize
    \caption{Hyper-parameters for line Fiedler variant of \method.}
    \resizebox{\textwidth}{!}{              %
\begin{tabular}{l|cccccc}
    \toprule
    & SPECTRE Planar & SPECTRE SBM & Expanded Planar & Expanded SBM & Expanded Lobster & Protein \\
    \midrule
    $T$ & 30 & 15 & 30 & 15 & 30 & 15 \\
    $K$ & 8 & 4 & 8 & 4 & 8 & 16 \\
    \# Layers & \multicolumn{6}{c}{5} \\
    Hidden Dim & \multicolumn{6}{c}{256} \\
    Laplacian PE dim. & \multicolumn{6}{c}{4}\\
    RWPE dim. & \multicolumn{6}{c}{20}\\
    Noise Augm. &\multicolumn{6}{c}{$\lambda_t$ affine with $\lambda_1=0.25$ and $\lambda_{T-1}=0.05$} \\
    \# Perturbations & 256 & 256 & 4 & 4 & 4 & 8 \\
    Perturb Node Order & Yes & Yes & No & No & No & No \\
    Stg. I LR & $2.5 \times 10^{-5}$ & $1 \times 10^{-5}$ & $2.5 \times 10^{-5}$ & $1 \times 10^{-5}$ & $1 \times 10^{-5}$ & $1 \times 10^{-5}$ \\
    Stg. I $\ell^2$ grad. clip & \multicolumn{6}{c}{None} \\
    Stg. I BS & $2 \times 1 \times 32$ & $2 \times 1 \times 32$ & $2 \times 1 \times 32$ & $2 \times 1 \times 32$ & $2 \times 1 \times 32$& $2\times 4 \times 8$ \\
    \# Stg. I Steps & 50k & 100k & 200k & 200k & 100k & 100k \\
    Stg. I Precision & \multicolumn{6}{c}{BF16 AMP} \\
    Stg. II LR & \multicolumn{6}{c}{\sci{0.000000125}}  \\
    Stg. II BS & $1 \times 4 \times 32$ &  $1 \times 4 \times 32$ &  $1 \times 4 \times 32$ &  $1 \times 4 \times 32$ &  $1 \times 4 \times 32$ & $1 \times 16 
    \times 8$\\
    \# Stg. II Iters & 2.5k & 3k & 1.5k & 5k & 4k & 1.5k \\
    \bottomrule
\end{tabular}
}

    \label{tab:hyper-parameters}
\end{table}
\begin{table}[htp]
    \centering
    \scriptsize
    \caption{Hyper-parameters for DFS variant of \method. Parameters that do not appear in this table exactly match the corresponding parameters for the line Fiedler variant, c.f. Table~\ref{tab:hyper-parameters}.}
    \resizebox{\textwidth}{!}{              %
\begin{tabular}{l|cccccc}
    \toprule
    & SPECTRE Planar & SPECTRE SBM & Expanded Planar & Expanded SBM & Expanded Lobster & Protein \\
    \midrule
    $T$ & 32 & 15 & 32 & 15 & 30 & 15 \\
    \# Perturbations & 1,024 & 512 & 32 & 16 & 32 & 16 \\
    Perturb Node Order &\multicolumn{6}{c}{No} \\
    Stg. I LR & $1 \times 10^{-4}$ & $5 \times 10^{-5}$ & $1 \times 10^{-4}$  & $5 \times 10^{-5}$ & $5 \times 10^{-5}$ & $1 \times 10^{-5}$ \\
    Stg. I $\ell^2$ grad. clip & 75 & 250 & 75 & 250 & 75 & 75\\
    \# Stg. I Steps & 200k & 200k & 200k & 200k & 100k & 200k \\
    \# Stg. II Iters & 1k  & 1k & 1.5k & 5k & 4k & 530 \\
    \bottomrule
\end{tabular}
}

    \label{tab:hyper-parameters-dfs}
\end{table}

In Table~\ref{tab:hyper-parameters-gan}, we additionally provide the most important hyperparameters of the discriminator and value model trained during the adversarial fine-tuning stage.
\begin{table}[htp]
    \centering
    \scriptsize
    \caption{Hyper-parameters of discriminator and value model used during adversarial fine-tuning.}
    \begin{tabular}{ll|cccccc}
    \toprule
     & & SPECTRE Planar & SPECTRE SBM & Expanded Planar & Expanded SBM & Expanded Lobster & Protein \\
     \midrule
     \parbox[t]{2mm}{\multirow{5}{*}{\rotatebox[origin=c]{90}{Disc.}}}
 & LR & \multicolumn{6}{c}{\sci{0.0001}}\\
  & BS & \multicolumn{6}{c}{$1\times 1\times 32$}\\
     & \# Layers & 2 & 3 & 3 & 3 & 3 & 2  \\
     & Hidden dim. & 32 & 128 & 128 & 128 & 128 & 64 \\
     & RWPE dim. & 5 & 20 & 20 & 20& 20& 20\\
     \midrule
     \parbox[t]{2mm}{\multirow{3}{*}{\rotatebox[origin=c]{90}{Val.}}}
 & LR & \multicolumn{6}{c}{\sci{0.00025}}\\
 & BS & \multicolumn{6}{c}{$1 \times 4 \times 32$} \\
     & \# Layers & \multicolumn{6}{c}{5}\\
     & Hidden dim. & \multicolumn{6}{c}{128}\\
     \bottomrule
\end{tabular}

    \label{tab:hyper-parameters-gan}
\end{table}

\section{Details on Architecture}
\label{appendix:architecture}
\subsection{Input Node Representations}
\label{appendix:node-individualization}
We define the input node representations as:
\begin{equation}\label{eq:node_indivisualization}
    v_i^{(t)} := f_\theta(G_t)_i + W_i^{\mathrm{node}},
\end{equation}
where $f_\theta$ produces node features from Laplacian positional encodings~\citep{dwivedi2023benchmarking}, random walk encodings~\citep{dwivedi2022randomwalkpe}, and cycle counts following DiGress~\citep{vignac2023digress}. 
For precise definitions of these features, we refer to Appendix~\ref{appendix:pse}.
The matrix $W^\mathrm{node} \in \R^{N \times D}$ is a trainable embedding layer where $N$ denotes the cardinality of the largest vertex set seen during training. 
It is important to note that the computation of input node representations requires a specific node ordering to index $W^{\mathrm{node}}$. While the permutation equivariance of our model and the symmetry of the initially empty graph $G_0$ allow for arbitrary ordering during inference, we employ a structured approach during teacher-forcing training. This ordering is derived from the structure of the graph $G_T$ and depends on the filtration strategy.

\paragraph{Node Ordering for DFS Variant.} The filtration function of the DFS variant is derived from a DFS node ordering of $G_T$. Hence, we simply use this node ordering to assign positional embeddings.

\paragraph{Node Ordering for Line Fiedler Variant.} For the line Fiedler variant, we propose a node weighting scheme $h: V \rightarrow \mathbb{R}$ defined as:
\begin{equation}
    h(i) := \frac{1}{|\mathcal{N}_G(i)|} \sum_{j \in \mathcal{N}_G(i)} f(e_{ij}), \qquad \forall \: i \in V,
\end{equation}
where $\mathcal{N}_G(i)$ represents the neighborhood of node $i$ in $G$. This weighting assigns to each node the average weight of its incident edges, as determined by the filtration function $f$. We then establish a node ordering such that $h$ is non-increasing. The impact of different ordering strategies on model performance is further studied and compared in Appendix~\ref{appendix:additional-ablations}.

When training on small datasets, such as those introduced by~\citet{martinkus2022spectre}, we find that the node individualization in~\eqref{eq:node_indivisualization} can lead to overfitting. This manifests as an increase in validation loss, while the evaluation metrics (i.e. MMD and VUN) continue to improve. As a data augmentation strategy to avoid overfitting, we propose to add Gaussian noise to the node weights $h_G$ defined in~\eqref{eq:node_indivisualization} when training on small datasets. I.e., we use the perturbed node weights 
\begin{equation}
    h_G(s) + \mathcal{N}(0, \sigma^2)
\end{equation} 
for sorting the nodes. We emphasize that this measure is independent of the perturbation of intermediate graphs introduced in Sec.~\ref{subsec:noise-augmentation-of-filtrations}. Moreover, we perturb node orderings only in the experiments on the small SPECTRE datasets (i.e., in Sec~\ref{subsec:small-synthetic}).

\subsection{Positional and Structural Encodings}
\label{appendix:pse}
In this section, we describe the positional and structural encodings ($\mathrm{PSE}$) we use to construct $f_\theta(G_t)$. The positional and structural encodings are obtained by stacking Laplacian positional embeddings, random walk encodings, and cycle counts: $\mathrm{PSE} := \mathrm{LPE} \;\Vert\; \mathrm{RWE} \;\Vert\; \mathrm{CC} \in \R^{N \times d}$. In the following, we describe how each of the three components is computed. Let $A$ be the adjacency matrix of $G_t$ and let $D \in \R^{N \times N}$ be the diagonal matrix containing the node degrees.

\paragraph{Laplacian Positional Encoding.} For $k>0$, the $k$-dimensional Laplacian positional encoding is defined as the first $k$ orthonormal eigenvectors with non-zero eigenvalues of the symmetrically normalized Laplacian matrix $L \in \R^{N \times N}$:
\begin{equation}
    L_{i, j} := \begin{cases}
1, & \text{if } i=j \text{ and } d_i \neq 0,\\[6pt]
-\dfrac{1}{\sqrt{D_{ii} D_{jj}}}, & \text{if } i \neq j \text{ and } A_{ij} = 1,\\[10pt]
0, & \text{otherwise.}
\end{cases}
\end{equation}
These eigenvectors are stacked into the matrix $\mathrm{LPE_1} \in \R^{N \times k}$. In addition, we compute the corresponding eigenvalues $v \in \R^k$ and replicate them across nodes: $\mathrm{LPE_2} := \mathbbm{1}_Nv^\top \in \R^{N \times k}$. The final Laplacian positional encoding is obtained by stacking eigenvectors and replicated eigenvalues: $\mathrm{LPE} := \mathrm{LPE_1} \;\Vert\; \mathrm{LPE_2} \in \R^{N \times 2k}$.

\paragraph{Random Walk Encoding.} For $k>0$, the $k$-dimensional random walk structural encoding of vertex $i \in \{1,\dots,N\}$ is defined as the vector
\begin{equation}
    \begin{bmatrix}
\operatorname{RW}_{i i} & \operatorname{RW}^2_{ii} & \dots & \operatorname{RW}^k_{ii}
\end{bmatrix}^\top \in \R^k
\end{equation}
where $\operatorname{RW} := AD^{-1} \in \R^{N\times N}$ is the random walk operator. The random walk encodings of all nodes form the random walk encoding $\mathrm{RWE} \in \R^{N \times k}$.

\paragraph{Cycle Counts.} For each vertex, we count the number of distinct 3- and 4-cycles the vertex participates in. Additionally, we count the total number of 3-, 4-, 5-, and 6-cycles contained in the graph. These counts are computed via closed-form formulas from~\citet{vignac2023digress}. We refer to the published code and Appendix~B2 of~\citet{vignac2023digress} for the exact formulas. Graph-level counts are replicated across nodes analogous to the Laplacian eigenvalues and stacked with node-level counts, producing $\mathrm{CC} \in \R^{N \times 5}$.

\subsection{Edge Decoder Architecture}
\label{appendix:edge-decoder}
In this subsection, we present details on the edge decoder $D_{\cdot, \theta}$. While our approach is in principle applicable to discretely labeled edges, we concentrate on predicting distributions over unlabeled edges here. Fix some timestep $0 \leq t < T$. Assume that for this timestep, we are given some node representations $(v_i)_{i=1}^{|V|}$ produced by the backbone model. The edge decoder contains $K$ submodules that produce multivariate Bernoulli distributions. Assuming that the node-representations produced by the backbone are $D$-dimensional, let $\operatorname{Dense}_{k, \theta}^{(1)}: \R^D \to \R^{2D}$ and $\operatorname{Dense}_{k, \theta}^{(2)}, \operatorname{Dense}_{k, \theta}^{(3)}: \R^{2D} \to \R^{2D}$ be fully connected layers learned for each component $k$. Define corresponding MLPs:
\begin{equation}
    \operatorname{MLP}_{k, \theta} := \operatorname{ReLU} \circ \operatorname{Dense}^{(2)}_{k, \theta} \circ \operatorname{ReLU} \circ \operatorname{Dense}^{(1)}_{k, \theta} 
\end{equation}
For each $k$, we process the node representations $v_i$ separately and split the resulting vectors into two $D$-dimensional halves:
\begin{equation}
    (x_{i}^{(k)}, y_{i}^{(k)}) := \operatorname{MLP}_{k, \theta}(v_i) \qquad  (\hat x_{i}^{(k)}, \hat y_{i}^{(k)}) := \operatorname{Dense}^{(3)}_k\left((x_{i}^{(k)}, y_{i}^{(k)})\right)
\end{equation}
We define the logit $l_{k, i, j}$ for the presence of an edge and the logit $r_{k, i, j}$ for the absence of an edge:
\begin{equation}
    l_{k, i, j} := \frac{{x_{i}^{(k)}}^\top \hat x_{j}^{(k)} + {x_{j}^{(k)}}^\top \hat x_{i}^{(k)}}{2}    \qquad r_{k, i, j} := \frac{{y_{i}^{(k)}}^\top \hat y_{j}^{(k)} + {y_{j}^{(k)}}^\top \hat y_{i}^{(k)}}{2}
    \label{eq:symmetrize}
\end{equation}
Finally, we define the likelihood of the presence of an edge as:
\begin{equation}
    D_{k, \theta}(v_i, v_j) := \frac{\exp(l_{k, i, j})}{\exp(l_{k, i, j}) + \exp(r_{k, i, j})}
\end{equation}
While this modeling of the mixture distributions is quite involved, it allows the edge decoder to be easily extended to produce distributions over labeled edges by producing logits for labels of node pairs (instead of producing logits for presence and absence of edges).

Finally, we compute a mixture distribution $\pi \in \Delta^{K - 1}$ via $D_{\mathrm{mix}, \theta}$. To this end, we learn a node-level MLP:
\begin{equation}
    \operatorname{MLP}_{\mathrm{mix}, \theta}^{(1)} := \operatorname{ReLU} \circ \operatorname{Dense}_{\mathrm{mix}, \theta}^{(1)}
\end{equation}
and a graph-level MLP:
\begin{equation}
     \operatorname{MLP}_{\mathrm{mix}, \theta}^{(2)} := \operatorname{Dense}_{\mathrm{mix}, \theta}^{(3)} \circ \operatorname{ReLU} \circ \operatorname{Dense}_{\mathrm{mix}, \theta}^{(2)}
\end{equation}
where $\operatorname{Dense}_{\mathrm{mix}, \theta}^{(3)}: \R^D \to \R^K$. We then define:
\begin{equation}
    D_{\mathrm{mix}, \theta}\left((v_i)_{i=1}^{|V|}\right) := \operatorname{softmax}\left(\operatorname{MLP}_{\mathrm{mix}, \theta}^{(2)}\left(\frac{1}{|V|} \sum_{i=1}^{|V|} \operatorname{MLP}_{\mathrm{mix}, \theta}^{(1)}(v_i)\right)\right)
\end{equation}

In the following, we discuss how our approach, and the edge decoder in particular, may be extended to edge-attributed and directed graphs.

\paragraph{Edge Attributes.} While we only present experiments on un-attributed graphs, we note that our approach (in particular the edge decoder) is naturally extendable to discretely edge-attributed graphs. Assuming that one has $S$ possible edge labels (where one edge label encodes the absence of an edge), one would predict $S$ logits $l_{k, i, j}^{(s)}$ instead of predicting only two logits $l_{k, i, j}$ and $r_{k, i, j}$. Then, for fixed $i, j, k$, the vector 
\begin{equation}
\operatorname{softmax}_s \left(l_{k, i, j}^{(s)}\right)_{s=1}^{S} \qquad \in \qquad \Delta^{S - 1}
\end{equation}
would provide a distribution over labels for edge $\{v_i, v_j\}$. This distribution would be incorporated into a mixture (over $k$) of categorical distributions as above. \eqref{eq:edge-likelihood} would be adjusted to quantify the likelihood of edge labels instead of the likelihood of edge presence/absence. 

\paragraph{Directed Graphs.} In our experiments, we only consider applications of \method to undirected graphs. However, our approach is also naturally extendable to directed graphs. Concretely, one would first adjust all GNNs in \method to take edge directionality into account. One would additionally modify the product in \eqref{eq:edge-likelihood} to run over the entire adjacency matrix instead of only considering the upper triangle. I.e., one would get:
\begin{equation}
    p_\theta(E_t | G_{t-1}, \dots, G_0) := \sum_{k=1}^K \pi_k \prod_{i, j}\left\{\begin{matrix}\begin{aligned}
p_k^{(i, j)}& \quad \mathrm{if} \quad e_{ij} \in E_t \\
1 - p_k^{(i, j)}& \quad \mathrm{else}
\end{aligned}
\end{matrix}
\right\}.
\end{equation}
Finally, the edge decoder would be adjusted in \eqref{eq:symmetrize} to drop the symmetrization of $l_{k, i, j}$ and $r_{k, i, j}$ w.r.t. $i$ and $j$ (i.e., one no longer enforces the presence of the edge $(v_i, v_j)$ to have the same probability as the presence of $(v_j, v_i)$).

\section{Extended Evaluation Results}
\label{appendix:extended-evaluation-results}

\subsection{Evaluation Methodology for Analyzing Performance Across Multiple Runs}
\label{appendix:analyzing-performance-across-runs}
While previous works in graph generation oftentimes present results for only a single training run per dataset, we believe that it is important to provide some evidence that our proposed method yields stable results across different runs.
We provide this evidence by performing three independent training runs with the line Fiedler filtration strategy on the expanded synthetic datasets and reporting median performance in Section~\ref{subsec:large-synthetic} and the maximum deviation across the three runs in Appendix~\ref{appendix:comprehensive-large-synthetic}. 

Concretely, we perform three training runs and compute corresponding performance metrics $x_1 \leq x_2 \leq x_3$. We report the median as $m := x_2$ and the maximum deviation as $d := \max(|x_1 - m|, |x_3 -m|)$.

We settle on this evaluation methodology for the following practical reasons:
\begin{itemize}
    \item Since training is expensive, we can only perform three training runs per dataset. These are too few trials to reliably determine empirical standard deviations.
    \item The median and maximum deviation allow straightforward reasoning about the outcomes of all three runs: $m - d$ lower-bounds the performance of the worst run, one run achieved exactly the median performance $m$, and the best run achieved at least the median performance, potentially exceeding it.
\end{itemize}

\subsection{Comprehensive Evaluation Results on Expanded Synthetic Datasets}
\label{appendix:comprehensive-large-synthetic}
In Table~\ref{tab:comprehensive-large-synthetic}, we present the deviations observed across the three training runs discussed in Sec.~\ref{subsec:large-synthetic}. Additionally, in Table~\ref{tab:synthetic-uniqueness-novelty}, we present uniqueness and novelty metrics for \method and our baselines.
\begin{table}[ht]
    \centering
    \small
    \caption{Full evaluation results for the Fiedler variant of \method trained on expanded synthetic datasets. Showing median across three runs $\pm$ maximum deviation.}
    \begin{tabular}{llll}
\toprule
 & Expaned Planar & Expanded SBM & Expanded Lobster \\
\midrule
VUN ($\uparrow$) &  {\formatpercent{0.7919921875}} \normalfont \tiny{$\pm$ \formatpercent{0.0712890625}} & \formatpercent{0.759765625} \tiny{$\pm$ \formatpercent{0.037109375}} & \formatpercent{0.791015625} \tiny{$\pm$ \formatpercent{0.0712890625}} \\
Degree ($\downarrow$) &  {\roundtofour{0.00038328081648231205}} \normalfont \tiny{$\pm$ \sci{5.425587654261932e-05}} & \roundtofour{0.0014386077305450495} \tiny{$\pm$ \roundtofour{0.006178398816857555}} & \roundtofour{0.000403299865038953} \tiny{$\pm$ \roundtofour{0.0013101655520129096}} \\
Clustering ($\downarrow$) & \roundtofour{0.018315733274652524} \tiny{$\pm$ \roundtofour{0.001394485118883626}} & \roundtofour{0.005062382182491599} \tiny{$\pm$ \roundtofour{0.0009202776661892606}} &  {\sci{7.888505157871428e-05}} \normalfont \tiny{$\pm$ \sci{5.3222739664793295e-05}} \\
Spectral ($\downarrow$) & \roundtofour{0.0011980208600188558} \tiny{$\pm$ \roundtofour{0.00040213501720742784}} &  {\roundtofour{0.0011413484050317724}} \normalfont \tiny{$\pm$ \roundtofour{0.0005873453961979802}} & \roundtofour{0.0015785343299126176} \tiny{$\pm$ \roundtofour{0.0027891567431403974}} \\
Orbit ($\downarrow$) &  {\roundtofour{0.0001880988026623509}} \normalfont \tiny{$\pm$ \roundtofour{0.0015747979768996334}} & \roundtofour{0.018009643354469057} \tiny{$\pm$ \roundtofour{0.01706612521073443}} & \roundtofour{0.00104321298242116} \tiny{$\pm$ \roundtofour{0.015604136395777068}} \\
Unique ($\uparrow$) &  {\formatpercent{1.0}} \normalfont \tiny{$\pm$ \formatpercent{0.0}} &  {\formatpercent{1.0}} \normalfont \tiny{$\pm$ \formatpercent{0.0}} & \formatpercent{0.998046875} \tiny{$\pm$ \formatpercent{0.0009765625}} \\
Novel ($\uparrow$) &  {\formatpercent{1.0}} \normalfont \tiny{$\pm$ \formatpercent{0.0}} &  {\formatpercent{1.0}} \normalfont \tiny{$\pm$ \formatpercent{0.0}} &  {\formatpercent{1.0}} \normalfont \tiny{$\pm$ \formatpercent{0.0009765625}} \\
\bottomrule
\end{tabular}

    \label{tab:comprehensive-large-synthetic}
\end{table}
\begin{table}[ht]
    \centering
    \small
    \caption{Uniqueness and novelty metrics.}
    \begin{tabular}{lllll}
\toprule
 & Method &  Expaned Planar & Expanded SBM & Expanded Lobster \\
\midrule
\multirow{ 5}{*}{Unique ($\uparrow$)} 
& GRAN  & \formatpercent{1.0}&  \formatpercent{1.0} & \formatpercent{0.9990234375} \\
& DiGress  & \formatpercent{1.0}&  \formatpercent{1.0} & \formatpercent{0.9921875}\\
& ESGG  & \formatpercent{1.0}&  \formatpercent{1.0} & \formatpercent{0.99609375}\\
& \method (Fdl.)  & \formatpercent{1.0}&  \formatpercent{1.0} & \formatpercent{0.998046875} \\ 
& \method (DFS) & \formatpercent{1.0}&  \formatpercent{1.0} & \formatpercent{0.9970703125} \\
\midrule
\multirow{ 5}{*}{Novel ($\uparrow$)} 
& GRAN  & \formatpercent{1.0}&  \formatpercent{1.0} & \formatpercent{0.9755859375}\\
& DiGress  & \formatpercent{1.0}&  \formatpercent{1.0} & \formatpercent{0.9677734375} \\
& ESGG  & \formatpercent{1.0}&  \formatpercent{1.0} & \formatpercent{0.982421875}\\
& \method (Fdl.)  & \formatpercent{1.0} &  \formatpercent{1.0} &  \formatpercent{1} \\
 & \method (DFS) & \formatpercent{1.0} &  \formatpercent{1.0} &  \formatpercent{0.9970703125} \\
\bottomrule
\end{tabular}

    \label{tab:synthetic-uniqueness-novelty}
\end{table}

\subsection{Qualitative Model Samples}
\label{appendix:qualitative-samples}
In Figure~\ref{fig:uncurated-samples}, we present uncurated samples from the different models described in Sec.~\ref{sec:experiments}.
\begin{figure}[htp]
\begin{subfigure}[b]{\textwidth}
    \centering
    \begin{subfigure}[b]{0.19\textwidth}
        \includegraphics[width=\textwidth]{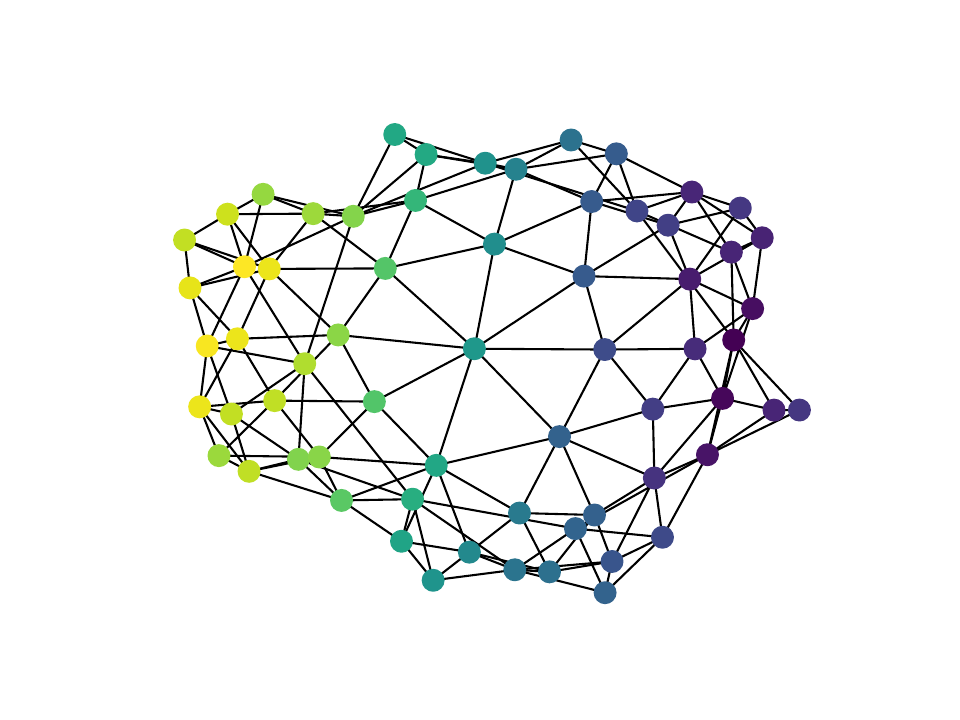}
    \end{subfigure}
    \hfill
    \begin{subfigure}[b]{0.19\textwidth}
        \includegraphics[width=\textwidth]{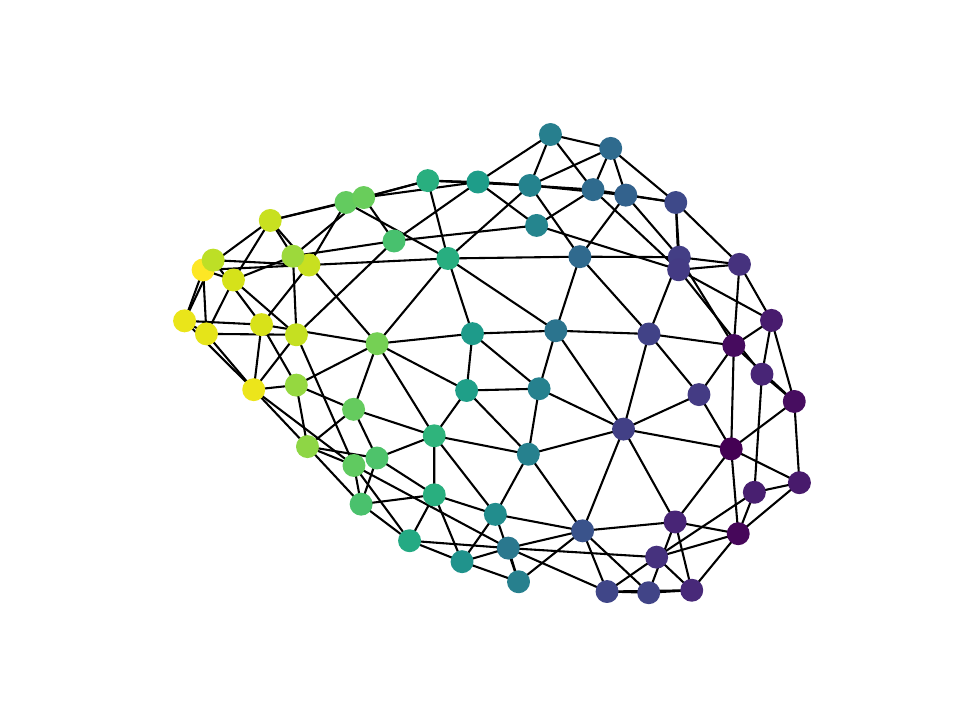}
    \end{subfigure}
    \hfill
    \begin{subfigure}[b]{0.19\textwidth}
        \includegraphics[width=\textwidth]{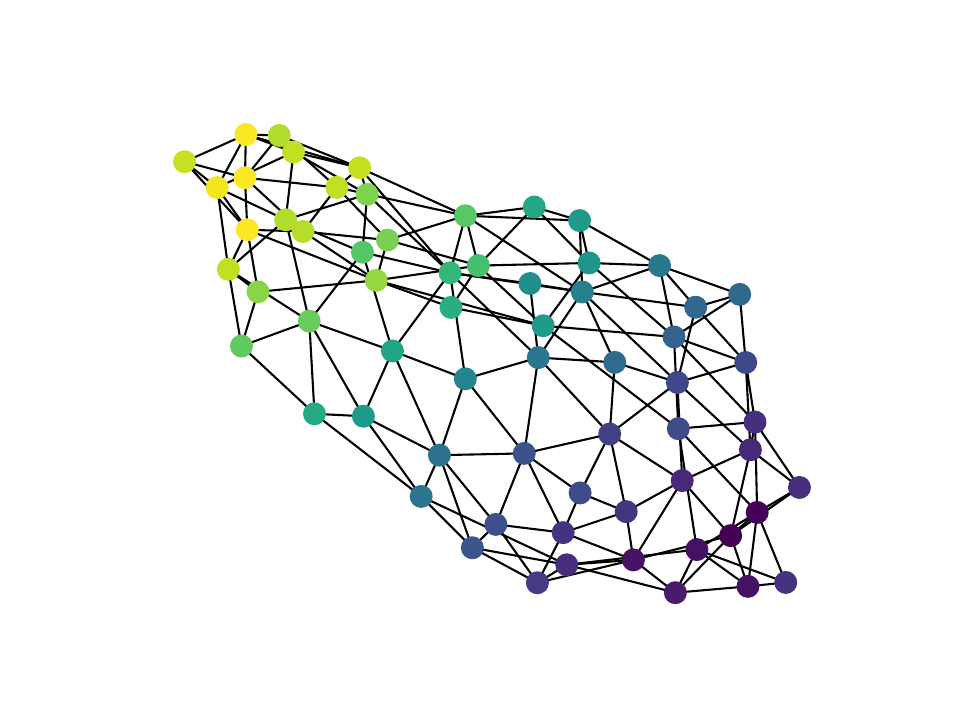}
    \end{subfigure}
    \hfill
    \begin{subfigure}[b]{0.19\textwidth}
        \includegraphics[width=\textwidth]{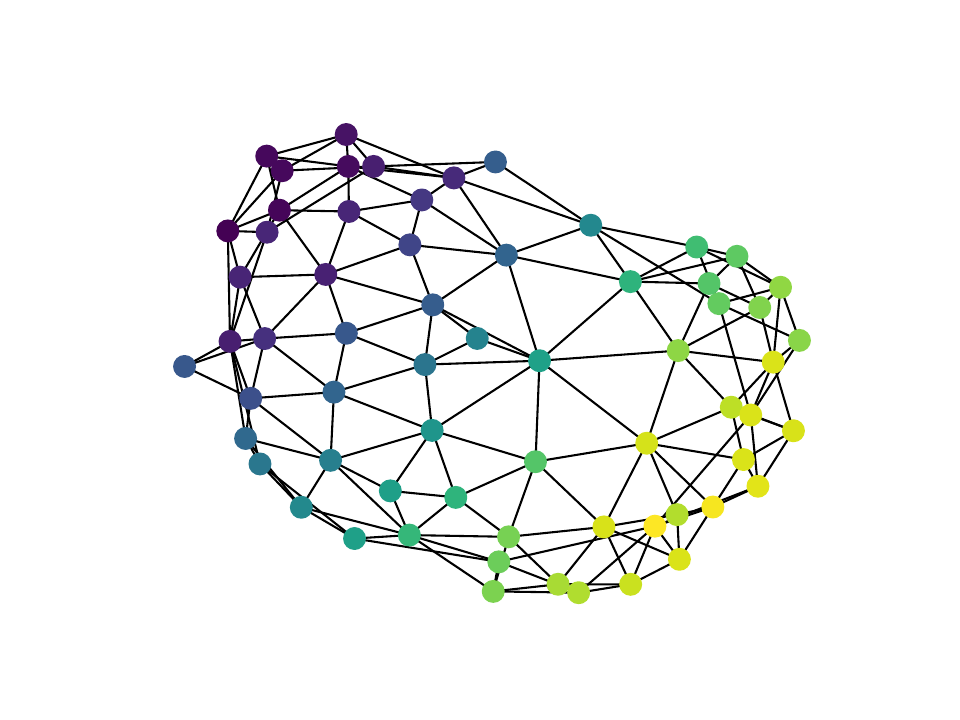}
    \end{subfigure}
    \hfill
    \begin{subfigure}[b]{0.19\textwidth}
        \includegraphics[width=\textwidth]{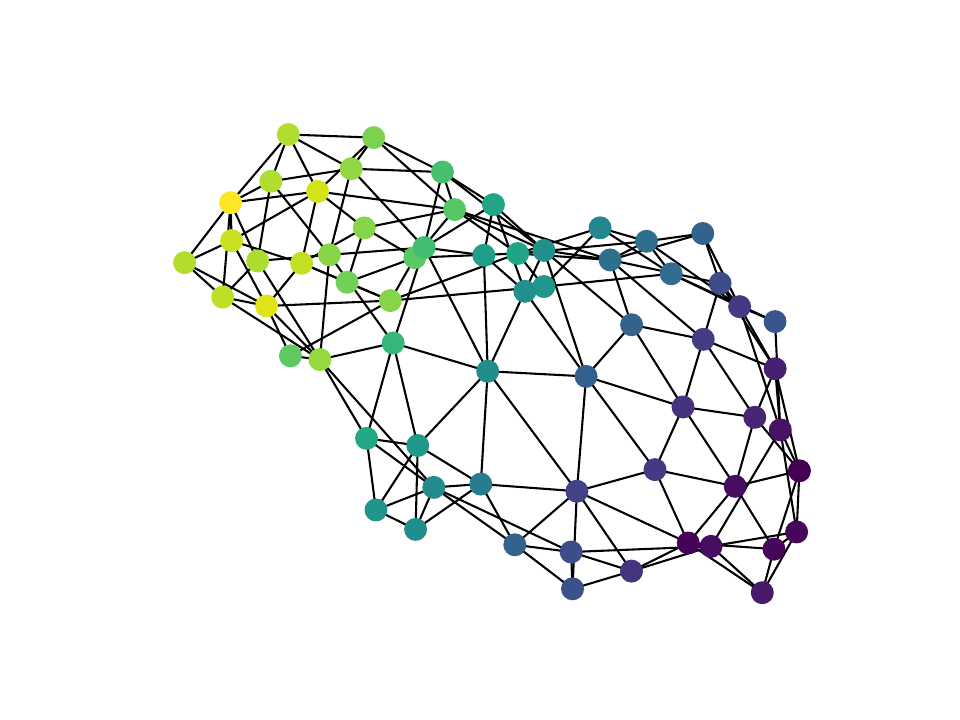}
    \end{subfigure}
    \subcaption{Uncurated samples from \method model trained on expanded planar graph dataset.}
    \label{fig:large-planar-samples}
\end{subfigure}
\\
\begin{subfigure}[b]{\textwidth}
    \centering
    \begin{subfigure}[b]{0.19\textwidth}
        \includegraphics[width=\textwidth]{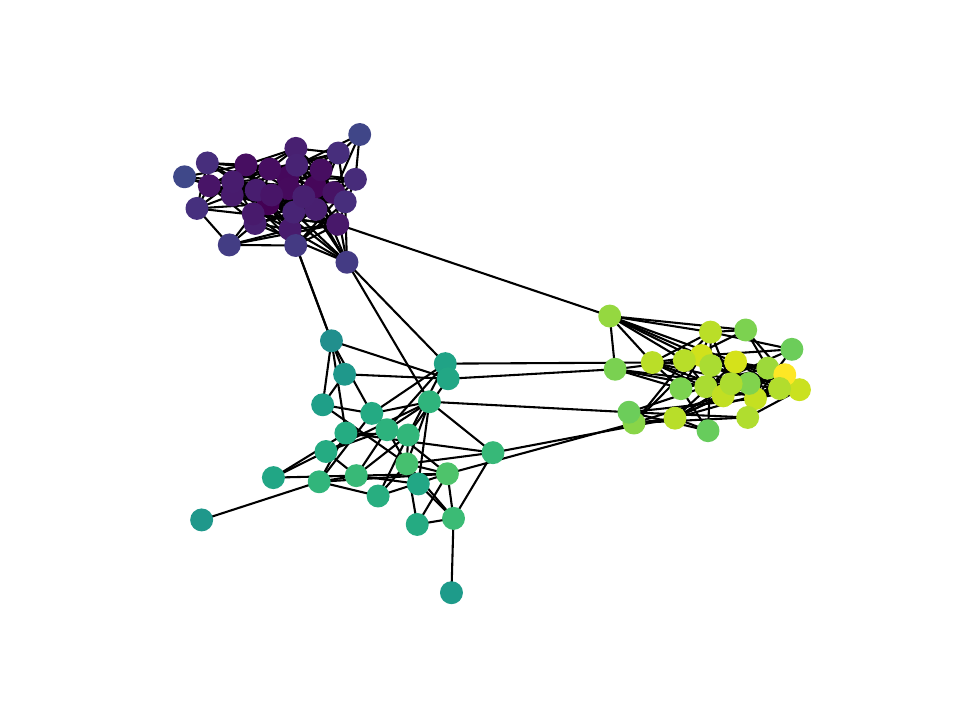}
    \end{subfigure}
    \hfill
    \begin{subfigure}[b]{0.19\textwidth}
        \includegraphics[width=\textwidth]{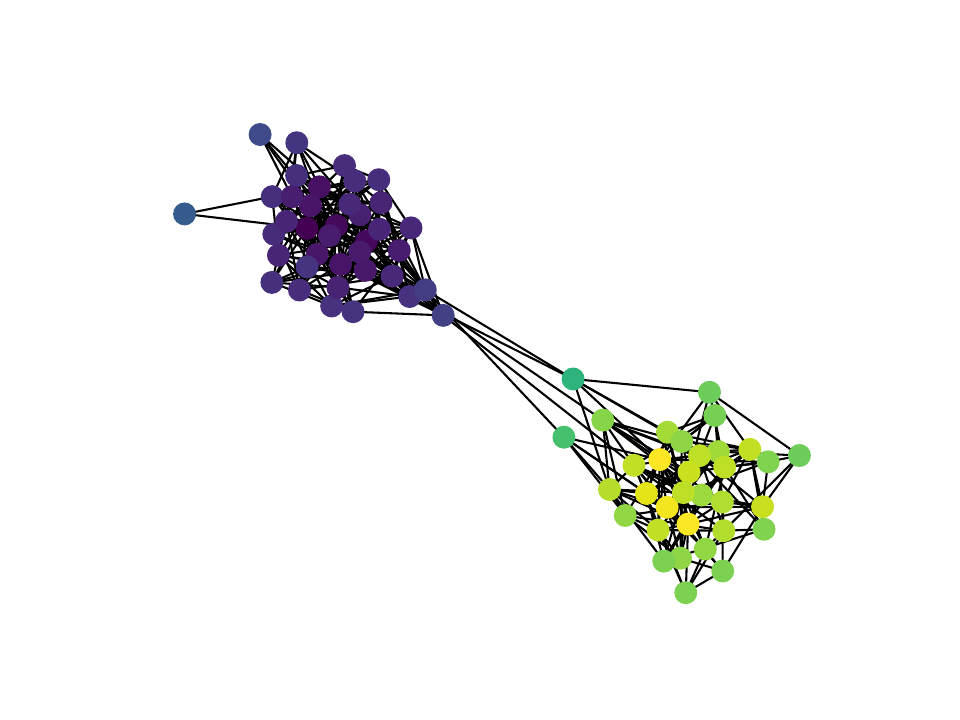}
    \end{subfigure}
    \hfill
    \begin{subfigure}[b]{0.19\textwidth}
        \includegraphics[width=\textwidth]{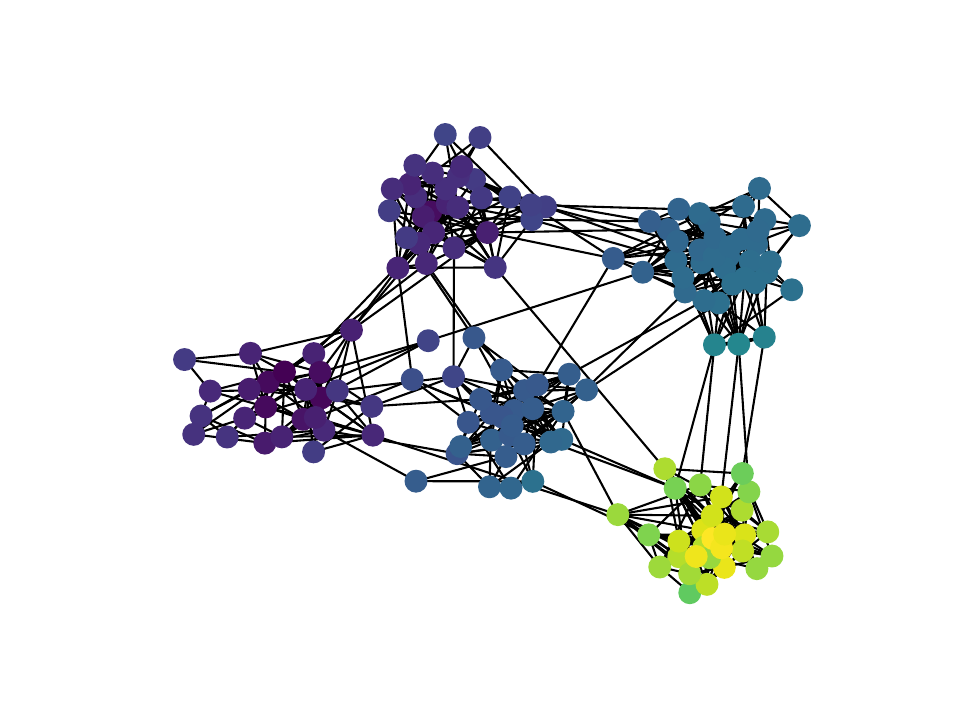}
    \end{subfigure}
    \hfill
    \begin{subfigure}[b]{0.19\textwidth}
        \includegraphics[width=\textwidth]{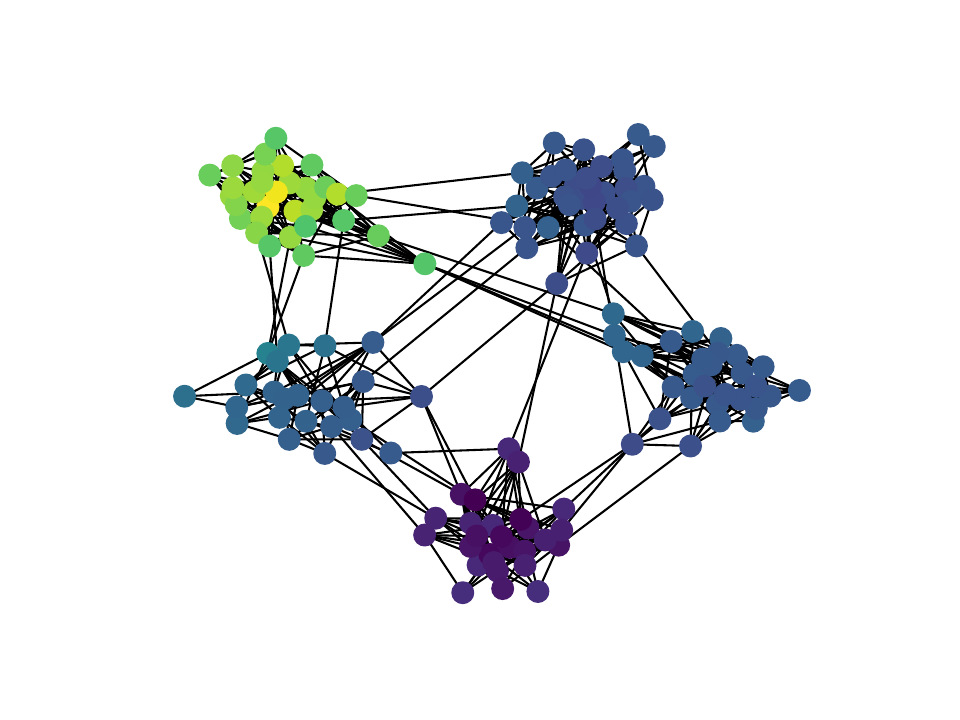}
    \end{subfigure}
    \hfill
    \begin{subfigure}[b]{0.19\textwidth}
        \includegraphics[width=\textwidth]{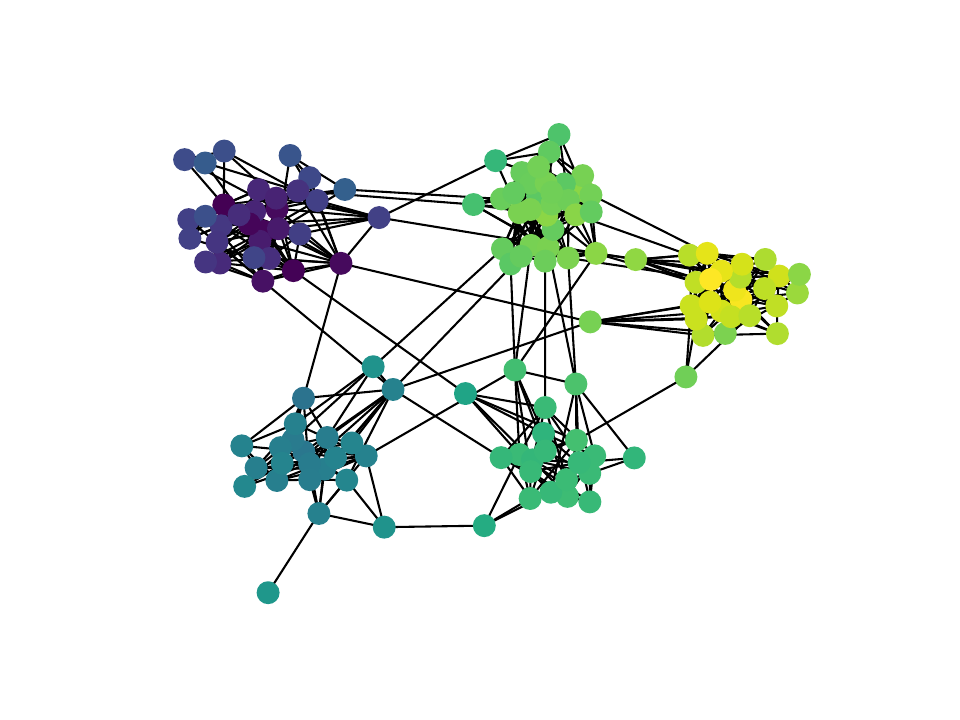}
    \end{subfigure}
    \subcaption{Uncurated samples from \method model trained on expanded SBM dataset.}
    \label{fig:large-sbm-samples}
\end{subfigure}
\\
\begin{subfigure}[b]{\textwidth}
    \centering
    \begin{subfigure}[b]{0.19\textwidth}
        \includegraphics[width=\textwidth]{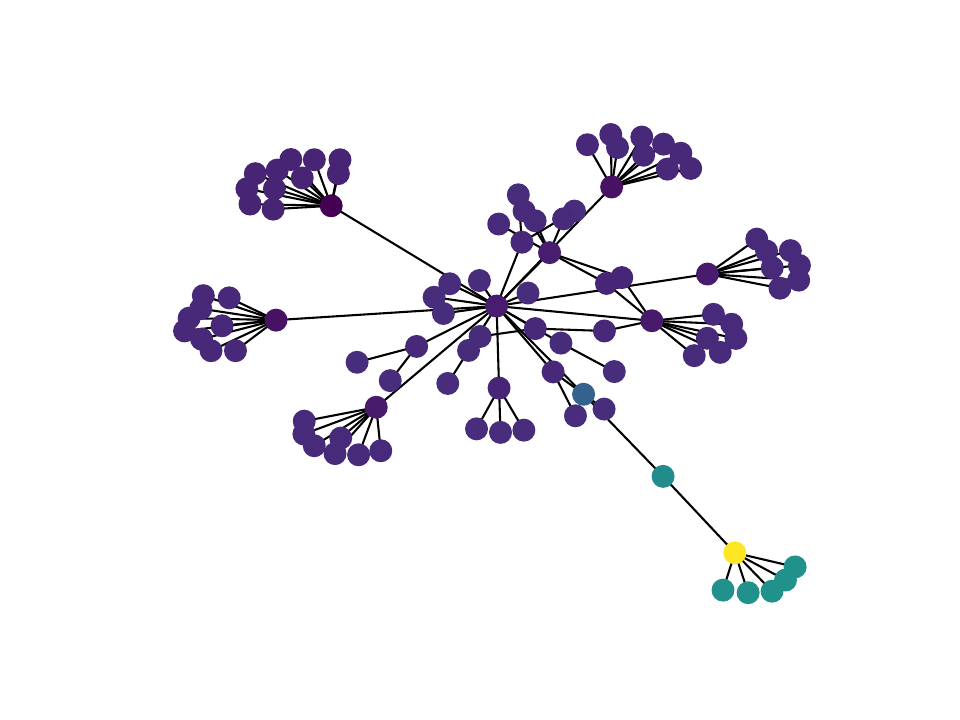}
    \end{subfigure}
    \hfill
    \begin{subfigure}[b]{0.19\textwidth}
        \includegraphics[width=\textwidth]{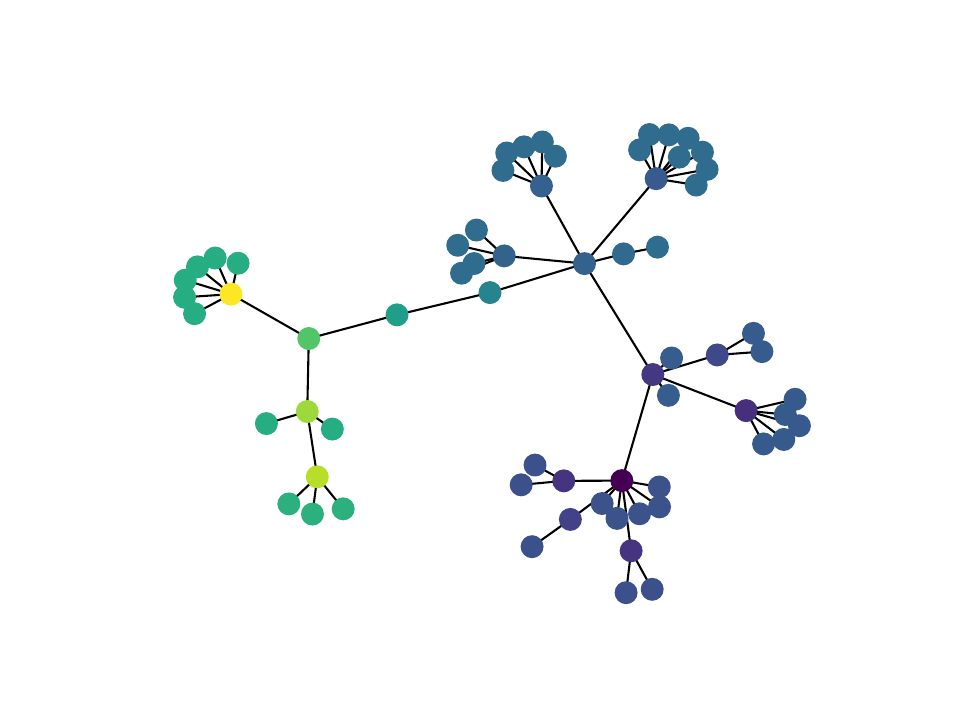}
    \end{subfigure}
    \hfill
    \begin{subfigure}[b]{0.19\textwidth}
        \includegraphics[width=\textwidth]{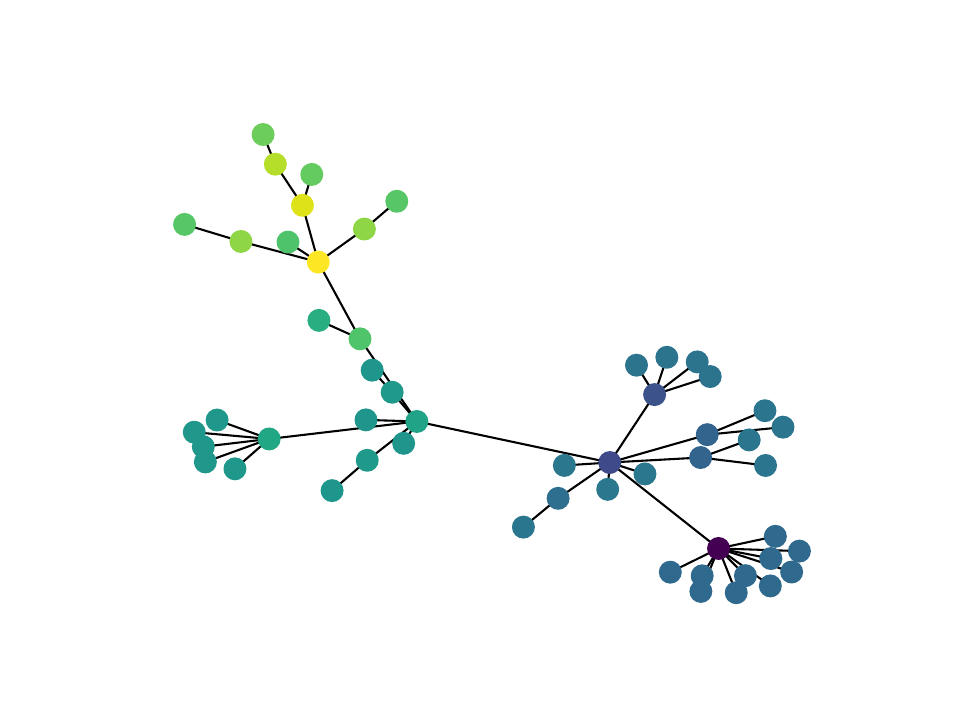}
    \end{subfigure}
    \hfill
    \begin{subfigure}[b]{0.19\textwidth}
        \includegraphics[width=\textwidth]{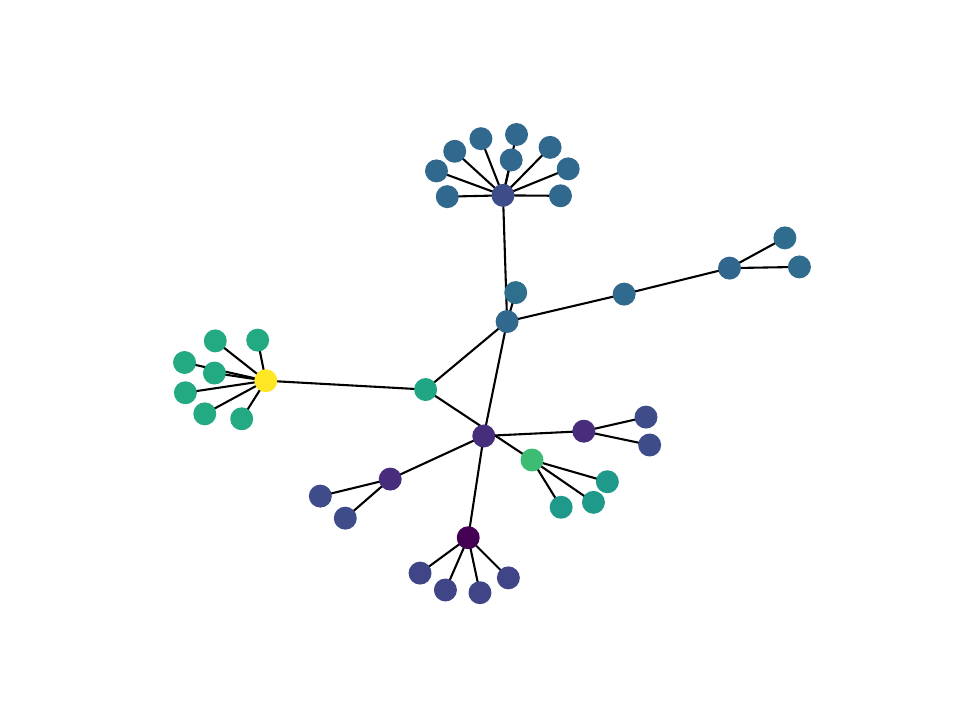}
    \end{subfigure}
    \hfill
    \begin{subfigure}[b]{0.19\textwidth}
        \includegraphics[width=\textwidth]{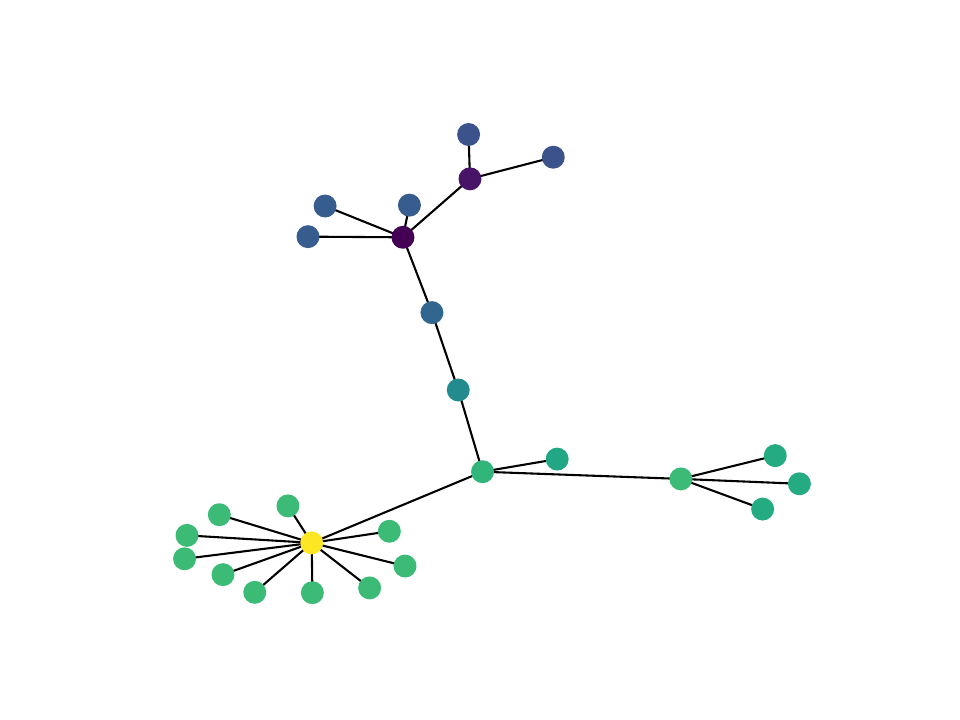}
    \end{subfigure}
    \subcaption{Uncurated samples from \method model trained on expanded lobster dataset.}
    \label{fig:large-lobster-samples}
\end{subfigure}
\\
\begin{subfigure}[b]{\textwidth}
    \centering
    \begin{subfigure}[b]{0.19\textwidth}
        \includegraphics[width=\textwidth]{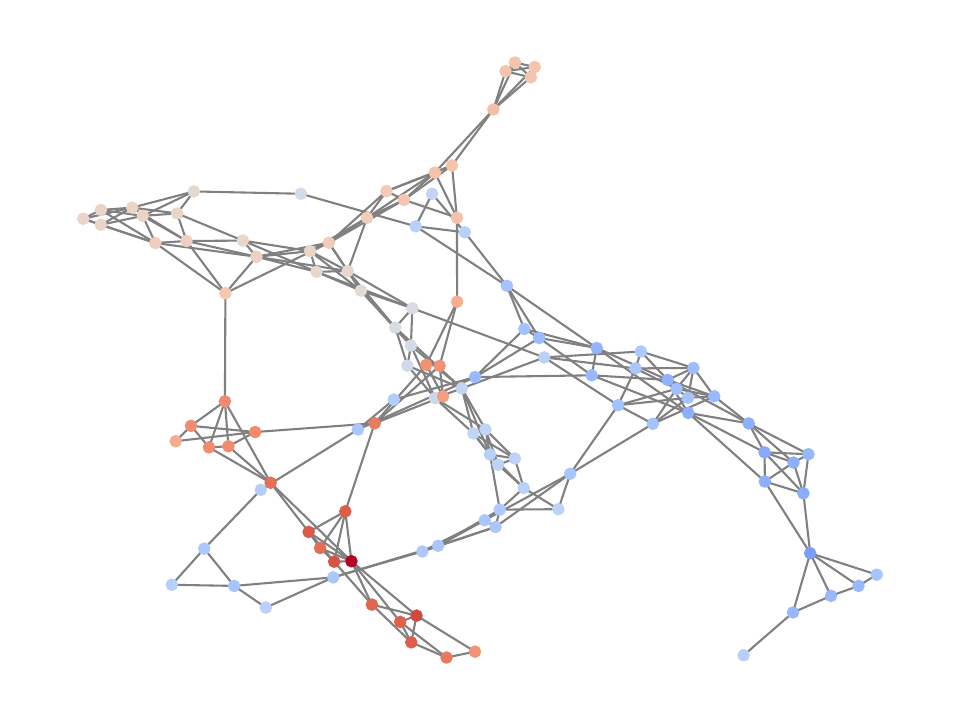}
    \end{subfigure}
    \hfill
    \begin{subfigure}[b]{0.19\textwidth}
        \includegraphics[width=\textwidth]{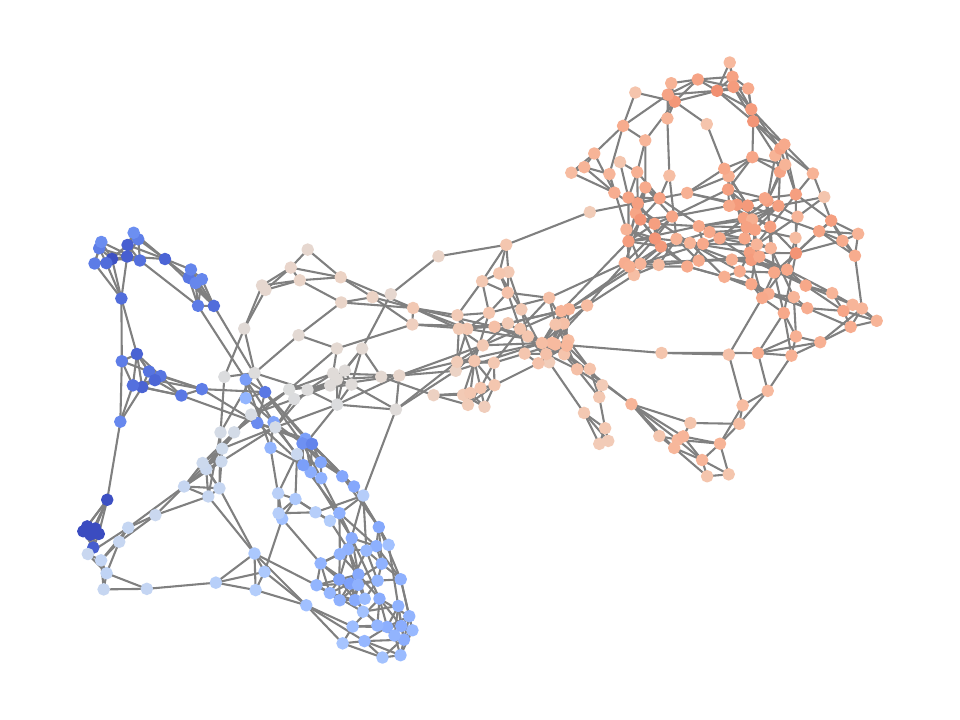}
    \end{subfigure}
    \hfill
    \begin{subfigure}[b]{0.19\textwidth}
        \includegraphics[width=\textwidth]{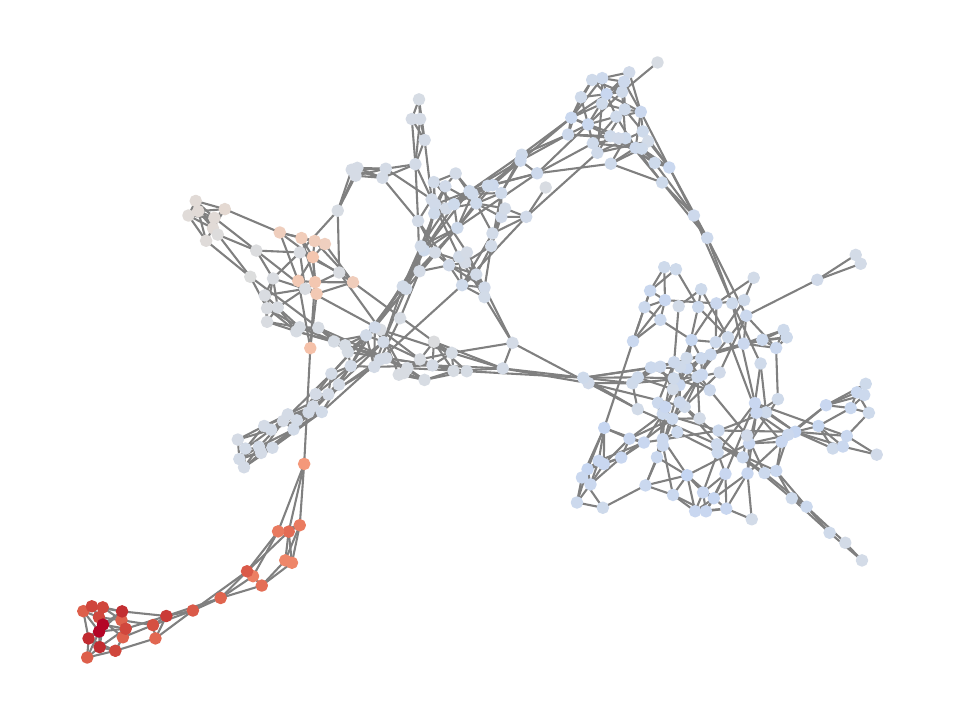}
    \end{subfigure}
    \hfill
    \begin{subfigure}[b]{0.19\textwidth}
        \includegraphics[width=\textwidth]{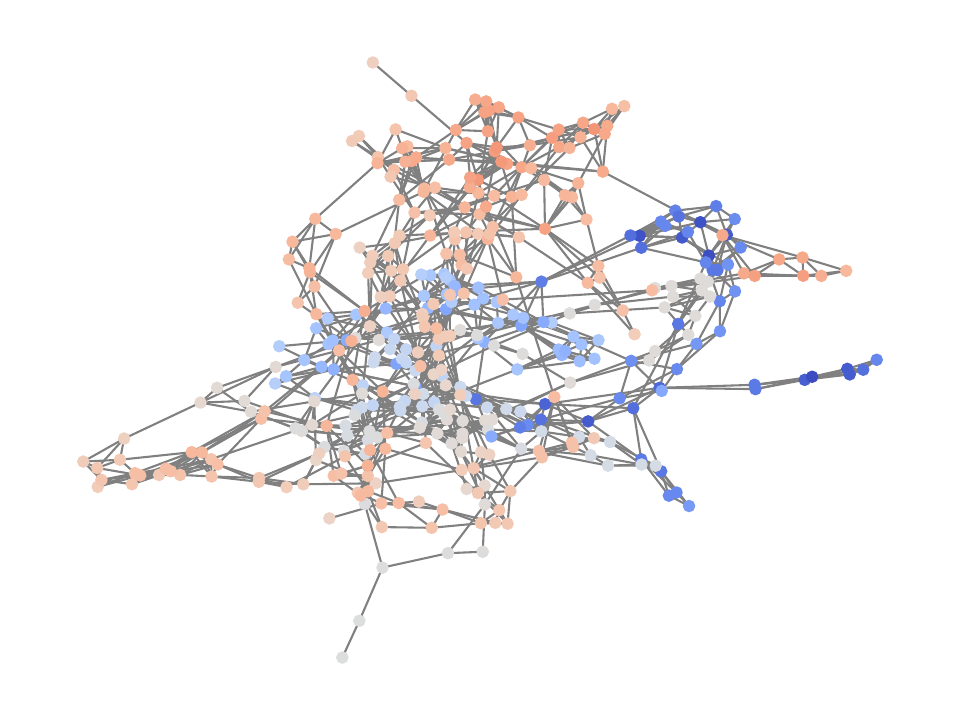}
    \end{subfigure}
    \hfill
    \begin{subfigure}[b]{0.19\textwidth}
        \includegraphics[width=\textwidth]{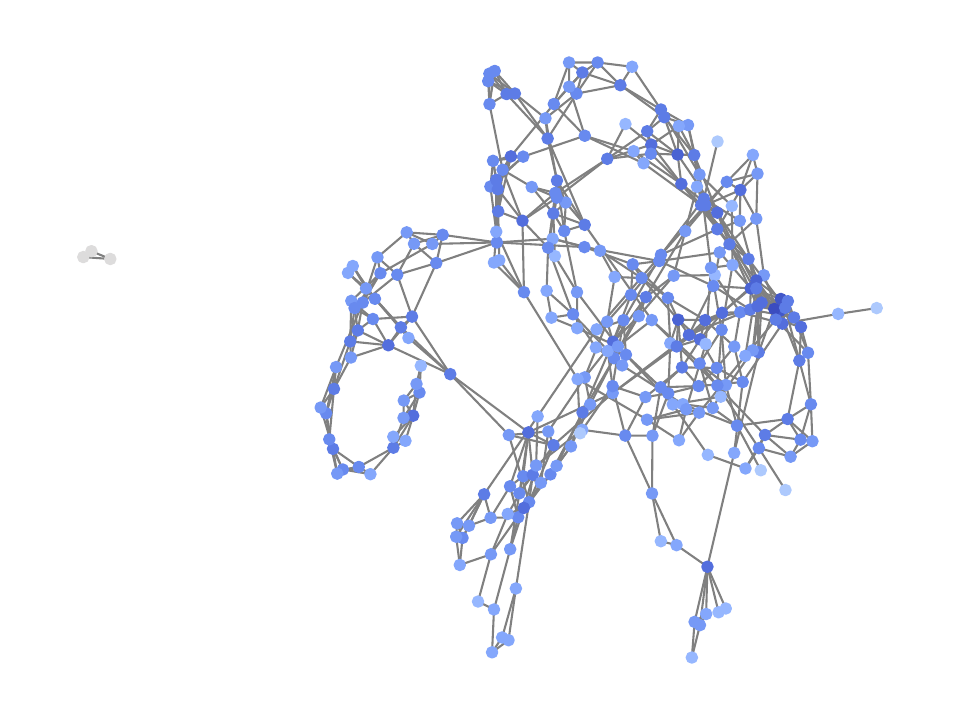}
    \end{subfigure}
    \subcaption{Uncurated samples from \method model trained on protein dataset.}
    \label{fig:large-protein-samples}
\end{subfigure}
\caption{Uncurated samples from \method (line Fiedler variant).}
\label{fig:uncurated-samples}
\end{figure}

\section{Baselines}
\label{appendix:baselines}
\subsection{GRAN Hyperparameters}
\label{appendix:gran-hyperparameters}
For our experiments on the expanded lobster dataset, we use the hyperparameters provided by~\citet{lia2019gran} for their own (smaller) lobster dataset. For experiments on the expanded planar graph dataset, we utilize the same hyper-parameter setting but reduce the batchsize to 16. For experiments on the SBM dataset, we further reduce the batchsize to 8 and use 2 gradient accumulation steps. For the experiments on the protein dataset, we utilize the pretrained model provided at \url{http://www.cs.toronto.edu/~rjliao/model/gran_DD.pth}. We perform inference with a batch size of 20.

\subsection{DiGress Hyperparameters}
\label{appendix:digress-hyperparameters}
For our experiments on the expanded planar graph and SBM datasets, we use the hyperparameters provided by~\citet{vignac2023digress} for the corresponding SPECTRE datasets. On the lobster dataset, we use the same hyperparameters as for the expanded SBM dataset (8 layers and batch size 12). On the protein dataset, we use similar hyperparameters as for the expanded SBM dataset but reduce the batch size to 4 due to GPU memory constraints. We use the same inference approach as~\citet{vignac2023digress}, performing generation with a batch size that is twice as large as the batch size used for training. In all cases, we follow~\citet{vignac2023digress} in using 1000 diffusion steps.

\subsection{ESGG Hyperparameters}
\label{appendix:esgg-hyperparameters}
For our experiments on the expanded planar graph and SBM datasets, we use the hyperparameters provided by~\citet{bergmeister2024efficientscalable} for the corresponding SPECTRE datasets. For the expanded lobster dataset, we use the hyperparameters used by~\citet{bergmeister2024efficientscalable} for their tree dataset. We use the test batch sizes provided by~\citet{bergmeister2024efficientscalable} in their hyperparameter configurations.

\subsection{GRAN Model Selection}
\label{appendix:gran-model-selection}
\paragraph{Expanded Planar.} In Table~\ref{tab:gran-planar-model-selection}, we present validation results of the GRAN model trained on the expanded planar graph dataset. We observe no clear development in model performance past 500 steps. We select the checkpoint at 1000 steps.
\begin{table}[ht]
    \centering
    \small
    \caption{Validation results for GRAN model trained on expanded planar graph dataset. Evaluated on 260 model samples. }
    \begin{tabular}{l|cccccc}
    \toprule
     \# Steps & Valid $(\uparrow)$ & Node Count $(\downarrow)$& Degree $(\downarrow)$& Clustering $(\downarrow)$& Orbit $(\downarrow)$& Spectral $(\downarrow)$ \\
     \midrule
     500 & \formatpercent{0} & \roundtofour{0.0065128424670442} & \roundtofour{0.008655974681351486} & \roundtofour{0.17488923110653395} & \roundtofour{0.06932583754236799} & \roundtofour{0.009560386778039831} \\ %
     1000 & \formatpercent{0} & \roundtofour{0.0007247165937367406} & \roundtofour{0.007020969141104061} & \roundtofour{0.16956662189887614} & \roundtofour{0.11003033829378417} & \roundtofour{0.008597715607204348} \\ %
     1500 & \formatpercent{0.007692307692307693} & \roundtofour{0.00997504161463536}& \roundtofour{0.006600394482517924} & \roundtofour{0.17296520159001558} & \roundtofour{0.07426697679951899} & \roundtofour{0.007811115296344484} \\ %
     2000 & \formatpercent{0} & \roundtofour{0.0021290576894645863} & \roundtofour{0.00563574324307381} & \roundtofour{0.16583742150363295} & \roundtofour{0.08158919290927358} & \roundtofour{0.009418713734661965} \\ %
     2500 & \formatpercent{0.007692307692307693} & \roundtofour{0.0033254837020433303} & \roundtofour{0.006407487535491363} & \roundtofour{0.17678005042999512} & \roundtofour{0.10423049576153542} & \roundtofour{0.008729458835383674} \\ %
     \bottomrule
\end{tabular}

    \label{tab:gran-planar-model-selection}
\end{table}

\paragraph{Expanded SBM.} In Table~\ref{tab:gran-sbm-model-selection}, we present validation results of the GRAN model trained on the expanded SBM dataset. We find that, overall, the checkpoint at 200 steps appears to perform best and select it.
\begin{table}[ht]
    \centering
    \small
    \caption{Validation results for GRAN model trained on expanded SBM dataset. Evaluated on 260 model samples.}
    \begin{tabular}{l|cccccc}
    \toprule
     \# Steps & Valid $(\uparrow)$& Node Count $(\downarrow)$ & Degree $(\downarrow)$& Clustering $(\downarrow)$& Orbit $(\downarrow)$& Spectral $(\downarrow)$ \\
     \midrule
     100 & \formatpercent{0.2230769230769231} & \roundtofour{1.9992460515293182}& \roundtofour{0.02433640281259719} & \roundtofour{0.01192550727936945} & \roundtofour{0.03998639475064408} & \roundtofour{0.0036665278248009248} \\ %
     200 & \formatpercent{0.2423076923076923} & \roundtofour{1.9998480820058873} & \roundtofour{0.01938130818975914} & \roundtofour{0.011362735690513045} & \roundtofour{0.029006325466018848} & \roundtofour{0.0025800121481280858} \\ %
     400 & \formatpercent{0.20384615384615384} & \roundtofour{1.999937666739294} & \roundtofour{0.02775316761383695} & \roundtofour{0.012953346316353543} & \roundtofour{0.044755986747505055} & \roundtofour{0.003877968168626289} \\ %
     600 & \formatpercent{0.20384615384615384} &  \roundtofour{1.999895209849309} & \roundtofour{0.022525270428906508} & \roundtofour{0.011991012316821065} & \roundtofour{0.03184008814290333} &\roundtofour{0.0030081191561801557} \\ %
     \bottomrule
\end{tabular}

    \label{tab:gran-sbm-model-selection}
\end{table}

\paragraph{Expanded Lobster.} In Table~\ref{tab:gran-lobster-model-selection}, we present validation results of the GRAN model trained on the expanded lobster dataset. We observe no improvement in validity past 2500 steps and select this checkpoint. 
\begin{table}[ht]
    \centering
    \small
    \caption{Validation results for GRAN model trained on expanded lobster graph dataset. Evaluated on 260 model samples.}
    \begin{tabular}{l|cccccc}
    \toprule
     \# Steps & Valid $(\uparrow)$& Node Count $(\downarrow)$& Degree $(\downarrow)$& Clustering $(\downarrow)$& Orbit $(\downarrow)$& Spectral $(\downarrow)$ \\
     \midrule
     500 & \formatpercent{0.0234375} & \roundtofour{2.0} & \roundtofour{0.025678888709036674} & \roundtofour{0.4752898321565996}& \roundtofour{0.2507174135434972} & \roundtofour{0.050883798214956366} \\ %
     1500 & \formatpercent{0.38671875} & \roundtofour{2.0}& \roundtofour{0.009245991054127822} & \roundtofour{0.011162033113309988} & \roundtofour{0.16242372930263183} & \roundtofour{0.03287462440716693} \\ %
     2500 & \formatpercent{0.42578125} & \roundtofour{2.0} & \roundtofour{0.008319556749788681} & \roundtofour{0.005881559653525548} & \roundtofour{0.17491112681009757} & \roundtofour{0.036088531002298474} \\        %
     3500 & \formatpercent{0.4296875} & \roundtofour{2.0} & \roundtofour{0.010110533291801671} & \roundtofour{0.004942212659893919} & \roundtofour{0.19697961173297895} & \roundtofour{0.04060666311469041}\\ %
    \bottomrule
\end{tabular}

    \label{tab:gran-lobster-model-selection}
\end{table}

\subsection{ESGG Model Selection}
\label{appendix:esgg-selection}
While ESGG maintains exponential moving averages of model weights during training, we choose to only evaluate non-smoothed model weights (i.e. the EMA weights with decay parameter $\gamma=1$), as validation is compute-intensive. 
\paragraph{SBM Dataset.} In our experiments, we obtain worse performance on the expanded SBM dataset than was reported on the smaller SPECTRE SBM dataset in~\citep{bergmeister2024efficientscalable}. In Figure~\ref{fig:sbm-validity-esgg}, we show the development of validity throughout training, which lasted over 4.5 days on an H100 GPU. Throughout training, we fail to match the validity reported in~\citep{bergmeister2024efficientscalable}. Although the validity estimate is quite noisy, it appears to plateau. We select a model checkpoint at 4.8M steps.
\begin{figure}
    \centering
    \includegraphics[width=0.5\linewidth]{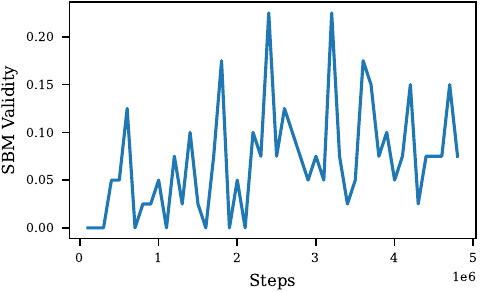}
    \caption{SBM validity during training of ESGG on expanded SBM dataset. Validity is computed using 1000 refinement steps in validation but 100 refinement steps during testing to remain consistent with other baselines.}
    \label{fig:sbm-validity-esgg}
\end{figure}
\paragraph{Protein Dataset.} Model selection on the protein graph dataset is challenging, as the MMD metrics computed during validation are noisy, and generating model samples is time-consuming. We take a structured approach and evaluate model checkpoints at 1-4M training steps using the same validation approach as~\citet{bergmeister2024efficientscalable}. Namely, for each graph in the validation set, we generate a corresponding model sample with the same number of nodes. We present the resulting MMD metrics in Table~\ref{tab:validation-esgg}. 
\begin{table}[ht]
    \centering
    \small
    \caption{Validation results of ESGG trained on protein dataset.}
    \begin{tabular}{c|cccccc}
    \toprule
     \# Steps & Degree $(\downarrow)$ & Clustering $(\downarrow)$ & Orbit $(\downarrow)$ & Spectral $(\downarrow)$ & Wavelet $(\downarrow)$ & Ratio $(\downarrow)$   \\
     \midrule
     1M & \roundtofour{0.024156044031717894} & \roundtofour{0.10743384051487107} & \roundtofour{0.10910618951650242} & \roundtofour{0.009494919139395597} & \roundtofour{0.026678779183871182} & \roundtofour{63.812166651441146} \\
     2M & \roundtofour{0.002848594743062316} & \roundtofour{0.025364130117647946} & \roundtofour{0.051967623090149795} & \roundtofour{0.0009111702159443347} & \roundtofour{0.0022578649384770166} & \roundtofour{12.242573717269035} \\
    3M & \roundtofour{0.006642071284827633} & \roundtofour{0.0632191222874804} & \roundtofour{0.06397641515416752} & \roundtofour{0.0029957945189404978} & \roundtofour{0.00899965320603724} & \roundtofour{23.620577496799804} \\
    4M & \roundtofour{0.029305051173869945} & \roundtofour{0.10160382118895689} & \roundtofour{0.24743846755814847} & \roundtofour{0.007939929484529706} & \roundtofour{0.022371673726054864} & \roundtofour{84.99819360941102} \\
     \bottomrule
\end{tabular}

    \label{tab:validation-esgg}
\end{table}
Based on these results, we select the model checkpoint at 2M steps.

\section{Additional Ablations}
\label{appendix:additional-ablations}
\paragraph{Noise Augmentation.} We ablate noise augmentation for the DFS variant of \method, supplementing the results previously reported in Table~\ref{tab:ablations-fused}. We run 100k training steps of stage I with DFS filtrations on the expanded planar graph dataset and compare performance with and without noise augmentation. Consistent with our previous observations, we find in Table~\ref{tab:noise-ablation-dfs} that noise augmentation substantially boosts performance.
\begin{table}[htp]
    \centering
    \small
    \caption{Performance of DFS variant of AFNM after 100k steps of stage I training on expanded planar graph dataset with and without noise augmentation.}
    \begin{tabular}{lll}        %
\toprule
 & Stage I w/ Noise & Stage I w/o Noise\\
\midrule
VUN ($\uparrow$) & \bfseries \formatpercent{0.0224609375} & \formatpercent{0.0029296875}\\
Degree ($\downarrow$) & \bfseries \roundtofour{0.004992101268182836} & \roundtofour{0.03806120102416988}\\
Clustering ($\downarrow$) &  \bfseries \roundtofour{0.22752523593356486} & \roundtofour{0.3135806795631745}\\
Spectral ($\downarrow$) &  \bfseries \roundtofour{0.004911161050920265} & \roundtofour{0.016921752975664894}\\
Orbit ($\downarrow$) &  \bfseries \roundtofour{0.05037132388776233} & \roundtofour{0.10453806446477443}\\
\bottomrule
\end{tabular}

    \label{tab:noise-ablation-dfs}
\end{table}

\paragraph{GAN Tuning.} We supplement the results on the effectiveness of adversarial finetuning we presented in Table~\ref{tab:ablations-fused}. In Table~\ref{tab:finetuned-vs-pretrained-planar-dfs}, we present ablation results for the DFS variant on the expanded planar graph dataset. In Tables~\ref{tab:finetuned-vs-pretrained-sbm} and~\ref{tab:finetuned-vs-pretrained-lobster}, we compare models after training stage I and II on the expanded SBM and lobster datasets from Sec.~\ref{subsec:large-synthetic}. Again, we observe that adversarial fine-tuning substantially improves performance in terms of validity and MMD metrics.
\begin{table}[htp]
    \centering
    \small
    \caption{Performance of DFS variant of AFNM after stage I (200k steps) and stage II on expanded planar graph dataset. For results on Fiedler variant, see Table~\ref{tab:ablations-fused}.}
    \begin{tabular}{lll}        %
\toprule
 & Stage II & Stage I\\
\midrule
VUN ($\uparrow$) & \bfseries \formatpercent{0.4560546875} & \formatpercent{0.0205078125} \\
Degree ($\downarrow$) & \bfseries\roundtofour{0.0003129283983656084} & \roundtofour{0.005736982906912713}\\
Clustering ($\downarrow$) &  \bfseries\roundtofour{0.029550210939678245} & \roundtofour{0.22971833455509266}\\
Spectral ($\downarrow$) & \bfseries\roundtofour{0.0016859051661126667} & \roundtofour{0.00584763514897535} \\
Orbit ($\downarrow$) & \bfseries\roundtofour{0.00044472290731478736} & \roundtofour{0.03179839809988616} \\
\bottomrule
\end{tabular}

    \label{tab:finetuned-vs-pretrained-planar-dfs}
\end{table}
\begin{table}[htp]
    \centering
    \small
    \caption{Performance of \method models after stage I (200k steps) and stage II on expanded SBM dataset. Showing median $\pm$ maximum deviation across three runs for Fiedler variant and a single run for DFS variant. All models attain perfect uniqueness and novelty scores.}
    \begin{tabular}{lll||ll}
\toprule
\multicolumn{1}{c}{} & \multicolumn{2}{c||}{Fiedler} & \multicolumn{2}{c}{DFS} \\
 & Stage II & Stage I & Stage II & Stage I \\
 \midrule
VUN ($\uparrow$) & \bfseries {\formatpercent{0.759765625}} \normalfont \tiny{$\pm$ \formatpercent{0.037109375}} & \formatpercent{0.396484375} \tiny{$\pm$ \formatpercent{0.046875}} & \bfseries \formatpercent{0.7802734375} & \formatpercent{0.228515625} \\
Degree ($\downarrow$) & \bfseries {\roundtofour{0.0014386077305450495}} \normalfont \tiny{$\pm$ \roundtofour{0.006178398816857555}} & \roundtofour{0.0022632047750787976} \tiny{$\pm$ \roundtofour{0.0063400912509239404}} & \roundtofour{0.0007945592906148935} & \bfseries \roundtofour{0.0004024134707534266}\\
Clustering ($\downarrow$) & \bfseries {\roundtofour{0.005062382182491599}} \normalfont \tiny{$\pm$ \roundtofour{0.0009202776661892606}} & \roundtofour{0.008214422115318282} \tiny{$\pm$ \roundtofour{0.0012449712020626488}} & \bfseries \roundtofour{0.0048530867930203555} & \roundtofour{0.011657713271663664} \\
Spectral ($\downarrow$) & \bfseries {\roundtofour{0.0011413484050317724}} \normalfont \tiny{$\pm$ \roundtofour{0.0005873453961979802}} & \roundtofour{0.003180424042823038} \tiny{$\pm$ \roundtofour{0.0006335198548974574}} & \bfseries \roundtofour{0.0008138377571431654} & \roundtofour{0.0019943940627396017} \\
Orbit ($\downarrow$) & \bfseries {\roundtofour{0.018009643354469057}} \normalfont \tiny{$\pm$ \roundtofour{0.01706612521073443}} & \roundtofour{0.02104654889988994} \tiny{$\pm$ \roundtofour{0.013498147868981819}} & \bfseries \roundtofour{0.004919597001513051} & \roundtofour{0.039016947339730726}\\
\bottomrule
\end{tabular}

    \label{tab:finetuned-vs-pretrained-sbm}
\end{table}
\begin{table}[htp]
    \centering
    \small
    \caption{Performance of models after stage I (100k steps) and stage II on expanded lobster dataset. Showing median $\pm$ maximum deviation across three runs.}
    \begin{tabular}{lll||ll}
\toprule
\multicolumn{1}{c}{} & \multicolumn{2}{c||}{Fiedler} & \multicolumn{2}{c}{DFS} \\
 & Stage II & Stage I & Stage II & Stage I \\
 \midrule
VUN ($\uparrow$) & \bfseries {\formatpercent{0.791015625}} \normalfont \tiny{$\pm$ \formatpercent{0.0712890625}} & \formatpercent{0.3125} \tiny{$\pm$ \formatpercent{0.046875}} & \bfseries \formatpercent{0.8759765625} & \formatpercent{0.474609375} \\
Degree ($\downarrow$) & \roundtofour{0.000403299865038953} \tiny{$\pm$ \roundtofour{0.0013101655520129096}} & \bfseries {\roundtofour{0.0003690151858743995}} \normalfont \tiny{$\pm$ \roundtofour{0.000969749715219681}} & \bfseries \sci{7.824159351721427e-05} & \roundtofour{0.02087151196887138} \\
Clustering ($\downarrow$) & \bfseries {\sci{7.888505157871428e-05}} \normalfont \tiny{$\pm$ \sci{5.3222739664793295e-05}} & \roundtofour{0.013616898759275742} \tiny{$\pm$ \roundtofour{0.005381368107876039}} & \bfseries \sci{1.6414273384945943e-06} & \roundtofour{0.004335325046007421} \\
Spectral ($\downarrow$) & \bfseries {\roundtofour{0.0015785343299126176}} \normalfont \tiny{$\pm$ \roundtofour{0.0027891567431403974}} & \roundtofour{0.0030154024697446324} \tiny{$\pm$ \roundtofour{0.00120405658128786}} & \bfseries \roundtofour{0.0009673624774602096} & \roundtofour{0.011864433494173321}\\
Orbit ($\downarrow$) & \bfseries {\roundtofour{0.00104321298242116}} \normalfont \tiny{$\pm$ \roundtofour{0.015604136395777068}} & \roundtofour{0.007331627279350661} \tiny{$\pm$ \roundtofour{0.0025561161809553035}} & \bfseries \roundtofour{0.0007278563085622025} & \roundtofour{0.02064168362030494} \\
Unique ($\uparrow$) & \bfseries {\formatpercent{0.998046875}} \normalfont \tiny{$\pm$ \formatpercent{0.0009765625}} & \formatpercent{0.9951171875} \tiny{$\pm$ \formatpercent{0.00390625}} & \formatpercent{0.9970703125} & \bfseries \formatpercent{1} \\
Novel ($\uparrow$) & \bfseries {\formatpercent{1.0}} \normalfont \tiny{$\pm$ \formatpercent{0.0009765625}} & \formatpercent{0.9990234375} \tiny{$\pm$ \formatpercent{0.00390625}} & \formatpercent{0.9970703125} & \bfseries \formatpercent{0.9990234375} \\
\bottomrule
\end{tabular}

    \label{tab:finetuned-vs-pretrained-lobster}
\end{table}

\paragraph{Filtration Function.} In Table~\ref{tab:edge-weight-ablation}, we study alternative filtration functions. We compare the line fiedler function to centrality-based filtration functions.
Following~\citet{anthonisse1971rush,brandes2008betweenness}, we let $\sigma(i, j)$ denote the number of shortest paths between two nodes $i,j\in V$, and $\sigma(i, j\, |\, e)$ denote the number of these paths passing through an edge $e\in E$. Then, we define the betweenness centrality function as:
        \begin{equation}
            f_{\text{between}}(e) := \sum_{i, j\in V} \frac{\sigma(i, j | e)}{\sigma(i, j)},   \qquad \forall \:e\in E.
        \end{equation}
Based on this, we define the remoteness centrality as $f_\mathrm{remote}(e) = -f_\mathrm{between}(e)$. We use the same filtration scheduling approach as for the line Fiedler function.
We observe that the line fiedler function appears to out-perform the two alternatives in our setting.
\begin{table}[htp]
    \centering
    \small
    \caption{Performance after training stage I with different filtration functions for 100k steps on expanded planar graph dataset. Showing median of three runs for spectral variant and one run each for betweenness and remoteness variants.}
    \begin{tabular}{llllllll}
\toprule
 & VUN ($\uparrow$) & Degree ($\downarrow$) & Clustering ($\downarrow$) & Spectral ($\downarrow$) & Orbit ($\downarrow$)  \\
\midrule
Line Fiedler & \bfseries {\formatpercent{0.2021484375}} &  {\roundtofour{0.005753176855113784}} & \bfseries {\roundtofour{0.176845325553082}} & \bfseries {\roundtofour{0.004760004557846642}} & \bfseries {\roundtofour{0.012890572283490664}} \\
DFS & \formatpercent{0.0224609375} & \bfseries \roundtofour{0.004992101268182836} & \roundtofour{0.22752523593356486} & \roundtofour{0.004911161050920265} & \roundtofour{0.05037132388776233} \\
Betweenness & \formatpercent{0.001953125} & \roundtofour{0.006936244140129277} & \roundtofour{0.27236395430697913} & \roundtofour{0.012404566920876325} & \roundtofour{0.08039647281931894}  \\
Remoteness & \formatpercent{0.03515625} & \roundtofour{0.013609126202943633} & \roundtofour{0.2720081385293078} & \roundtofour{0.008461575324258286} & \roundtofour{0.023358653218225056}  \\
\bottomrule
\end{tabular}

    \label{tab:edge-weight-ablation}
\end{table}
\begin{figure}
    \centering
    \hfill
    \begin{subfigure}[b]{0.2\textwidth}
        \includegraphics[width=\textwidth,trim={2.75cm 1cm 0.5cm 0cm}, clip]{figures/edge-weight-functions/planar_graph_fied.pdf}
        \subcaption{Line Fiedler}
    \end{subfigure}
    \hfill
    \begin{subfigure}[b]{0.2\textwidth}
        \includegraphics[width=\textwidth,trim={2.75cm 1cm 0.5cm 0cm}, clip]{figures/edge-weight-functions/planar_graph_dfs.pdf}
        \subcaption{DFS}
    \end{subfigure}
    \hfill
    \begin{subfigure}[b]{0.2\textwidth}
        \includegraphics[width=\textwidth,trim={2.75cm 1cm 0.5cm 0cm}, clip]{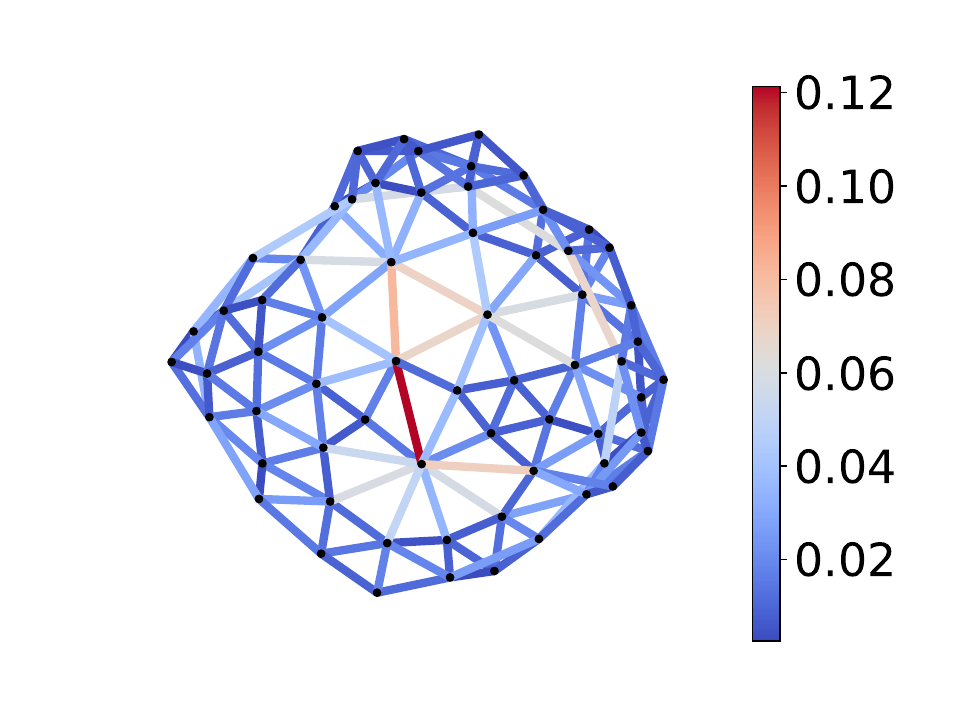}
        \subcaption{Betweenness}
    \end{subfigure}
    \hfill
    \begin{subfigure}[b]{0.2\textwidth}
        \includegraphics[width=\textwidth,trim={2.75cm 1cm 0.5cm 0cm}, clip]{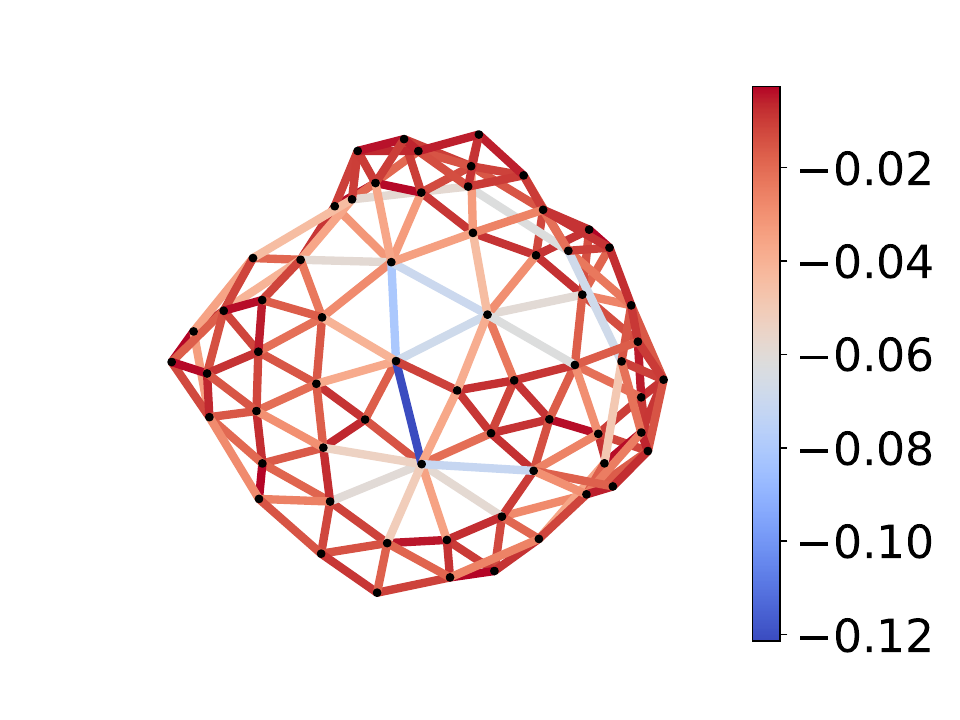}
        \subcaption{Remoteness}
    \end{subfigure}
    \caption{Visualization of different filtration functions on a planar graph}
    \label{fig:visualization-filtration-functions}
\end{figure}

\paragraph{Scheduling.} 
We recall that the filtration schedule of the line Fiedler variant depends on a mononotonically increasing function $\gamma: [0, 1] \to [0, 1]$. Here, we study the performance of the three choices for $\gamma$: 
\begin{equation}
    \begin{aligned}
        &\mathrm{Linear:} &\quad &\gamma(t) := t\\
        &\mathrm{Convex:}  &\quad &\gamma(t) := 1 - \cos\left(\frac{\pi t}{2}\right) \\
        &\mathrm{Concave:} &\quad & \gamma(t) := \sin\left(\frac{\pi t}{2}\right)
    \end{aligned}
\end{equation}
We present results on the planar graph dataset in Table~\ref{tab:schedule-ablation}.
\begin{table}[htp]
    \centering
    \small
    \caption{Performance after training stage I with different filtration schedules for 100k steps on expanded planar graph dataset with line Fiedler variant. All models attain perfect uniqueness and novelty scores. Showing median of three runs for linear variant and one run each for convex and concave variants.}
    \begin{tabular}{llllllll}
\toprule
 & VUN ($\uparrow$) & Degree ($\downarrow$) & Clustering ($\downarrow$) & Spectral ($\downarrow$) & Orbit ($\downarrow$) \\
\midrule
Linear & \formatpercent{0.2021484375} & \roundtofour{0.005753176855113784} & \roundtofour{0.176845325553082} & \roundtofour{0.004760004557846642} & \roundtofour{0.012890572283490664}  \\
Convex & \formatpercent{0.056640625} & \bfseries {\roundtofour{0.004305953076664704}} & \roundtofour{0.22391380815101436} & \bfseries {\roundtofour{0.003971669668414002}} & \bfseries {\roundtofour{0.006236163601005318}}  \\
Concave & \bfseries {\formatpercent{0.310546875}} & \roundtofour{0.004511650594824834} & \bfseries {\roundtofour{0.1589730403489681}} & \roundtofour{0.005904321479719421} & \roundtofour{0.015333300003760542}  \\
\bottomrule
\end{tabular}

    \label{tab:schedule-ablation}
\end{table}
We find that no single variant performs consistently best across all evaluation metrics. However, the concave variant attains the highest validity score.

\paragraph{Node Individualization.} In Table~\ref{tab:ablation-individualization}, we study different node individualization techniques for the line Fiedler variant of \method. We refer to the ordering scheme we describe in Appendix~\ref{appendix:node-individualization} as the \emph{derived ordering}, as it is based on the values of the line Fiedler filtration function. Additionally, we study \emph{random orderings} and node orderings according to a \emph{depth first search (DFS)}. Moreover, we compare to node individualizations that do not consist of positional embeddings w.r.t. a node orderings but instead i.i.d. \emph{gaussian noise} that is re-applied in each time-step. Finally, we also consider a variant in which no individualization is applied, i.e., the embedding matrix $W^\mathrm{node}$ is fixed to be all-\emph{zeros}.
\begin{table}[htp]
    \centering
    \small
    \caption{Performance after training stage I of the line Fiedler variant with different node individualization techniques for 100k steps on expanded planar graph dataset. All models attain perfect uniqueness and  novelty scores. Showing median of three runs for derived ordering and one run each for all other variants.}
    \begin{tabular}{llllll}
\toprule
 & VUN ($\uparrow$) & Degree ($\downarrow$) & Clustering ($\downarrow$) & Spectral ($\downarrow$) & Orbit ($\downarrow$)   \\
\midrule
Derived Ordering & \formatpercent{0.2021484375} & \roundtofour{0.005753176855113784} & \bfseries {\roundtofour{0.176845325553082}} & \roundtofour{0.004760004557846642} & \roundtofour{0.012890572283490664} \\
DFS Ordering & \bfseries{\formatpercent{.2890625}} & \bfseries{\roundtofour{0.005522729038354379}} & \roundtofour{0.1802850149587294} & \roundtofour{0.002375462352541602} & \bfseries{\roundtofour{0.005295329680999883}}\\
Random Ordering & \formatpercent{0.18359375} & \roundtofour{0.008545819157682821} & \roundtofour{0.23323163553488505} & \bfseries {\roundtofour{0.0023033454603260672}} & \roundtofour{0.009101400057227371} \\
Gaussian Noise & \formatpercent{0.1298828125} & \roundtofour{0.008574077006591851} & \roundtofour{0.23556789058998784} & \roundtofour{0.0031074903437575685} & \roundtofour{0.011190940174161002}  \\
Zeros & \formatpercent{0.134765625} & \roundtofour{0.0057437297302282975} & \roundtofour{0.21949352355858015} & \roundtofour{0.0023040075872899912} & \roundtofour{0.009091846763255473}  \\
\bottomrule
\end{tabular}

    \label{tab:ablation-individualization}
\end{table}
We find that individualizing nodes with learned embeddings based on some ordering (either random, derived from the line Fiedler filtration function, or a DFS search) appears to be beneficial. On the planar graph dataset, there is no clear benefit of the derived ordering over random orderings. However, we observe a clear advantage on the SBM dataset, as can be seen in Table~\ref{tab:ablation-individualization-sbm}.
\begin{table}[htp]
    \centering
    \small
    \caption{Performance after training stage I of the line Fiedler variant with derived and random node ordering after 100k steps on expanded SBM datasets. Showing median $\pm$ maximum deviation across three runs for derived ordering and one run for random ordering.}
    \begin{tabular}{lll}
\toprule
 & Derived Ordering & Random Ordering \\
\midrule
VUN ($\uparrow$) & \bfseries {\formatpercent{0.26953125}} \normalfont \tiny{$\pm$ \formatpercent{0.0263671875}} & \formatpercent{0.0244140625}  \\
Degree ($\downarrow$) & \bfseries {\roundtofour{0.02224274986279906}} \normalfont \tiny{$\pm$ \roundtofour{0.01266818172451134}} & \roundtofour{0.03955431717092295}  \\
Clustering ($\downarrow$) & \bfseries {\roundtofour{0.010618129867799631}} \normalfont \tiny{$\pm$ \roundtofour{0.001212205398574237}} & \roundtofour{0.012247080788020694}  \\
Spectral ($\downarrow$) & \bfseries {\roundtofour{0.006077873147199542}} \normalfont \tiny{$\pm$ \roundtofour{0.0014015685798993704}} & \roundtofour{0.014429714823977813}  \\
Orbit ($\downarrow$) & \bfseries {\roundtofour{0.05478593514248341}} \normalfont \tiny{$\pm$ \roundtofour{0.024420373909542548}} & \roundtofour{0.05955555512354573}  \\
Unique ($\uparrow$) & \bfseries {\roundtofour{1.0}} \normalfont \tiny{$\pm$ \roundtofour{0.0}} & \roundtofour{0.9951171875} \\
Novel ($\uparrow$) & \bfseries {\roundtofour{1.0}} \normalfont \tiny{$\pm$ \roundtofour{0.0}} & \bfseries {\roundtofour{1.0}}  \\
\bottomrule
\end{tabular}

    \label{tab:ablation-individualization-sbm}
\end{table}
\FloatBarrier

\section{Bias and Variance of Estimators}
\label{appendix:variance-and-bias}
Previous works~\citep{martinkus2022spectre,vignac2023digress,bergmeister2024efficientscalable} evaluate their graph generative models on as few as 40 samples. In this section, we investigate how this practice impacts the variance and bias of the estimators used in model evaluation and argue that a higher number of test samples should be chosen.

\subsection{Variance of Validity Estimation}
On synthetic datasets such as those introduced in~\citep{martinkus2022spectre}, one may verify whether model samples are "valid", i.e., whether they satisfy a property that is fulfilled by (almost) all samples of the true data distribution. By taking the ratio of valid graphs out of $n$ model samples, previous works have estimated the probability of obtaining valid graphs from the generator.

\FloatBarrier
\begin{definition}
    Let the random variable $G$ denote a sample from a graph generative model and let $\operatorname{valid}: \mathcal{G} \to \{0, 1\}$ a measurable binary function that determines whether a sample is valid. Then the models true validity ratio is defined as:
    \begin{equation}
        \mathbb{P}[\operatorname{valid}(G) = 1]
    \end{equation}
    For i.i.d. samples $G_1,\dots,G_n$, we introduce the following estimator:
    \begin{equation}
        V := \frac{\sum_{i=1}^n \operatorname{valid}(G_i)}{n}
    \end{equation}
\end{definition}
\FloatBarrier

Given the simplicity of the validity metric, we can very easily derive the uncertainty of the estimator used for evaluation. We make this concrete in Proposition~\ref{prop:validity-variance}.
\begin{proposition}
    \label{prop:validity-variance}
    For a generative model with a true validity ratio of $p \in [0, 1]$, the validity estimator on $n$ samples is unbiased and has standard deviation $\sqrt{p(1 -p)} / \sqrt{n}$.
\end{proposition}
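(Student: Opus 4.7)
The plan is to recognize that each indicator $\operatorname{valid}(G_i)$ is a Bernoulli random variable and to leverage the i.i.d. assumption to reduce the claim to standard identities about sums of Bernoulli variables.

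First, I would observe that since $\operatorname{valid}(G_i)$ takes values in $\{0, 1\}$ and $\mathbb{P}[\operatorname{valid}(G_i)=1] = p$ by the definition of the true validity ratio, each $\operatorname{valid}(G_i)$ is distributed as $\mathrm{Bernoulli}(p)$. In particular, $\mathbb{E}[\operatorname{valid}(G_i)] = p$ and $\mathrm{Var}[\operatorname{valid}(G_i)] = p(1-p)$.

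Next, I would compute $\mathbb{E}[V]$ using linearity of expectation:
\begin{equation*}
\mathbb{E}[V] = \frac{1}{n}\sum_{i=1}^n \mathbb{E}[\operatorname{valid}(G_i)] = \frac{1}{n} \cdot np = p,
\end{equation*}
establishing unbiasedness. Then I would compute $\mathrm{Var}[V]$; since the $G_i$ are i.i.d., the $\operatorname{valid}(G_i)$ are independent and the variance of the sum equals the sum of the variances, giving
\begin{equation*}
\mathrm{Var}[V] = \frac{1}{n^2}\sum_{i=1}^n \mathrm{Var}[\operatorname{valid}(G_i)] = \frac{p(1-p)}{n}.
\end{equation*}
Taking square roots yields the claimed standard deviation $\sqrt{p(1-p)}/\sqrt{n}$.

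There is no genuine obstacle here; the result is a direct application of the formulas for the mean and variance of a $\mathrm{Binomial}(n,p)/n$ random variable. The only thing worth being careful about is explicitly invoking independence (as opposed to just identical distribution) when splitting the variance of the sum, since this is what rules out covariance terms.
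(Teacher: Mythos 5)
Your proof is correct and follows essentially the same route as the paper's: identify each $\operatorname{valid}(G_i)$ as $\mathrm{Bernoulli}(p)$, apply linearity of expectation for unbiasedness, and use independence to sum the variances before taking the square root. Your explicit remark that independence (not just identical distribution) is what kills the covariance terms is a small clarity improvement over the paper's version.
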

\begin{proof}
    Assuming that the random variables $G_1,\dots,G_n$ are i.i.d. samples from the generative model, then the random variables $\operatorname{valid}(G_1),\dots,\operatorname{valid}(G_n)$ are i.i.d. according to $\operatorname{Bernoulli}(p)$. The validity estimator is given as:
    \begin{equation}
        V = \frac{\sum_{i=1}^n \operatorname{valid}(G_i)}{n}
    \end{equation}
    By the linearity of expectation, we have
    \begin{equation}
        \mathbb{E}[V] = \frac{\sum_{i=1}^n \mathbb{E}[\operatorname{valid}(G_i)]}{n} = \frac{np}{p} = p
    \end{equation}
    which shows that the estimator is unbiased. The variance is given by:
    \begin{equation}
    \begin{aligned}
        \Var[V] &= \frac{\Var\left[\sum_{i=1}^n \operatorname{valid}(G_i)\right]}{n^2} = \frac{\sum_{i=1}^n \Var[\operatorname{valid}(G_i)]}{n^2} \\
        &= \frac{p(1-p)}{n}
    \end{aligned}
    \end{equation}
    where we used the independence assumption in the first line. Taking the square root, we obtain the standard deviation from the proposition.
\end{proof}
From Proposition~\ref{prop:validity-variance}, we note that the standard deviation of the validity estimate can be as high as $1/(2\sqrt{n})$, which is achieved at $p=0.5$. For $n=40$, we find that the standard deviation can therefore be as high as $7.9$ percentage points. %

We illustrate the magnitude of this effect on a practical example: for DiGress, we obtain a validity ratio on the SBM dataset of approximately $p \approx 56.2\%$ (c.f., Table~\ref{tab:large-synthetic-datasets}). While this estimated ratio itself is subject to uncertainty, we assume it to be the ground truth for the sake of the following illustration (justified by the large sample size of 1024 graphs used to estimate it). In Table~\ref{tab:vun-variance}, we analytically (using Proposition~\ref{prop:validity-variance}) compute the mean and standard deviation of the validity estimator at different sample sizes. We note that the mean is constant since the estimator is unbiased. However, the standard deviation remains substantial, even at 128 graph samples. 
\begin{table}[ht]
    \centering
    \caption{Mean validity estimate $\pm$ standard deviation across evaluations of validity estimator. Computed analytically via Proposition~\ref{prop:validity-variance}. The estimated validity is unbiased but has a high standard deviation.}
    \begin{tabular}{l|c}
        \toprule
         \# Model Samples & Validity Estimate\\
         \midrule
        32 & $56.2 \pm 8.8$ \\
        64 & $56.2 \pm 6.2$ \\
        128 & $56.2 \pm 4.4$ \\
        256 & $56.2 \pm 3.1$ \\
        512 & $56.2 \pm 2.2$ \\
        1024 & $56.2 \pm 1.6$ \\
        \bottomrule
    \end{tabular}
    \label{tab:vun-variance}
\end{table}

\subsection{Bias and Variance of MMD Estimation}
\begin{definition}
    Let $(\mathcal{X}, d)$ be a metric space and let $k: \mathcal{X} \times \mathcal{X} \to \R$ be a measurable, symmetric kernel which is bounded but not necessarily positive-definite. Let $X := [x_1, \dots, x_n]$ be i.i.d. samples from a Borel distribution $p_x$ on $\mathcal{X}$ and $Y:=[y_1, \dots, y_n]$ be i.i.d. samples from a distribution $p_y$. Assume $X$ and $Y$ to be independent. 
    Following~\citep{gretton2012mmd}, define the squared MMD of $p_x$ and $p_y$ as:
    \begin{equation}
        \mathrm{MMD}^2(p_x, p_y) := \mathbb{E}[k(x_1, x_2)] + \mathbb{E}[k(y_1, y_2)] - 2\mathbb{E}[k(x_1, y_1)]
    \end{equation}
    and note that this is well-defined by our assumptions.
    Finally, introduce the following estimator for the squared MMD:
    \begin{equation}
        M := \frac{1}{n^2}\sum_{i,j=1}^n k(x_i, x_j) + \frac{1}{m^2}\sum_{i,j=1}^m k(y_i, y_j) - \frac{2}{nm}\sum_{i=1}^n\sum_{j=1}^m k(x_i, y_j)
    \end{equation}
\end{definition}

We empirically study bias and variance of the MMD estimates on the planar graph dataset. We generate 8192 samples from one of our trained models and repeatedly compute the MMD between the test set and a random subset of those samples. We vary the size of the random subsets and run 64 evaluations for each size, computing the mean and standard deviation of the MMD metrics across the 64 evaluations. We report the results in Table~\ref{tab:mmd-variance}.
\begin{table}[ht]
    \centering
    \caption{Mean MMD $\pm$ standard deviation across 64 evaluation runs of a single model. The test set contains 256 planar graphs, while a varying number of model samples is used, as indicated on the left. The MMD and its variance decrease substantially with larger numbers of model samples.}
    \begin{tabular}{l|ccc}
\toprule
\# Model Samples & Degree $(\downarrow)$ & Clustering $(\downarrow)$ & Spectral $(\downarrow)$ \\
\midrule
32 & \num{8.59e-04} $\pm$ \tiny{\num{5.59e-04}} &\num{4.21e-02} $\pm$ \tiny{\num{1.44e-02}} &\num{4.73e-03} $\pm$ \tiny{\num{9.14e-04}} \\
64 & \num{5.58e-04}  $\pm$ \tiny\num{2.90e-04} & \num{2.68e-02}  $\pm$ \tiny\num{7.58e-03} &\num{2.59e-03}  $\pm$ \tiny\num{4.78e-04}\\
128 & \num{4.40e-04}  $\pm$ \tiny\num{1.79e-04} & \num{2.17e-02}  $\pm$ \tiny\num{4.33e-03} &\num{1.61e-03}  $\pm$ \tiny\num{3.16e-04}\\
256 & \num{4.39e-04}  $\pm$ \tiny\num{1.45e-04}& \num{2.02e-02}  $\pm$ \tiny\num{3.89e-03} &\num{1.14e-03}  $\pm$ \tiny\num{1.86e-04}\\
512 & \num{4.32e-04}  $\pm$ \tiny\num{8.48e-05} &\num{1.81e-02}  $\pm$ \tiny\num{2.56e-03} &\num{1.18e-03}  $\pm$ \tiny\num{2.04e-04}\\
1024 & \num{4.26e-04}  $\pm$ \tiny\num{5.99e-05} &\num{1.72e-02}  $\pm$ \tiny\num{1.66e-03} &\num{1.18e-03}  $\pm$ \tiny\num{2.00e-04}\\
\bottomrule
\end{tabular}

    \label{tab:mmd-variance}
\end{table}
We observe that, on average, the MMD is severely over-estimated when using fewer than 256 model samples. At the same time, the variance between evaluation runs is large when few samples are used, making the results unreliable.

\FloatBarrier
\section{Adversarial Finetuning Details} 
\label{appendix:adversarial-finetuning}
We provide pseudocode for the adversarial fine-tuning stage in Algorithm~\ref{alg:adversarial-finetuning}. We note that we do not make all procedures explicit and that many hyper-parameters must be chosen (including the number of steps and epochs in $\textsc{TrainGeneratorAndValueModel}$). 

\paragraph{Generator.} The generator operates in inference mode, meaning that all dropout layers are disabled and batch normalization modules utilize the (now frozen) moving averages from training stage I. Hence, the behavior of the generative model becomes reproducible. It acts as a stochastic policy in a higher-order MDP, where the graphs $G_0, \dots, G_T$ are the states. It receives a terminal reward for the plausibility of the final sample $G_T$.   

\paragraph{Discriminator.} The discriminator is implemented as a GraphGPS~\citep{rampasek2022graphgps} model which performs binary classification on graph samples $G_T$, distinguishing real samples from generated samples. It is trained via binary cross-entropy on batches consisting in equal proportions of generated graphs and graphs from the dataset $\mathcal{D}$. For a given graph $G_T$, the discriminator produces a probability of "realness" by applying the sigmoid function to its logit. Following SeqGAN~\citep{yu2017seqgan}, the log-sigmoid of the logit then acts as a terminal reward for the generative model. We emphasize that only the final graph $G_T$ is presented to the discriminator. 

\paragraph{Value Model.} The value model uses the same backbone architecture as our generative model and regresses scalars from pooled node representations. It is trained via least squares regression. The value model is used to compute baselined reward-to-go values.

\paragraph{Training Outline.} While Algorithm~\ref{alg:adversarial-finetuning} provides a technical description of the training algorithm, we also provide a rougher outline here. At the start of training stage II, the generator is initialized with the weights learned in training stage I, while the discriminator and value model are initialized randomly. Before entering the main training loop, we pre-train the discriminator and value model to match the generator. Namely, we first pre-train the discriminator to classify graphs as either "real" or "generated". The log-likelihood of "realness" acts as a terminal reward of the generative model. The discriminator is then pre-trained to regress the reward-to-go. After pre-training is finished, we proceed to the training loop, which consists of alternating training of (i) the generator and value model and (ii) the discriminator. As described above, the generator is trained via PPO to maximize the terminal reward provided by the discriminator. The value model is used to baseline the reward and is continuously trained to regress the reward-to-go. The discriminator, on the other hand, continues to be trained on generated and real graph samples via binary cross-entropy.

\begin{algorithm}[hp]
\caption{Adversarial Finetuning}\label{alg:adversarial-finetuning}
\begin{algorithmic}

\Procedure{TrainGeneratorAndValueModel}{$p_\theta$, $d_\varphi$, $v_\vartheta$}

\For{$i = 1\dots,N_\mathrm{steps}$}
    \State $\mathcal{S} \gets [\:]$                    \Comment{List of sampled filtrations}
    \State $r \gets 0 \: \in \: \R^{N_\mathrm{samples}}$         \Comment{Terminal rewards}
    \For{$j=1\dots,N_\mathrm{samples}$}
        \State $G_0^{(j)}, \dots, G_T^{(j)} \gets \Call{SampleFiltration}{p_{\theta}}$
        \State $\mathcal{S}\operatorname{.append}\left(\left(G_0^{(j)}, \dots, G_T^{(j)}\right)\right)$
        \State $r_j \gets \operatorname{logsigmoid}(d_\varphi(G_T^{(j)}))$
        \State $r_j \gets \max(r_j, R_\mathrm{lower})$          \Comment{Reward clamping}
    \EndFor
    \State $r \gets \Call{Whiten}{r}$           \Comment{Whiten rewards using EMA of mean and std}
    \State $g_{j, t} \gets 0 \qquad \forall j=1, \dots, N_\mathrm{samples} \: \forall t=0, \dots, T-1$  \Comment{Rewards-to-go}
    \For{$j=1\dots,N_\mathrm{samples}$}
        \For{$t=0, \dots, T-1$}
            \State $g_{j, t} \gets r_j - v_\vartheta(G_0^{(j)}, \dots, G_{t}^{(j)})$      \Comment{Compute baselined RTG}
        \EndFor
    \EndFor
    \State $\Call{TrainValueModel}{v_\vartheta, \mathcal{S}, r}$
    \For{$k=1\dots,N_\mathrm{epoch}$}
        \State $l_{j, t}^{(k)} \gets -\log p_\theta (G_t^{(j)} | G_{t-1}^{(j)}, \dots, G_0^{(j)}) \qquad \forall j=1, \dots,N_\mathrm{samples} \quad \forall t=1, \dots, T$
        \State $u_{j, t} \gets \exp(\operatorname{sg}[l_{j, t}^{(1)}] - l_{j, t}^{(k)}) \qquad \forall j, t$
        \State $\mathcal{L}_{j, t}^{(1)} \gets - u_{j, t} \cdot g_{j, t-1} \qquad \forall j, t$
        \State $\mathcal{L}_{j, t}^{(2)} \gets - \operatorname{clamp}(u_{j, t}, 1 - \epsilon, 1 + \epsilon) \cdot g_{j, t-1} \qquad \forall j, t$
        \State $\mathcal{L} \gets \sum_{j, t} \max(\mathcal{L}_{j, t}^{(1)}, \mathcal{L}_{j, t}^{(2)})$
        \State $\theta \gets \theta - \delta \nabla_\theta \mathcal{L}$            \Comment{Backpropagate and update parameters}
    \EndFor
\EndFor
\EndProcedure

\bigskip

\Procedure{GANTuning}{$p_\theta$, $\mathcal{D}$}  \Comment{Takes generator from training stage I and graph dataset}
\State $d_\varphi \gets $ new GNN         \Comment{Initialize discriminator}
\State $\Call{TrainDiscriminator}{p_\theta, d_\varphi, \mathcal{D}}$           \Comment{Pre-train discriminator}
\State $v_\vartheta \gets$ new mixer model        
\State $\mathcal{S} \gets \Call{GenerateFiltrations}{p_\theta}$
\State $r \gets \Call{GradeSamples}{\mathcal{S}, d_\varphi}$
\State $\Call{TrainValueModel}{v_\vartheta, \mathcal{S}, r}$            \Comment{Pre-train value model}
\While{not converged}
    \State $\Call{TrainGeneratorAndValueModel}{p_\theta, d_\varphi, v_\vartheta}$
    \State $\Call{TrainDiscriminator}{p_\theta, d_\varphi, \mathcal{D}}$
\EndWhile
\EndProcedure
\end{algorithmic}
\end{algorithm}

\FloatBarrier
\section{Training Costs}
\label{appendix:training-costs}
In \cref{tab:training-costs}, we compare the computational costs of training the presented ANFM models (line Fiedler variant) to the costs of training the presented DiGress models. We note that the preparation of filtrations for the first training stage is CPU-bound and highly parallelizable. In comparison to the cost of training, it is negligible. 
\begin{table}[ht]
\centering
\caption{Training costs of ANFM and DiGress across datasets and training stages.}
\begin{tabular}{lcccc}
\toprule
\multirow{2}{*}{\textbf{Dataset}} & 
\multicolumn{3}{c}{\textbf{ANFM}} & 
\multicolumn{1}{c}{\textbf{DiGress}} \\
\cmidrule(lr){2-4} \cmidrule(lr){5-5}
 & {Stage 1} & {Stage 2} & {Total H100 Hours} & {Total H100 Hours} \\
\midrule
Expanded Planar  & 2 H100s $\times$ 20h & 1 H100 $\times$ 20h & 60h & 95h \\
Expanded SBM     & 2 H100s $\times$ 20h & 1 H100 $\times$ 81h & 121h & 90h \\
Expanded Lobster & 2 H100s $\times$ 12h & 1 H100 $\times$ 58h & 82h & 48h \\
\bottomrule
\end{tabular}
\label{tab:training-costs}
\end{table}

We find that training of DiGress is often cheaper than training of ANFM. Hence, ANFM trades efficiency during inference for increased computational costs during training.

\section{MMD vs Filtration Granularity}
\label{appendix:mmd-vs-granularity}
In this section, we supplement the results from Figure~\ref{fig:filtration-granularity}, illustrating how the MMD metrics evolve as the number of generation steps is varied in \method and DiGress.
Figure~\ref{fig:mmd-vs-granularity} shows that, in terms of MMD, ANFM generates higher-quality graphs than DiGress at the considered number of generation steps. This is consistent with our findings from Figure~\ref{fig:filtration-granularity}. 
We recall that the default DiGress variant uses 1000 generation steps. 

\begin{figure}[ht]
    \centering

    \begin{subfigure}[b]{0.3\textwidth}
        \centering
        \includegraphics[width=\textwidth]{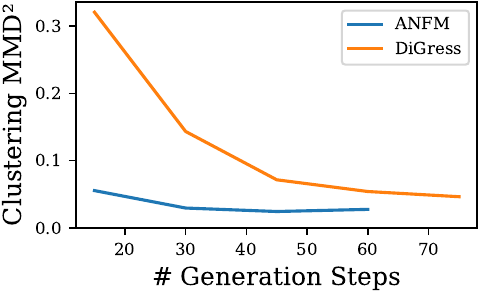}
        \caption{Clustering MMD}
        \label{fig:clustering_stats}
    \end{subfigure}
    \hspace{2cm}
    \begin{subfigure}[b]{0.3\textwidth}
        \centering
        \includegraphics[width=\textwidth]{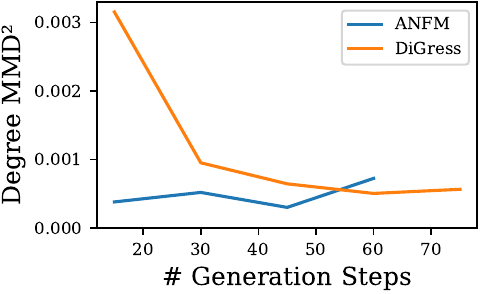}
        \caption{Degree MMD}
        \label{fig:degree_stats}
    \end{subfigure}\\
    \vspace{5mm}
    \begin{subfigure}[b]{0.3\textwidth}
        \centering
        \includegraphics[width=\textwidth]{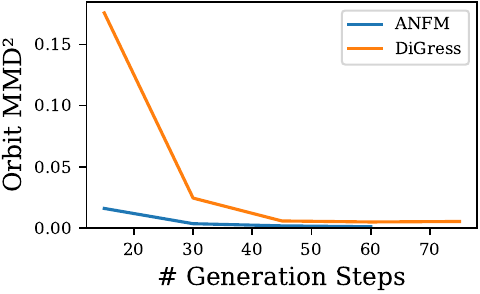}
        \caption{Orbit MMD}
        \label{fig:orbit_stats}
    \end{subfigure}
    \hspace{2cm}
    \begin{subfigure}[b]{0.3\textwidth}
        \centering
        \includegraphics[width=\textwidth]{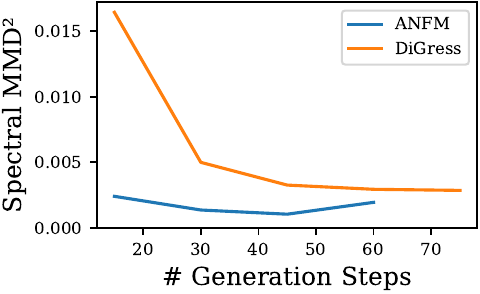}
        \caption{Spectral MMD}
        \label{fig:spectral_stats}
    \end{subfigure}

    \caption{Performance of ANFM and DiGress in terms of MMD on the expanded planar dataset as the number of generation steps
varies. The original (default) DiGress variant uses 1000 generation steps.}
    \label{fig:mmd-vs-granularity}
\end{figure}

\section{Detailed Definition of the Line Fiedler Filtration Function}
\label{appendix:line-fiedler-definition}
In this section, we provide a more thorough definition of the line Fiedler filtration function we have introduced in Sec.~\ref{subsec:filtration-strategies}. Let $G = (V, E)$ be an undirected connected graph on the nodes $V$ with edges $E \subset V \times V$. We define its line graph as $L(G) := (E, \pmb{E}')$ where $\pmb{E}'$ denotes the pairs of edges that share a vertex in $G$:
\begin{equation}
    \pmb{E}' := \{(e, e') \::\: e, e' \in E,\, |e \cap e'| = 1\}
\end{equation}
It is easy to verify that $L(G)$ is connected. Let $A \in \mathbb{R}^{E \times E}$ be its adjacency matrix and define the associated symmetrically normalized Laplacian matrix as:
\begin{equation}
    L := I - D^{-\frac{1}{2}}AD^{-\frac{1}{2}},
\end{equation}
where $D \in \R^{E \times E}$ is the diagonal matrix with $D_{i, i} = \sum_{j \in E} A_{i, j}$.
We may now find the eigenvectors associated to the second smallest eigenvalue. These are the Fiedler vectors of $L(G)$. One such vector $v \in \R^E$ with unit length provides the line Fiedler filtration function. 
We note that, similar to the DFS filtration function, the line Fiedler filtration function is not unique for a given graph. Indeed, the Fiedler vector of the line graph is at most unique up to a sign.
\end{document}